\newcommand{\sign}{sign}
\newtheorem{theorem}{Theorem}
\newtheorem{lemma}[theorem]{Lemma}
\newtheorem{corollary}[theorem]{Corollary}
\newtheorem*{problem*}{Problem}
\newtheorem{claim}[theorem]{Claim}
\newtheorem{example}{Example}
\newtheorem{definition}{Definition}
\newtheorem{fact}[theorem]{Fact}
\title{On Sample Complexity Upper and Lower Bounds for Exact Ranking from Noisy Comparisons}
\author{%
	Wenbo Ren\\
	Dept. Computer Science \& Engineering\\
	The Ohio State University\\
	\texttt{ren.453@osu.edu} \\
	\And 
	Jia Liu \\
	Dept. Electrical \& Computer Engineering\\
	The Ohio State University\\
	\texttt{liu.1736@osu.edu} \\
	\AND
	Ness B. Shroff \\
	Dept. Computer Science \& Engineering and Electrical \& Computer Engineering \\
	The Ohio State University\\
	\texttt{shroff.11@osu.edu} \\
}
\begin{document}

\maketitle

\begin{abstract}%
	This paper studies the problem of finding the exact ranking from noisy comparisons. A comparison over a set of $m$ items produces a noisy outcome about the most preferred item, and reveals some information about the ranking. By repeatedly and adaptively choosing items to compare, we want to fully rank the items with a certain confidence, and use as few comparisons as possible. Different from most previous works, in this paper, we have three main novelties: (i) compared to prior works, our upper bounds (algorithms) and lower bounds on the sample complexity (aka number of comparisons) require the \textit{minimal} assumptions on the instances, and are not restricted to specific models; (ii) we give lower bounds and upper bounds on instances with \textit{unequal} noise levels; and (iii) this paper aims at the \textit{exact} ranking without knowledge on the instances, while most of the previous works either focus on approximate rankings or study exact ranking but require prior knowledge. We first derive lower bounds for pairwise ranking (i.e., compare two items each time), and then propose (nearly) \textit{optimal} pairwise ranking algorithms. We further make extensions to listwise ranking (i.e., comparing multiple items each time). Numerical results also show our improvements against the state of the art.
\end{abstract}

\section{Introduction}\label{Sec:Intro}

\subsection{Background and motivation}
Ranking from noisy comparisons has been a canonical problem in the machine learning community, and has found applications in various areas such as social choices \citep{SocialChoice2005Conitzer}, web search \citep{WebSearch2001}, crowd sourcing \citep{CrowdSourcing2013}, and recommendation systems \citep{RecommendationSystem2010}. The main goal of ranking problems is to recover the full or partial rankings of a set of items from noisy comparisons. The items can refer to various things, such as products, movies, pages, and advertisements, and the comparisons refer to tests or queries about the items' strengths or the users' preferences. In this paper, we use words ``item'', ``comparison'' and ``preference'' for simplicity. A comparison involves two (i.e., pairwise) or multiple (i.e., listwise) items, and returns a noisy result about the most preferred one, where ``noisy'' means that the comparison outcome is random and the returned item may not be the most preferred one. 
A noisy comparison reveals some information about the ranking of the items. This information can be used to describe users' preferences, which helps applications such as recommendations, decision making, and advertising, etc. One example is e-commerce: A user's click or purchase of a product (but not others) is based on a noisy (due to the lack of full information) comparison between several similar products, and one can rank the products based on the noisy outcomes of the clicks or the purchases to give better recommendations. Due to the wide applications, in this paper, we do not focus on specific applications and regard comparisons as black-box procedures.

This paper studies the \textit{active} (or adaptive) ranking, where the learner adaptively chooses items to compare based on previous comparison results, and returns a ranking when having enough confidence. Previous works \citep{CrowdSourcing2013,AdaptivePooling2012} have shown that, compared to non-adaptive ranking, active ranking can significantly reduce the number of comparisons needed and achieve a similar confidence or accuracy. In some applications such as news apps, the servers are able to adaptively choose news to present to the users and collect feedbacks, by which they can learn the users' preferences in shorter time compared to non-adaptive methods and may provide better user experience.

We focus on the active {\em full} ranking problem, that is, to find the \textit{exact} full ranking with a certain confidence level by adaptively choosing the items to compare, and try to use as few comparisons as possible. The comparisons can be either pairwise (i.e., comparing two items each time) or listwise (i.e., comparing more than two items each time). We are interested in the upper and lower bounds on the sample complexity (aka number of comparisons needed). We are also interested in understanding whether using listwise comparisons can reduce the sample complexity.

\subsection{Models and problem statement}
There are $n$ items in total, indexed by $1,2,3,...,n$. Given a comparison over a set $S$, each item $i\in S$ has $p_{i,S}$ probability to be returned as the most preferred one (also referred to as $i$ ``wins'' this comparison), and when a tie happens, we randomly assign one item as the winner, which makes $\sum_{i\in S}p_{i,S}=1$ for all set $S\subset [n]$. When $|S|=2$, we say this comparison is pairwise, and when $|S|>2$, we say listwise. In this paper, a comparison is said to be $m$-wise if it involves exactly $m$ items (i.e., $|S|=m$). For $m=2$ and a two-sized set $S = \{i,j\}$, to simplify notation, we define $p_{i,j}:=p_{i,S}$ and $p_{j,i}:=p_{j,S}$.

\textit{Assumptions.} In this paper, we make the following assumptions: \textbf{A1)} Comparisons are independent across items, sets, and time. We note that the assumption of independence is common in the this area (e.g., \cite{MaxingAndRanking2017,RankingLimits2018,Falahatgar2017,ActiveRanking2019,ApproximateRanking2018,CoarseRanking2018,Subsetwise2019,ListwisePLMaxing2019,Simple2017,OnlineRankingElicitation2015}).
\textbf{A2)} There is a unique permutation $(r_1,r_2,...,r_n)$ of $[n]$ \footnote{For any positive integer $k$, define $[k] := \{1,2,...,k\}$ to simplify notation} such that $r_1\succ\!r_2\succ\!\cdots \succ\!r_n$, where $i\succ\!j$ denotes that $i$ ranks higher than $j$ (i.e., $i$ is more preferred than $j$). We refer to this unique permutation as the \textit{true ranking} or \textit{exact ranking}, and our goal is to recover the true ranking.
\textbf{A3)} For any set $S$ and item $i\in S$, if $i$ ranks higher than all other items $k$ of $S$, then $p_{i,S}>p_{k,S}$. For pairwise comparisons, A3 states that $i\succ j$ if and only if $p_{i,j} > 1/2$. We note that for pairwise comparisons, A3 can be viewed as the weak stochastic transitivity \citep{TransitivityModel2016}. The three assumptions are necessary to make the exact ranking (i.e., finding the unique true ranking) problem meaningful, and thus, we say our assumptions are minimal. Except for the above three assumptions, we do \textit{not} assume any prior knowledge of the $p_{i,S}$ values. We note that any comparison model can be fully described by the comparison probabilities $(p_{i,S}:i\in S, S\subset [n])$.

We further define some notations. Two items $i$ and $j$ are said to be \textit{adjacent} if in the true ranking, there does \textit{not} exist an item $k$ such that $i\succ\!k\succ\!j$ or $j\succ\!k\succ\!i$. For all items $i$ and $j$ in $[n]$, define $\Delta_{i,j}:=|p_{i,j}-1/2|$, $\Delta_i:=\min_{j\neq i}\Delta_{i,j}$, and $\tilde{\Delta}_i := \min\{\Delta_{i,j}:i\mbox{ and } j\mbox{ are adjacent}\}$. For any $a,b\in\mathbb{R}$, define $a\land b := \min\{a,b\}$ and $a\lor b := \max\{a,b\}$. 

We adopt the notion of strong stochastic transitivity (SST) \cite{RankingLimits2018}: for all items $i$, $j$, and $k$ satisfying $i\succ\!j\succ\!k$, it holds that $p_{i,k}\geq \max\{p_{i,j},p_{j,k}\}$.
Under the SST condition, we have $\Delta_i = \tilde{\Delta}_i$ for all items $i$. We note that this paper is not restricted to the SST condition. Pairwise (listwise) ranking refers to ranking from pairwise (listwise) comparisons.

In this paper, $f \preceq g$ means $f = O(g)$, $f \succeq g$ means $f = \Omega(g)$, and $f \simeq g$ means $f = \Theta(g)$. The meanings of $O(\cdot)$, $\Omega(\cdot)$, and $\Theta(\cdot)$ are standard in the sense of Bachmann-Landau notation. In this paper, we define another notation $\tilde{\Omega}(\cdot)$ which is similar to $\Omega(\cdot)$ but has weaker requirements. We state the definition of $\tilde{\Omega}(\cdot)$ in Definition~\ref{Def:TildeOmega}. This definition is inspired by \cite{ChenBestArmIdentification2015}, and when $k = 1$, it is of the same form as the formula in \cite[Theorem~D.1]{ChenBestArmIdentification2015}. We use $f \tilde{\succeq} g$ to denote $f = \tilde{\Omega}(g)$ for simplicity. 

\begin{definition}[Definition of $\tilde{\Omega}(\cdot)$]\label{Def:TildeOmega}
    Let $k$ be a positive integer and define $E_i := [e^i, e^{i+1})$ for any positive integer $i$. Two function $f(\mathbf{x})$ and $g(\mathbf{x})$ ($\mathbf{x}\in\mathbb{R}^k$) are said to satisfy $f(\mathbf{x}) = \tilde{\Omega}(g(\mathbf{x}))$ if there is a constant $c_0$ such that for any constant $\gamma > 0$ we have
    \begin{align}
        \limsup_{N\rightarrow\infty} \frac{\sum_{(i_1, i_2,..., i_k)\in [N]^k}\mathds{1}\{\exists \mathbf{x} \in E_{i_1}\times E_{i_2}\times \cdots \times E_{i_k}: f(\mathbf{x}) < c_0 g(\mathbf{x}) \}}{N^{k-1 + \gamma}} = 0.
    \end{align}
\end{definition}

In other words, the notation $\tilde{\Omega}(\cdot)$ means that except a negligible proportion of the intervals (or cells), we have $f(\mathbf{x}) \geq c_0g(\mathbf{x})$, and in this case we can say that $f(\mathbf{x})$ is ``almost'' $\Omega(g(\mathbf{x}))$. All the above asymptotic notations are with respect to $n$, $\delta^{-1}$, $\epsilon^{-1}$, $\Delta^{-1}$, $\eta^{-1}$, $(\Delta^{-1}_{i},i\in [n])$, and $(\tilde{\Delta}^{-1}_{i},i\in [n])$. 


\begin{problem*}[Exact ranking]\label{Def:ExactRanking}
	Given $\delta\in(0,1/2)$ and $n$ items, one wants to determine the true ranking with probability at least $1-\delta$ by adaptively choosing sets of items to compare.
\end{problem*}

\begin{definition}[$\delta$-correct algorithms]\label{Def:AlgCorrectness}
	An algorithm is said to be $\delta$-correct for a problem if for any input instance of this problem, it, with probability at least $1-\delta$, returns a correct result in finite time.
\end{definition}

\subsection{Main results} 
First, for $\delta$-correct pairwise ranking algorithms with no prior knowledge of the instances, we derive a  lower bound of the form $\Omega(\sum_{i\in[n]}\Delta_i^{-2}(\log\log\Delta_i^{-1}+\log(n/\delta)))$ \footnote{All $\log$ in this paper, unless explicitly noted, are natural $\log$.}, which is shown to be \textit{tight} (up to constant factors) under SST and some mild conditions. 

Second, for pairwise and listwise ranking under the multinomial logit (MNL) model, we derive a model-specific lower bound, which is \textit{tight} (up to constant factors) under some mild conditions, and shows that in the worst case, the listwise lower bound is no lower than the pairwise one.

Third, we propose a pairwise ranking algorithm that requires no prior information and minimal assumptions on the instances, and its sample-complexity upper bound {\em matches} the lower bounds proved in this paper under the SST condition and some mild conditions, implying that both upper and lower bounds are optimal.

\section{Related works}\label{Sec:RW}
Dating back to 1994, the authors of \cite{NoisyComputing1994} studied the noisy ranking under the strict constraint that $p_{i,j} \geq 1/2 +\Delta$ for any $i\succ\!j$, where $\Delta >0$ is priorly \textit{known}. They showed that any $\delta$-correct algorithm needs $\Theta({n}{\Delta^{-2}}\log(n/\delta))$ comparisons for the worst instances. However, in some cases, it is impossible to either assume the knowledge of $\Delta$ or require $p_{i,j}\geq 1/2+\Delta$ for any $i\succ\! j$. Also, their bounds only depend on the minimal gap $\Delta$ but not $\Delta_{i,j}$'s or $\Delta_i$'s, and hence is not tight in most cases. 
In contrast, our algorithms require \textit{no knowledge} on the gaps (i.e., $\Delta_{i,j}$'s), and we establish sample-complexity lower bounds and upper bounds that base on \textit{unequal} gaps, which can be much tighter when $\Delta_i$'s vary a lot.

Another line of research is to explore the {\em probably approximately correct} (PAC) ranking (which aims at finding a permutation $(r_1,r_2,...,r_n)$ of $[n]$ such that $p_{r_i,r_j} \geq 1/2-\epsilon$ for all $i < j$, where $\epsilon>0$ is a given error tolerance) under various pairwise comparison models \citep{MaxingAndRanking2017,RankingLimits2018,Falahatgar2017,RankingBounds2018,Subsetwise2019,ListwisePLMaxing2019,OnlineRankingElicitation2015}. 
When $\epsilon >0$, the PAC ranking may not be unique.  The authors of \cite{MaxingAndRanking2017,RankingLimits2018,Falahatgar2017} proposed algorithms with $O({n}{\epsilon^{-2}}\log(n/\delta))$ upper bound for PAC ranking with tolerance $\epsilon>0$ under SST and the stochastic triangle inequality\footnote{Stochastic triangle inequality means that for all items $i,j,k$ with $i\succ\!j\succ\!k$, $\Delta_{i,k} \leq \Delta_{i,j}+\Delta_{j,k}$.} (STI).
When $\epsilon$ goes to zero, the PAC ranking reduces to the true ranking. However, when $\epsilon>0$, we still need some prior knowledge on $(p_{i,j}:i,j\in [n])$ to get the true ranking, as we need to know a lower bound of the values of $\Delta_{i,j}$ to ensure that the PAC ranking equals to the unique true ranking. When $\epsilon = 0$, the algorithms in \cite{MaxingAndRanking2017,RankingLimits2018,Falahatgar2017} do not work. Prior to these works, the authors of \cite{OnlineRankingElicitation2015} also studied the PAC ranking. In their work, with $\epsilon = 0$, the unique true ranking can be found by $O(n\log{n}\cdot\max_{i\in[n]}\{\frac{1}{\Delta_i^2}\log(\frac{n}{\delta\Delta_i}\})$ comparisons, which is higher than the lower bound and upper bound proved in this paper by at least a log factor.

In contrast, this paper is focused on recovering the {\em unique} true (exact) ranking, and there are three major motivations. First, in some applications, we prefer to find the exact order, especially in ``winner-takes-all'' situations. For example, when predicting the winner of an election, we prefer to get the exact result but not the PAC one, as only a few votes can completely change the result.
Second, analyzing the exact ranking can help us better understand the instance-wise upper and lower bounds about the ranking problems, while the bounds of PAC ranking (e.g., in \cite{MaxingAndRanking2017,RankingLimits2018,Falahatgar2017}) may only work for the worst cases. Third, exact ranking algorithms may better exploit the large gaps (e.g., $\Delta_i$'s) to achieve lower sample complexities. In fact, when finding the PAC ranking, we can perform the exact ranking algorithm and the PAC ranking algorithm parallelly, and return a ranking whenever one of them returns. By this, when $\epsilon$ is large, we can benefit from the PAC upper bounds that depend on $\epsilon^{-2}$, and when $\epsilon$ is small, we can benefit from the exact ranking bounds that depend on $\Delta_i^{-2}$.

There are also other interesting active ranking works. The authors of \cite{ActiveRanking2019,ApproximateRanking2018,CoarseRanking2018,Simple2017} studied active ranking under the Borda-Score model, where the Borda-Score of item $i$ is defined as $\frac{1}{n-1}\sum_{j\neq i}p_{i,j}$. We note that the Borda-Score model does not satisfy A2 and A3 and is not comparable with the model in this paper. There are also many works on best item(s) selection, including \cite{LimitedRounds2017,MNLlistwise2018,SpectralMLE2015,ListwisePL2017,Mohajer2016active,RankCentrarity2016,ListwisePLMaxing2019}, which are less related to this paper.

\section{Lower bound analysis}\label{Sec:LB}

\subsection{Generic lower bound for $\delta$-correct algorithms}\label{Sec:LBdA}

In this subsection, we establish a sample-complexity lower bound for pairwise ranking. The lower bound is for $\delta$-correct algorithms, which have performance guarantee for all input instances. There are algorithms that work faster than our lower bound but only return correct results with $1-\delta$ confidence for a restricted class of instances, which is discussed in Section~\ref{Sec:FRDiscussions}. Theorem~\ref{Theorem:LB} states the lower bound, and its full proof is provided in Section~\ref{Sec:FRProofs}. Here we remind that $\tilde{\Delta}_{i}:= \min\{\Delta_{i,j}:i\mbox{ and } j \mbox{ are adjaent}\}$.

\begin{restatable}[Lower bound for pairwise ranking]{theorem}{RestateLB}
	\label{Theorem:LB}
	\footnote{In the previous version, the $\Omega(\cdot)$ notation we were using is not accurate, and we now use a more accurate notation $\tilde{\Omega}(\cdot)$. We thank Björn Haddenhorst at Paderborn University, Germany for bringing this issue to our attention.}Given $\delta\in(0,1/12)$ and an instance $\mathcal{I}$ with $n$ items, then the number of comparisons used by a $\delta$-correct algorithm $\mathcal{A}$ with no prior knowledge about the gaps of $\mathcal{I}$ is lower bounded by
	\begin{align}\label{Eq:GLB}
	    \tilde{\Omega}\Big(\sum_{i\in[n]}\frac{1}{\tilde{\Delta}_i^{2}}\log\log\frac{1}{\tilde{\Delta}_i}\Big) + \Omega\Big(\sum_{i\in[n]}\frac{1}{\tilde{\Delta}_i^{2}}\log\frac{1}{\delta} + \min\Big\{\sum_{i\in[n]}\frac{1}{\tilde{\Delta}_i^{2}}\log\frac{1}{x_i}: \sum_{i\in[n]}x_i \leq 1\Big\}\Big).
	\end{align}
	If $\delta \preceq 1/poly(n)$\footnote{$poly(n)$ means a polynomial function of $n$, and $\delta \preceq 1/poly(n)$ means $\delta \preceq n^{-p}$ for some constant $p>0$.}, or $\max_{i,j\in[n]}\{\tilde{\Delta}_i/\tilde{\Delta}_j\} \preceq n^{1/2-p}$ for some constant $p>0$, then the lower bound becomes
	\begin{align}\label{Eq:LB}
	\tilde{\Omega}\big(\sum_{i\in[n]}\tilde{\Delta}_i^{-2}\log\log\tilde{\Delta}_i^{-1}\big) + \Omega \big(\sum_{i\in[n]}\tilde{\Delta}_i^{-2}\log(n/\delta)\big).
	\end{align}
\end{restatable}

\textbf{Remark:} (i) When the instance satisfies the SST condition (the algorithm does not need to know this information), the bound in Eq.~(\ref{Eq:LB}) is tight (up to a constant factor) under the given condition, which will be shown in Theorem~\ref{Theorem:TP-IIR} later. (ii) The lower bound in Eq.~(\ref{Eq:GLB}) implies an $n\log{n}$ term in $\min\{\cdot\}$, which can be checked by the convexity of $\log(1/x_i)$ and Jensen's inequality, which yields $\sum_{i\in[n]}\log(1/x_i)\geq n\log(n/\sum_{i\in[n]}x_i) \geq n\log{n}$. (iii) The lower bound in (\ref{Eq:LB}) may not hold if the required conditions do not hold, which will be discussed in Section~\ref{Sec:FRDiscussions}.

\begin{proof}[Proof sketch of Theorem~\ref{Theorem:LB}.]
	We outline the basic idea of the proof here and refer readers to Section~\ref{Sec:FRProofs} for details.
	Our first step is to use the results in \cite{RatioTest1964,LIL2014,FKLowerBound2004} to establish a lower bound for ranking two items. 
	Then, it seems straightforward that the lower bound for ranking $n$ items can be obtained by summing up the lower bounds for ranking $\{q_1,q_2\}$, $\{q_2,q_3\}$,...,$\{q_{n-1},q_n\}$, where $q_1\succ\!q_2\succ\!\cdots\succ\!q_n$ is the true ranking. 
	However, Note that to rank $q_i$ and $q_j$, there may be an algorithm that compares $q_i$ and $q_j$ with other items like $q_k$, and uses the comparison outcomes over $\{q_i,q_k\}$ and $\{q_j,q_k\}$ to determine the order of $q_i$ and $q_j$.
	Since it is unclear to what degree comparing $q_i$ and $q_j$ with other items can help to rank $q_i$ and $q_j$, the lower bound for ranking $n$ items cannot be simply obtained by summing up the lower bounds for ranking 2 items.
	To overcome this challenge, our strategy is to construct two problems: $\mathcal{P}_1$ and $\mathcal{P}_2$ with decreasing influence of this type of comparisons. Then, we prove that $\mathcal{P}_1$ reduces to exact ranking and $\mathcal{P}_2$ reduces to $\mathcal{P}_1$. Third, we prove a lower bound on $\delta$-correct algorithms for solving $\mathcal{P}_2$, which yields a lower bound for exact ranking. Finally, we use this lower bound to get the desired lower bounds in Eq.~(\ref{Eq:GLB}) and Eq.~(\ref{Eq:LB}).
\end{proof}

\subsection{Model-specific lower bound}
In Section~\ref{Sec:LBdA}, we provide a lower bound for $\delta$-correct algorithms that do not require any knowledge of the instances except assumptions A1 to A3. However, in some applications, people may focus on a specific model, and hence, the algorithm may have further knowledge about the instances, such as the model's restrictions.
Hence, the lower bound in Theorem~\ref{Theorem:LB} may not be applicable any more\footnote{For example, under a model with $\Delta_{i,j}=\Delta$ for any $i\neq j$ where $\Delta>0$ is unknown, one may first estimate a lower bound of $\Delta$, and then perform algorithms in \cite{NoisyComputing1994}, yielding a sample complexity lower than Theorem~\ref{Theorem:LB}.}.

In this paper, we derive a model-specific lower bound for the MNL model. The MNL model can be applied to both pairwise and listwise comparisons. For pairwise comparisons, the MNL model is mathematically equivalent to the Bradley-Terry-Luce (BTL) model \citep{Luce2012} and the Plackett-Luce (PL) model \citep{OnlineRankingElicitation2015}. There have been many prior works that focus on active ranking based on this model (e.g., \cite{MNLlistwise2018,BothOptimal2017,SpectralMLE2015,ActiveRanking2019,ListwisePL2017,RankCentrarity2016,Subsetwise2019,OnlineRankingElicitation2015}). 

Under the MNL model, each item holds a real number representing the users' preference over this item, where the larger the number, the more preferred the item. Specifically, each item $i$ holds a parameter $\gamma_i\in \mathbb{R}$ such that for any set $S$ containing $i$, $p_{i,S}=\exp(\gamma_i)/\sum_{j\in S}\exp(\gamma_j)$. To simplify notation, we let $\theta_i=\exp(\gamma_i)$, hence, $p_{i,S}=\theta_i/\sum_{j\in S}\theta_j$. We name $\theta_i$ as the \textit{preference score} of item $i$. We define $\Delta_{i,j}:= |p_{i,j}-1/2|$, $\Delta_i := \min_{j\neq i}\Delta_{i,j}$, and we have $\tilde{\Delta}_i = \Delta_i$, i.e., the MNL model satisfies the SST condition.

\begin{restatable}{theorem}{RestateLBMNL}[MNL Lower Bound]\label{Theorem:LBMNL}
	Let $\delta\in(0,1/12)$ and given a $\delta$-correct algorithm $\mathcal{A}$ with the knowledge that the input instances satisfy the MNL model, let $N_\mathcal{A}$ be the number of comparisons conducted by $\mathcal{A}$, then $\mathbb{E}[N_\mathcal{A}]$ is lower bounded by Eq. (\ref{Eq:GLB}) with a (possibly) different hidden constant factor. When $\delta \preceq 1/poly(n)$ or $\max_{i,j\in[n]}\{\Delta_i/\Delta_j\} \preceq n^{1/2-p}$ for some constant $p>0$, the sample complexity is lower bounded by Eq. (\ref{Eq:LB}) with a (possibly) different hidden constant factor.
\end{restatable}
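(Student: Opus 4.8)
The plan is to mirror the change-of-measure argument behind Theorem~\ref{Theorem:LB}, but to carry it out entirely within the MNL family so that it applies even to algorithms that may exploit the MNL structure. The first observation is that any profile of adjacent gaps $\{\Delta_{i,i+1}\}_{i=1}^{n-1}$ is MNL-realizable: choosing scores with $\theta_{i+1}/\theta_i=(1/2-\Delta_{i,i+1})/(1/2+\Delta_{i,i+1})$ yields a baseline MNL instance $\mathcal{I}$ whose true ranking is $1\succ 2\succ\cdots\succ n$ with the prescribed gaps, and MNL automatically satisfies SST so $\tilde{\Delta}_i=\Delta_i$. For each adjacent pair $(i,i+1)$ I would build an alternative MNL instance $\mathcal{I}'$ by perturbing only $\theta_i$ and $\theta_{i+1}$ so as to reverse their order; the algorithm must distinguish $\mathcal{I}$ from $\mathcal{I}'$, and the two-item results of \cite{RatioTest1964,Lil2014,FKLowerBound2004} supply the per-pair barrier, including both the $\log(1/\delta)$ term and the $\log\log\tilde{\Delta}_i^{-1}$ term coming from the law of the iterated logarithm.

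The essential new difficulty is the coupling intrinsic to MNL. In the generic setting one is free to set $p_{i,k}=p_{i+1,k}$ for every third item $k$, which removes all information that indirect comparisons carry about the order of $i$ and $i+1$ and cleanly isolates the direct comparison. Under MNL this decoupling is impossible, since every comparison probability is tied to a single score per item. My resolution is to bound the information leaked through indirect comparisons directly. Perturbing $\theta_i,\theta_{i+1}$ changes every comparison involving $i$ or $i+1$, but the induced change satisfies $p_{i,k}-p_{i+1,k}=\theta_k(\theta_i-\theta_{i+1})/((\theta_i+\theta_k)(\theta_{i+1}+\theta_k))$, which is $O(\Delta_{i,i+1})$ uniformly in $k$ by an elementary bound on the score ratio. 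Hence the KL divergence between $\mathcal{I}$ and $\mathcal{I}'$ contributed by \emph{any} comparison set is $O(\Delta_{i,i+1}^2)$, whether the set is the direct pair or an indirect one. Consequently indirect comparisons provide at most a constant-factor more information than direct ones, and cannot drive the per-pair complexity below $\Omega(\Delta_{i,i+1}^{-2}\log(1/\delta))$.

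With this per-comparison KL bound in hand, the remainder follows the same route as Theorem~\ref{Theorem:LB}. I would run the $\mathcal{P}_1,\mathcal{P}_2$-style reduction (now over MNL instances) to aggregate the per-pair constraints into the full sum, producing the $\sum_i\tilde{\Delta}_i^{-2}(\log\log\tilde{\Delta}_i^{-1}+\log(1/\delta))$ term together with the combinatorial $\min\{\sum_i\tilde{\Delta}_i^{-2}\log(1/x_i):\sum_i x_i\le 1\}$ term that is exactly what upgrades $\log(1/\delta)$ to $\log(n/\delta)$. Because the only change from the generic argument is a constant factor in the per-comparison KL bound, the result is precisely Eq.~(\ref{Eq:GLB}) with a different hidden constant; the simplification to Eq.~(\ref{Eq:LB}) under $\delta\preceq 1/poly(n)$ or $\max_{i,j\in[n]}\{\Delta_i/\Delta_j\}\preceq n^{1/2-p}$ is then identical to the generic case.

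I expect the step in the second paragraph to be the main obstacle. The delicate point is to show \emph{uniformly}---over all third items $k$ and over all adaptive, history-dependent choices the algorithm may make---that the unavoidable MNL coupling leaks only a constant factor of information, so that summing the KL contributions over all sets and all items via Wald-type change-of-measure identities still reproduces the generic bound rather than losing a factor that grows with $n$. Verifying simultaneously that the perturbed scores remain a valid MNL instance with the intended swapped ranking, and that the aggregate indirect information never beats the direct lower bound by more than constants, is the crux of the argument.
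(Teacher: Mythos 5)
Your proposal goes in a genuinely different direction from the paper, and the direction you chose runs into exactly the obstacle you flag at the end without resolving it. The paper does \emph{not} fight the MNL coupling with information bounds at all. Instead it reverses the reduction: it introduces $n$ coins with head probabilities $\mu_i$ proportional to the scores $\theta_i$, and shows (Lemma~\ref{Lemma:MNLReduction}) that a pairwise MNL comparison of $i$ and $j$ can be \emph{exactly simulated} by repeatedly tossing a uniformly chosen coin from $\{C_i,C_j\}$ until a head appears and declaring the coin that produced the head the winner; this returns $C_i$ with probability $\mu_i/(\mu_i+\mu_j)=p_{i,j}$ and consumes exactly one head per simulated comparison. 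Hence any $\delta$-correct MNL ranking algorithm with $M$ expected comparisons yields a $\delta$-correct coin-ordering algorithm generating $M$ expected heads, and it suffices to lower bound the number of \emph{heads} needed to order coins (Lemmas~\ref{Lemma:HeadsLowerBound} and~\ref{Lemma:LBCoins}). The coin problem decouples completely -- tossing a third coin has identical law under the hypothesis that swaps a given pair, so it carries zero information -- which is precisely why the whole machinery of Theorem~\ref{Theorem:LB} (Farrell's $\log\log$ bound via Lemma~\ref{Lemma:LBTwo}, and the error-budget argument of Lemma~\ref{Lemma:LBP2}) imports unchanged.

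The concrete gaps in your route are the two hardest pieces of the bound. First, the $\log\log\tilde{\Delta}_i^{-1}$ term is not obtainable from a per-comparison KL bound plus a change of measure against a fixed alternative; it is a Farrell/LIL-type \emph{sequential} lower bound that requires correctness over a continuum of gap values. To invoke the two-item results per pair, you would need to simulate, from the two-item primitive, all of the algorithm's comparisons of $i$ or $i{+}1$ against third items -- but under MNL those outcome distributions depend on the unknown parameter, so the simulator cannot generate them; your uniform $O(\Delta_{i,i+1}^2)$ KL estimate does not substitute for this, and you would instead have to reprove Farrell's bound for adaptive sampling from heterogeneous information sources, a substantial argument you do not sketch. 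Second, the $\min\{\sum_i\tilde{\Delta}_i^{-2}\log(1/x_i):\sum_i x_i\le 1\}$ term (the source of $n\log n$) rests on the error-budget inequality $\sum_k\delta_k\lesssim\delta$, which the paper derives from \emph{independence} of the pair-ordering subproblems in the coin formulation; under your direct MNL construction the $2^m$ swap hypotheses are coupled through every indirect comparison, and you give no replacement for that independence argument. Your first paragraph's claim that adjacent-gap profiles are MNL-realizable and your KL computation are both fine, and they do deliver the $\sum_i\tilde{\Delta}_i^{-2}\log(1/\delta)$ portion; but the remaining two terms are exactly what your acknowledged ``crux'' leaves open, and the paper's resolution is not a sharper information bound but the simulation reduction above.
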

\begin{proof}[Proof sketch.]
	We prove this theorem by Lemmas~\ref{Lemma:HeadsLowerBound}, \ref{Lemma:LBCoins} and \ref{Lemma:MNLReduction}, which could be of independent interest.
	
	Suppose that there are two coins with unknown \textit{head probabilities} (the probability that a toss produces a head) $\lambda$ and $\mu$, respectively, and we want to find the more biased one (i.e., the one with the larger head probability). Lemma~\ref{Lemma:HeadsLowerBound} states a lower bound on the number of heads or tails generated for finding the more biased coin, which works even if $\lambda$ and $\mu$ go to $0$. 
	This is in contrast to the lower bounds on the number of tosses given by previous works \citep{LIL2014,LowerBound2012,FKLowerBound2004}, which go to infinity as $\lambda$ and $\mu$ go to 0.
	
	\begin{restatable}[Lower bound on number of heads]{lemma}{RestateHeadsLowerBound}\label{Lemma:HeadsLowerBound}
		Let $\lambda+\mu\leq 1$, $\Delta:=|{\lambda}/({\lambda+\mu})-1/2|$, and $\delta\in(0,1/2)$ be given. To find the more biased coin with probability $1-\delta$, any $\delta$-correct algorithm for this problem produces $\tilde{\Omega}(\Delta^{-2}\log\log{\Delta^{-1}}) + \Omega({\Delta^{-2}}\log{\delta^{-1}})$ heads in expectation. 
	\end{restatable}
	
	Now we consider $n$ coins $C_1,C_2,...,C_n$ with mean rewards $\mu_1,\mu_2,...,\mu_n$, respectively, where for any $i\in[n]$, $\theta_i/\mu_i=c$ for some constant $c>0$. Define the gaps of coins $\Delta^c_{i,j} := |\mu_i/(\mu_i + \mu_j)-1/2|$, and $\Delta^c_i := \min_{j\neq i}\Delta^c_{i,j}$. We can check that for all $i$ and $j$, $\Delta^c_{i,j} = \Delta_{i,j}$, and $\Delta_i = \tilde{\Delta}_i = \Delta^c_{i}$. 
	
	\begin{restatable}[Lower bound for arranging coins]{lemma}{RestateLBCoins}\label{Lemma:LBCoins}
		For $\delta<1/12$, to arrange these coins in ascending order of head probabilities, the number of heads generated by any $\delta$-correct algorithm is lower bounded by  Eq.~(\ref{Eq:GLB}) with a (possibly) different hidden constant factor.
	\end{restatable}
	
	The next lemma shows that any algorithm that solves a ranking problem under the MNL model can be transformed to solve the pure exploration multi-armed bandit (PEMAB) problem with Bernoulli rewards(e.g., \cite{LIL2014,Halving2010,QuantileBandit2019}). Previous works \cite{LimitedRounds2017,ActiveRanking2019,ApproximateRanking2018} have shown that certain types of pairwise ranking problems (e.g., Borda-Score ranking) can also be transformed to PEMAB problems.
	But in this paper, we make a {\em reverse connection} that bridges these two classes of problems, which may be of independent interest. We note that in our prior work \cite{RankingBounds2018}, we proved a similar result.
	
	\begin{restatable}[Reducing PEMAB problems to ranking]{lemma}{RestateMNLReduction} \label{Lemma:MNLReduction}
		If there is a $\delta$-correct algorithm that correctly ranks $[n]$ with probability $1-\delta$ by $M$ expected number of comparisons, then we can construct another $\delta$-correct algorithm that correctly arranges the coins $C_1,C_2,...,C_n$ in the order of ascending head probabilities with probability $1-\delta$ and produces $M$ heads in expectation.
	\end{restatable}
	
	The theorem follows by Lemmas~\ref{Lemma:LBCoins} and \ref{Lemma:MNLReduction}. A full proof can be found in Section~\ref{Sec:FRProofs}.
\end{proof}

\subsection{Discussions on listwise ranking}
A listwise comparison compares $m$ ($m>2$) items and returns a noisy result about the most preferred item. It is an interesting question whether exact ranking from listwise comparisons requires less comparisons. The answer is ``It depends.'' 
When every comparison returns the most preferred item with high probability (w.h.p.)\footnote{In this paper, ``w.h.p.'' means with probability at least $1-n^{-p}$, where $p > 0$ is a sufficiently large constant.}, then, by conducting $m$-wise comparisons, the number of comparisons needed for exact ranking is $\Theta(n\log_m{n})$, i.e., there is a $\log{m}$ reduction, which is stated in Proposition~\ref{Proposition:HPC}. The proof can be found in Section~\ref{Sec:FRProofs}.

\begin{restatable}[Listwise ranking with negligible noises]{proposition}{RestatePropositionHPC}\label{Proposition:HPC}
	If all comparisons are correct w.h.p., to exactly rank $n$ items w.h.p. by using $m$-wise comparisons, $\Theta(n\log_m{n})$ comparisons are needed.
\end{restatable}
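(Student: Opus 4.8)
The plan is to establish the two matching bounds separately, since $\Theta(n\log_m n)$ asserts both an $\Omega$ lower bound on every algorithm and an $O$ upper bound achieved by some algorithm. The key simplification is that in the negligible-noise regime each $m$-wise comparison behaves, with probability at least $1-n^{-p}$, as a reliable oracle returning the top-ranked item of the queried set; so the task is essentially that of sorting $n$ items with reliable $m$-ary max-queries.

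For the lower bound, I would argue information-theoretically. Each comparison returns one of the at most $m$ items of the queried set, so its outcome lies in an alphabet of size at most $m$ and conveys at most $\log_2 m$ bits, irrespective of the noise level. An algorithm recovering the exact ranking must distinguish among the $n!$ permutations, which carry $\log_2(n!)=\Theta(n\log n)$ bits. Formally, placing a uniform prior on the true ranking $R$ and letting $O_1,\dots,O_N$ be the outcomes with $N$ the number of comparisons, the chain rule gives $I(R;O_1,\dots,O_N)\le \mathbb{E}[N]\log_2 m$, while Fano's inequality forces $I(R;O_1,\dots,O_N)\ge(1-o(1))\log_2(n!)$ for any procedure succeeding w.h.p. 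Combining these yields $\mathbb{E}[N]\succeq \log_m(n!)=\Omega(n\log_m n)$.

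For the upper bound, I would exhibit an $m$-way merge sort tailored to max-queries. Treat singletons as sorted, then repeatedly merge groups of $m$ sorted sublists into one. The crucial observation is that a single step of an $m$-way merge, namely selecting the current largest among the at most $m$ heads of the sublists and appending it to the output, is exactly one $m$-wise comparison; the under-full query sets of fewer than $m$ live heads near the end of a merge are padded with items already determined to rank lowest, which, being reliably non-winners, do not change the outcome. Thus merging sublists of total size $k$ costs at most $k$ comparisons, so each of the $\log_m n$ merge levels costs $O(n)$ comparisons and the whole sort costs $O(n\log_m n)$. Since this total is at most $O(n\log_2 n)=\mathrm{poly}(n)$ and each comparison is correct with probability at least $1-n^{-p}$, a union bound shows all comparisons are simultaneously correct with probability at least $1-O(n^{-(p-1)}\log_2 n)$; for $p$ a sufficiently large constant this is $1-o(1)$, and when every comparison is correct the merge sort returns the exact ranking. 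The two bounds together give $\Theta(n\log_m n)$.

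The most delicate points are, for the upper bound, the realization that $m$-way merging consumes precisely one $m$-wise max-query per emitted element (so the branching factor $\log_2 m$ is genuinely exploited) together with the union bound that lifts per-query reliability to global w.h.p. correctness; and, for the lower bound, invoking Fano rather than a bare decision-tree count so that the argument covers randomized, adaptive, w.h.p.-correct algorithms rather than only exact ones. The padding of under-full query sets is a routine technicality given that the noise is negligible.
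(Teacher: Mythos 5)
Your proposal is correct and takes essentially the same route as the paper: the lower bound via Fano's inequality with a uniform prior on the ranking and an information budget of $\log m$ bits per comparison outcome, and the upper bound via an $m$-way merge sort in which each emitted element costs exactly one $m$-wise comparison. The only differences are cosmetic: the paper handles the random stopping time $N$ explicitly by bounding $H(N) \preceq \log \mathbb{E}[N]$ through a dyadic decomposition (a technicality your chain-rule step glosses over, though it is routine), and its merge subroutine queries the under-full head sets directly rather than padding them as you do.
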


In general, when the ``w.h.p. condition'' is violated, listwise ranking does not necessarily require less comparisons than pairwise ranking (in order sense). Here, we give an example. For more general models, it remains an open problem to identify the theoretical limits, which is left for future studies.

\begin{restatable}{theorem}{RestateLBLWMNL}\label{Theorem:LBLWMNL}
	Under the MNL model, given $n$ items with preference scores $\theta_1,\theta_2,...,\theta_n$ and $\Delta_{i,j} := |\theta_i/(\theta_i+\theta_j)-1/2|$, $\tilde{\Delta}_i=\Delta_i:=\min_{j\neq i}\Delta_{i,j}$, to correctly rank these $n$ items with probability $1-\delta$, even with $m$-wise comparisons for all $m\in\{2,3,...,n\}$, the lower bound is the same as the pairwise ranking (i.e., Theorem~\ref{Theorem:LBMNL}) with (possibly) different hidden constant factors.
\end{restatable}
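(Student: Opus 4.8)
The goal is to show that listwise comparisons under the MNL model cannot beat pairwise comparisons in an order sense, so the plan is to reduce $m$-wise ranking to the coin-based lower bound already established in Lemmas~\ref{Lemma:HeadsLowerBound}, \ref{Lemma:LBCoins} and \ref{Lemma:MNLReduction}. The key observation is that under the MNL model an $m$-wise comparison over a set $S$ returns item $i\in S$ with probability $\theta_i/\sum_{j\in S}\theta_j$, and this outcome can be simulated by a ``tournament'' of independent coin tosses in exactly the way the reduction in Lemma~\ref{Lemma:MNLReduction} handles the pairwise case. Concretely, I would associate to each item $i$ a coin $C_i$ with head probability proportional to $\theta_i$, and show that a single $m$-wise comparison can be generated by tossing the coins $\{C_j:j\in S\}$ and using the memoryless/competing-exponential structure of MNL to decide the winner, so that the total number of \emph{heads} produced per comparison is $O(1)$ in expectation regardless of $m$.

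First I would make the competing-coins simulation precise. The plan is to exploit the standard fact that if $\theta_i=\mathbb{E}[\text{reward of }C_i]$ and one repeatedly tosses all coins in $S$ until some coin shows a head, then (conditioned on at least one head appearing) the probability that $C_i$ is a head-producing coin equals $\theta_i/\sum_{j\in S}\theta_j$ in the appropriate limiting scaling — this is exactly the MNL winning probability. Thus an $m$-wise comparison is faithfully reproduced, and the expected number of heads spent to realize one comparison is bounded by a constant (each round of tosses yields at least one head with probability bounded away from zero, and we stop at the first head). This lets me convert any $\delta$-correct $m$-wise ranking algorithm that uses $N$ comparisons into a $\delta$-correct coin-arranging algorithm that produces $O(N)$ heads in expectation, mirroring Lemma~\ref{Lemma:MNLReduction} but for $m$-wise queries.

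Given this reduction, the lower bound transfers directly: by Lemma~\ref{Lemma:LBCoins}, any $\delta$-correct algorithm arranging the coins in ascending order of head probabilities must generate a number of heads lower bounded by Eq.~(\ref{Eq:GLB}); since the simulated algorithm produces $O(N)$ heads, we get $N=\Omega(\text{Eq.~}(\ref{Eq:GLB}))$, and under the stated conditions ($\delta\preceq 1/poly(n)$ or the gap-ratio condition) this sharpens to Eq.~(\ref{Eq:LB}), matching Theorem~\ref{Theorem:LBMNL} up to the hidden constant. I would carry out the steps in this order: (i) define the coins $C_i$ with head probability $c\,\theta_i$; (ii) describe the competing-coins procedure simulating one $m$-wise comparison and bound its expected head count by a constant; (iii) compose this with the given $m$-wise ranking algorithm to obtain a coin-arranging algorithm with $O(N)$ expected heads; (iv) invoke Lemma~\ref{Lemma:LBCoins} to conclude.

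The main obstacle I anticipate is step (ii): making the simulation of a single $m$-wise MNL comparison exact while keeping the expected number of heads $O(1)$ uniformly in $m$. The subtlety is that a naive simulation (toss each coin once, break ties) need not reproduce the MNL winning probabilities, and a correct simulation that tosses until a head appears could in principle spend many heads when all $\theta_j$ are small. I expect the resolution to use the same rescaling idea underlying Lemmas~\ref{Lemma:HeadsLowerBound} and \ref{Lemma:MNLReduction} — tracking heads rather than tosses and exploiting that the \emph{first} head among the competing coins lands on $i$ with probability exactly $\theta_i/\sum_{j\in S}\theta_j$ — so that each comparison costs exactly one head. Verifying this identity and checking that the memorylessness survives the listwise generalization is the crux; once it is in place, the remaining reduction and the appeal to Lemma~\ref{Lemma:LBCoins} are routine.
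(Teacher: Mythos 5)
Your proposal is essentially the paper's own proof: it reduces the coin-arranging problem (Lemma~\ref{Lemma:LBCoins}) to listwise MNL ranking by simulating each $m$-wise comparison with competing coins whose head probabilities are proportional to the $\theta_i$'s, where the first head determines the winner so that each simulated comparison generates exactly one head, and then the lower bound transfers as in Theorem~\ref{Theorem:LBMNL}. The only detail left open in your step (ii) — the crux you flagged — is the tossing protocol: tossing a \emph{single uniformly chosen} coin per step (rather than all coins of $S$ simultaneously, which creates ties and distorts the distribution) makes the winner probability exactly $\mu_{r_i}/\sum_{j=1}^m \mu_{r_j}$ by a geometric-series calculation; this is precisely the paper's Procedure $\mathcal{A}'_c$ and Claim~\ref{Claim:LWReductionProcedure}.
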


Theorem~\ref{Theorem:LBLWMNL} gives a minimax lower bound for listwise ranking, which is the same as pairwise ranking. The proof is given in Section~\ref{Sec:FRProofs}. The authors of \cite{MNLlistwise2018} have shown that for top-$k$ item selection under the MNL model, listwise comparisons can reduce the number of comparisons needed compared with pairwise comparisons. However, for exact ranking, listwise comparisons cannot.

\section{Algorithms and the upper bound for pairwise ranking}\label{Sec:Alg}
In this section, we establish a (nearly) sample-complexity optimal $\delta$-correct algorithm for exact ranking, where whether the word ``nearly'' can be deleted depends on the structures of the instances. The algorithm is based on Binary Search proposed in \cite{NoisyComputing1994} with upper bound $O(n\Delta_{\min}^{-2}\log(n/\delta))$, where $\Delta_{\min}:=\min_{i\neq j}\Delta_{i,j}$. Binary Search has two limitations: (i) it requires the knowledge of $\Delta_{\min}$ {\em a priori} to run, and (ii) it does not utilize the unequal noise levels. 

In this paper, we propose a technique named \textit{Attempting with error prevention} and establish a corresponding insertion subroutine that attempts to insert an item $i$ into a sorted list with a guessing $\Delta_i$-value, while preventing errors from happening if the guess is not well chosen. If the guess is small enough, this subroutine correctly inserts the item with a large probability, and if not, this subroutine will, with a large probability, not insert the item into a wrong position. By attempting to insert item $i$ with diminishing guesses of $\Delta_i$, this subroutine finally correctly inserts item $i$ with a large confidence.

\algtext*{EndIf}
\makeatletter
\renewcommand{\ALG@name}{Subroutine}
\makeatother

\begin{algorithm}[h]
	\caption{Attempting-Comparison$(i,j,\epsilon, \delta)$ (ATC)}\label{Sbrt:ATC}
	\textbf{Initialize:} $\forall t$, let $b^t=\sqrt{\frac{1}{2t}\log\frac{\pi^2t^2}{3\delta}}$; $b^{max}\gets\lceil\frac{1}{2\epsilon^2}\log\frac{2}{\delta}\rceil$; $w_i\gets 0$;
	\begin{algorithmic}[1]
		\For{$t \gets 1$ to $b^{max}$}
		\State Compare $i$ and $j$ once; Update $w_i\gets w_i+1$ if $i$ wins; Update $\hat{p}^t_i\gets w_i/t$;
		\If{$\hat{p}^t_i > 1/2+b^t$}
		\Return{$i$};
		\EndIf
		\If{$\hat{p}^t_i < 1/2-b^t$}
		\Return{$j$};
		\EndIf
		\EndFor
		\State \Return{$i$ \textbf{if} $\hat{p}^t_i>1/2$;} 
		\Return{$j$ \textbf{if} $\hat{p}^t_i<1/2$};
		and \Return{a random item \textbf{if} $\hat{p}^t_i=1/2$};
	\end{algorithmic}
\end{algorithm}

To implement the technique ``Attempting with error prevention'', we first need to construct a useful subroutine called Attempting-Comparison (ATC), which attempts to rank two items with $\epsilon$, a guess of $\Delta_{i,j}$. Then, by ATC, we establish Attempting-Insertion (ATI), which also adopts this technique.

\begin{restatable}[Theoretical Performance of ATC]{lemma}{RestateTPATC}\label{Lemma:TP-ATC}
	ATC terminates after at most $b^{max}=O({\epsilon^{-2}}\log{(1/\delta)})$ comparisons and returns the more preferred item with probability at least $1/2$. Further, if $\epsilon \leq \Delta_{i,j}$, then ATC returns the more preferred item with probability at least $1-\delta$.
\end{restatable}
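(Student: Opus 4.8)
The plan is to prove the three assertions separately, assuming throughout without loss of generality that $i\succ j$, so that by A3 the single-comparison win probability $p:=p_{i,j}>1/2$ and $\Delta_{i,j}=p-1/2$. The termination bound is immediate: the loop runs at most $b^{max}$ times and spends exactly one comparison per iteration, so the total is at most $b^{max}=\lceil\frac{1}{2\epsilon^2}\log\frac{2}{\delta}\rceil=O(\epsilon^{-2}\log(1/\delta))$. It remains to control the probability that ATC returns the wrong item $j$. I would rewrite the tests in terms of the win-minus-loss walk $S_t:=2w_i-t=\sum_{s\le t}X_s$, where $X_s=+1$ if $i$ wins the $s$-th comparison and $-1$ otherwise, so that $\hat p^t_i=1/2+S_t/(2t)$ and the two thresholds $\hat p^t_i>1/2+b^t$ and $\hat p^t_i<1/2-b^t$ become $S_t>U_t$ and $S_t<-U_t$ with the symmetric curved boundary $U_t:=2tb^t$.

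For the ``$\ge 1/2$'' claim I would use a symmetry-plus-monotone-coupling argument. First, view the output as a deterministic function $D(X_1,\dots,X_{b^{max}},B)$ of the comparison outcomes and an independent fair tie-break coin $B$. Under the fair law $p=1/2$, the sign-flip map $X_s\mapsto -X_s$ together with flipping $B$ sends $S_t\mapsto -S_t$ and hence carries the event $\{\text{return }i\}$ exactly onto $\{\text{return }j\}$; since these two events partition the sample space, each has probability $1/2$. Second, I would show that for each fixed value of $B$ the event $\{\text{return }i\}$ is monotone nondecreasing in $(X_1,\dots,X_{b^{max}})$: raising any one outcome from $-1$ to $+1$ shifts $S_t$ upward for all later $t$, which can only make an upper crossing happen sooner and a lower crossing harder, so an output of $i$ is preserved (a short case analysis on whether the first boundary crossing precedes the flipped index, and on the terminal sign, makes this rigorous). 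Because the product Bernoulli law with parameter $p>1/2$ stochastically dominates the fair one coordinatewise, monotonicity gives $\Pr_p[\text{return }i]\ge \Pr_{1/2}[\text{return }i]=1/2$.

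For the ``$1-\delta$'' claim under $\epsilon\le\Delta_{i,j}$, I would bound $\Pr_p[\text{return }j]$ by a union of two events: ever crossing the lower boundary, and ending below $1/2$. For the first, since $p\ge 1/2$ implies $1/2-b^t\le p-b^t$, crossing the lower boundary forces $\hat p^t_i<p-b^t$; a union bound with Hoeffding's inequality gives
\begin{align*}
\Pr_p\big[\exists t\le b^{max}:\hat p^t_i<p-b^t\big]\le\sum_{t\ge 1}e^{-2t(b^t)^2}=\sum_{t\ge 1}\frac{3\delta}{\pi^2t^2}=\frac{\delta}{2},
\end{align*}
where the key identity is $2t(b^t)^2=\log\frac{\pi^2t^2}{3\delta}$. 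For the second, Hoeffding at $t=b^{max}$ gives $\Pr_p[\hat p^{b^{max}}_i\le 1/2]\le \exp(-2b^{max}\Delta_{i,j}^2)$, and since $b^{max}\ge\frac{1}{2\epsilon^2}\log\frac{2}{\delta}$ and $\epsilon\le\Delta_{i,j}$, the exponent is at least $\log\frac{2}{\delta}$, so this probability is at most $\delta/2$. As every run returning $j$ either crosses the lower boundary or ends with $\hat p^{b^{max}}_i\le 1/2$, summing the two bounds yields $\Pr_p[\text{return }j]\le\delta$, i.e.\ ATC returns $i$ with probability at least $1-\delta$.

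I expect the main obstacle to be the monotonicity step in the ``$\ge 1/2$'' argument: because the boundary $U_t=2tb^t$ is curved and the decision is a first-passage (stopping) rule, one must check carefully that flipping a single outcome upward cannot turn an $i$-output into a $j$-output despite the stopping time itself moving; the two remaining assertions (termination, and the $1-\delta$ bound via anytime Hoeffding) are comparatively routine.
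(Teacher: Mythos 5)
Your proposal is correct and follows essentially the same route as the paper's proof: the identical termination count, and for the $1-\delta$ claim the same two-event decomposition (the anytime lower-boundary crossing bounded via the union bound $\sum_{t\ge 1} e^{-2t(b^t)^2}=\delta/2$, plus the terminal Hoeffding bound $\exp(-2b^{max}\epsilon^2)\le\delta/2$). The only difference is in the ``at least $1/2$'' claim, where the paper invokes the symmetry of the return conditions in a single sentence, while you make the argument rigorous by combining the sign-flip symmetry at $p_{i,j}=1/2$ with the monotonicity of the event $\{\text{return }i\}$ and stochastic dominance --- a worthwhile completion of a step the paper leaves implicit.
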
	

Next, to establish insertion subroutine ATI, we introduce preference interval trees \cite{NoisyComputing1994} (PIT). A PIT is constructed from a sorted list of items. For a sorted list of items $S$ with size $l$, without loss of generality, we assume that $r_1\succ r_2\succ\cdots\succ r_l$. We introduce two artificial items $-\infty$ and $+\infty$, where $-\infty$ is such that $p_{i,-\infty}=1$ for any item $i$, and $+\infty$ is such that $p_{i,+\infty}=0$ for any item $i$. 

\begin{wrapfigure}{r}{5cm}
	\centering
	\includegraphics[width=0.32\textwidth]{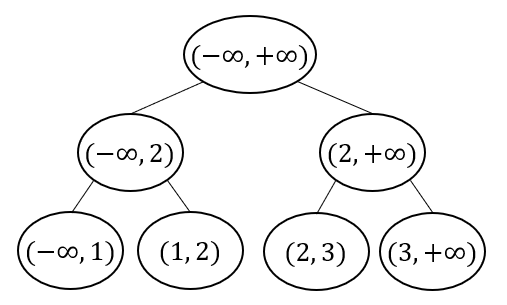}
	\caption{An example of PIT, constructed from a sorted list with three items $3\succ 2\succ 1$.}\label{fig:PITSample}
\end{wrapfigure}
\textbf{Preference Interval Tree }\cite{NoisyComputing1994}.
A preference interval tree constructed from the sorted list $S$ satisfies the following conditions: (i) It is a binary tree with depth $\lceil 1+\log_2(|S|+1)\rceil$. (ii) Each node $u$ holds an interval $(u\mbox{.left},u\mbox{.right})$ where $u\mbox{.left},u\mbox{.right}\in S\cup\{-\infty,+\infty\}$, and if $u$ is non-leaf, it holds an item $u\mbox{.mid}$ satisfying $u\mbox{.right}\succ u\mbox{.mid}\succ u\mbox{.left}$. (iii) A node $i$ is in the interval $(j,k)$ if and only if $k\succ i\succ j$. (iv) The root node is with interval $(-\infty,+\infty)$. From left to right, the leaf nodes are with intervals $(-\infty, r_l), (r_l,r_{l-1}),(r_{l-1},r_{l-2}),...,(r_{2},r_1),(r_1,+\infty)$. (v) Each non-leaf node $u$ has two children $u\mbox{.lchild}$ and $u\mbox{.rchild}$ such that $u\mbox{.left}=u\mbox{.lchild}\mbox{.left}$, $u\mbox{.right}=u\mbox{.rchild}\mbox{.right}$ and $u\mbox{.mid}=u\mbox{.lchild}\mbox{.right}=u\mbox{.rchild}\mbox{.left}$.

\begin{algorithm}[h]
	\caption{Attempting-Insertion$(i,S,\epsilon, \delta)$ (ATI).}\label{Sbrt:ATI}
	\textbf{Initialize:}
	Let $T$ be a PIT constructed from $S$;  $h\gets\lceil 1 + \log_2(1+|S|)\rceil$, the depth of $T$; \\
	For all leaf nodes $u$ of $T$, initialize $c_u\gets 0$; Set $t^{\max}\gets \lceil\max\{4h, \frac{512}{25}\log\frac{2}{\delta}\}\rceil$ and $q\gets \frac{15}{16}$;
	\begin{algorithmic}[1]
		\State $X\gets$ the root node of $T$;
		\For{$t \gets$ $1$ to $t^{\max}$}
		\If{$X$ is the root node}
		\If{ATC$(i,X\mbox{.mid},\epsilon,1-q)$ = $i$} 
		$X\gets X\mbox{.rchild}$;\quad \quad\#i.e., ATC returns $i\succ X.\mbox{mid}$
		\Else\ $X\gets X\mbox{.lchild}$;
		\EndIf
		\ElsIf{$X$ is a leaf node}{
			\If{ATC$(i,X\mbox{.left},\epsilon,1-\sqrt{q})=i$ $\land$ ATC$(i,X\mbox{.right},\epsilon,1-\sqrt{q})= X\mbox{.right}$}
			\State $c_X\gets c_X+1$;
			\If{$c_X> b^t := \frac{1}{2}t + \sqrt{\frac{t}{2}\log\frac{\pi^2 t^2}{3\delta}} + 1$}
			\State Insert $i$ into the corresponding interval of $X$ and \Return{\textit{inserted}};
			\EndIf
			\ElsIf{$c_X>0$} $c_X\gets c_X-1$
			\Else\ $X\gets X.\mbox{parent}$
			\EndIf
		}
		\Else
		\If{ATC$(i,X\mbox{.left},\epsilon,1-\sqrt[3]{q})=X\mbox{.left}$ $\lor$ ATC$(i,X\mbox{.right},\epsilon,1-\sqrt[3]{q})=i$}
		\State $X\gets X.\mbox{parent}$;
		\ElsIf{ATC$(i,X\mbox{.mid},\epsilon,1-\sqrt[3]{q})=i$}
		$X\gets X\mbox{.rchild}$;
		\Else\ $X\gets X\mbox{.lchild}$;
		\EndIf
		\EndIf
		\EndFor
		\If{there is a leaf node $u$ with $c_u\geq 1+\frac{5}{16}t^{\max}$}
		\State Insert $i$ into the corresponding interval of $u$ and \Return{\textit{inserted}};
		\Else\ \Return \textit{unsure};
		\EndIf
	\end{algorithmic}
\end{algorithm}

Based on the notion of PIT, we present insertion subroutine ATI in Subroutine~\ref{Sbrt:ATI}. ATI runs a random walk on the PIT to insert $i$ into $S$. Let $X$ be the point that moves on the tree.
We say a leaf $u_0$ \textit{correct} if the item $i$ belongs to $(u.\mbox{left},u.\mbox{right})$. Define $d(X):=$ the distance (i.e., the number of edges) between $X$ and $u_0$. At each round of the subroutine, if all comparisons give correct results, we say this round is \textit{correct}, otherwise we say \textit{incorrect}. For each correct round, either $d(X)$ is decreased by 1 or the counter of $u_0$ is increased by 1. The subroutine inserts $i$ into $u_0$ if $u_0$ is counted for $1+\frac{5}{16}t^{\max}$ times. Thus, after $t^{\max}$ rounds, the subroutine correctly inserts $i$ into $S$ if the number of correct rounds is no less than $\frac{21}{32}t^{\max}+\frac{h}{2}$, where $h = \lceil 1 + \log_2(|S|+1)\rceil$ is the depth of the tree. If guessing $\epsilon\leq \Delta_i$, then each round is correct with probability at least $q$, making the subroutine correctly insert item $i$ with probability at least $1-\delta$.

For all $\epsilon>0$, each round is incorrect with probability at most $1/2$, and thus, by concentration inequalities, we can also show that with probability at least $1-\delta$, $i$ will not be placed into any leaf node other than $u_0$. That is, if $\epsilon > \Delta_i$, the subroutine either correctly inserts $i$ or returns \textit{unsure} with probability at least $1-\delta$. The choice of parameters guarantees the sample complexity. Lemma~\ref{Lemma:TP-ATI} states its theoretical performance, and the proof is relegated to the supplementary material.

\begin{restatable}[Theoretical performance of ATI]{lemma}{RestateTPATI}\label{Lemma:TP-ATI}
	Let $\delta \in (0,1)$. ATI returns after $O({\epsilon^{-2}}\log({|S|}/{\delta}))$ comparisons and, with probability at least $1-\delta$, correctly inserts $i$ or returns \textit{unsure}. Further, if $\epsilon \leq \Delta_{i}$, it correctly inserts $i$ with probability at least $1-\delta$.
\end{restatable}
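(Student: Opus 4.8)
The plan is to treat the three assertions separately and to control the random walk of $X$ on the PIT through a single potential. Throughout, let $u_0$ be the correct leaf (the one whose interval contains $i$), let $d(X)$ be the edge‑distance from $X$ to $u_0$, write $N_c$ and $N_i=t^{\max}-N_c$ for the numbers of \emph{correct} rounds (all ATC calls in the round return the true order) and \emph{incorrect} rounds, and recall $h=\lceil 1+\log_2(1+|S|)\rceil$. The sample‑complexity bound is the routine part: each round issues at most three ATC calls, each with a \emph{constant} failure parameter (one of $1-q$, $1-\sqrt q$, $1-\sqrt[3]{q}$ with $q=15/16$), so by Lemma~\ref{Lemma:TP-ATC} each call costs $O(\epsilon^{-2})$ comparisons; since $t^{\max}=O(\max\{h,\log(1/\delta)\})=O(\log(|S|/\delta))$ and both stopping tests fire within $t^{\max}$ rounds, the total is $O(\epsilon^{-2}\log(|S|/\delta))$.

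For the completeness claim ($\epsilon\le\Delta_i$), I would first verify by a short case check that a correct round performs exactly one of: decrease $d(X)$ by one (the root, internal, and empty‑counter wrong‑leaf moves), increment $c_{u_0}$ (at $u_0$), or decrement a wrong‑leaf counter; whereas an incorrect round changes the state by at most one unit. Taking
\[
\Phi \;=\; d(X)\;-\;c_{u_0}\;+\;\sum_{u\neq u_0}c_u,
\]
every correct round lowers $\Phi$ by exactly $1$ and every incorrect round raises it by at most $1$, so $\Phi_{\mathrm{final}}\le \Phi_0-N_c+N_i\le h-N_c+N_i$. Since $d\ge0$ and the wrong‑leaf counters are nonnegative, $\Phi_{\mathrm{final}}\ge -c_{u_0}$, which yields $c_{u_0}\ge 2N_c-t^{\max}-h$. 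Hence $N_c\ge \tfrac{21}{32}t^{\max}+\tfrac h2$ forces $c_{u_0}\ge 1+\tfrac{5}{16}t^{\max}$, triggering insertion into $u_0$. When $\epsilon\le\Delta_i$ the choice of ATC confidences makes each round correct with probability at least $q=15/16$ (the per‑round product of ATC successes is $q$ in all three cases), so $N_c$ stochastically dominates $\mathrm{Bin}(t^{\max},15/16)$; using $h\le t^{\max}/4$ and $t^{\max}\ge\tfrac{512}{25}\log(2/\delta)$, a Hoeffding bound gives $\Pr[N_c<\tfrac{21}{32}t^{\max}+\tfrac h2]\le\delta$, so $i$ is inserted correctly with probability at least $1-\delta$ (the constant in $t^{\max}$ is what makes this and the soundness estimate below each contribute at the $\delta$ level).

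For soundness (every $\epsilon$) I would bound the chance that any \emph{wrong} leaf $u$ reaches an insertion threshold, which is what guarantees ATI either inserts into $u_0$ or returns \textit{unsure}. Since $i$ lies outside the interval of $u$, incrementing $c_u$ requires some ATC to answer against the true order, which by Lemma~\ref{Lemma:TP-ATC} has probability at most $1/2$ per visit and independently across visits; thus $c_u$ is dominated by a symmetric $\pm1$ walk reflected at $0$. The in‑loop radius $b^t=\tfrac12 t+\sqrt{\tfrac t2\log\frac{\pi^2t^2}{3\delta}}+1$ is precisely an anytime upper‑confidence bound for such a sum, so Hoeffding plus a union over the rounds (only one leaf is active per round and $\sum_t t^{-2}=\pi^2/6$) bounds the in‑loop trigger at a wrong leaf by about $\delta/2$, with no union over leaves needed. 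The hard part will be the post‑loop threshold $1+\tfrac5{16}t^{\max}$: reaching a count of order $t^{\max}$ demands $\Omega(t^{\max})$ increments and hence $\Omega(t^{\max})$ visits to that single leaf, so I would combine a maximal (reflection) inequality for the reflected walk with the visit budget $\sum_u(\text{visits to }u)\le t^{\max}$ to conclude that only $O(1)$ leaves can be candidates, keeping the total failure $O(\delta)$ without an $|S|$‑blowup. Establishing that a wrong leaf cannot accumulate a count of order $t^{\max}$ — i.e. tying the reflected‑walk maximal bound to the global visit budget — is the main obstacle; once it is in place, the two estimates show that with probability at least $1-\delta$ no wrong leaf is ever counted to threshold.
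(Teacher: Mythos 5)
Your handling of the sample complexity and of the completeness case $\epsilon\le\Delta_i$ is correct and is essentially the paper's own argument: your potential $\Phi=d(X)-c_{u_0}+\sum_{u\neq u_0}c_u$ is a repackaging of the paper's observation that every correct round increases $c_{u_0}-d(X,u_0)$ by exactly one while an incorrect round decreases it by at most one, and the concluding Hoeffding bound with $q=15/16$ and $t^{\max}\ge\max\{4h,\tfrac{512}{25}\log\tfrac{2}{\delta}\}$ is the same (use $\Phi_0\le h-1$ rather than $h$ so that the threshold $1+\tfrac{5}{16}t^{\max}$, not $\tfrac{5}{16}t^{\max}$, is actually reached; this is cosmetic). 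Your in-loop soundness estimate also matches the paper's.

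The genuine gap is precisely the step you flag as ``the main obstacle'': showing that no wrong leaf ever reaches the post-loop threshold $1+\tfrac{5}{16}t^{\max}$. Your proposed route --- a per-leaf reflected-walk maximal inequality tied to a global visit budget, concluding that only $O(1)$ leaves are candidates --- is never carried out, and your instinct for why something nonstandard is needed is correct: a naive union bound over leaves fails, since with $t^{\max}=\Theta(h+\log(1/\delta))$ the single-leaf exponent $2(5/32)^2t^{\max}$ cannot absorb a factor of $|S|$. But the missing idea is much lighter than your plan. The paper \emph{aggregates over all wrong leaves}: at most one counter changes per round; a wrong-leaf counter can be incremented only in an incorrect round; and a correct round at a wrong leaf with positive counter always decrements that counter. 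Writing $W_1^t$ and $R_1^t$ for the numbers of incorrect and correct rounds among all rounds spent at any wrong leaf with positive counter, the paper bounds every wrong-leaf counter by $1+W_1^t-R_1^t$, so a wrong post-loop insertion forces the aggregate deviation $W_1^{t^{\max}}-R_1^{t^{\max}}\ge\tfrac{5}{16}t^{\max}$; since each such round is conditionally incorrect with probability at most $1/2$, a \emph{single} Hoeffding bound conditioned on $W_1+R_1=K\le t^{\max}$ gives at most $\exp\{-\tfrac{25}{512}t^{\max}\}\le\delta/2$ --- no maximal inequality, no visit-budget bookkeeping, no union over leaves. One wrinkle remains for either route (yours or the paper's, whose leading ``$+1$'' covers only one such event): increments from a \emph{zero} counter occur in incorrect rounds that are not counted among positive-counter rounds, and interleaved excursions at other leaves can make the naive aggregate inequality fail, so the accounting of zero-counter increments must be done explicitly. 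Until the post-loop bound is established in some such form, the first assertion of the lemma --- that for arbitrary $\epsilon$ the subroutine correctly inserts or returns \textit{unsure} with probability $1-\delta$ --- is unproven in your proposal.
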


By Lemma~\ref{Lemma:TP-ATI}, we can see that the idea ``Attempting with error prevention'' is successfully implemented. Thus, by repeatedly attempting to insert an item with diminishing guess $\epsilon$ with proper confidences for the attempts, one can finally correctly insert $i$ with probability $1-\delta$. We use this idea to establish the insertion subroutine Iterative-Attempting-Insertion (IAI), and then use it to establish the ranking algorithm Iterative-Insertion-Ranking (IIR). Their theoretical performances are stated in Lemma~\ref{Lemma:TP-IAI} and Theorem~\ref{Theorem:TP-IIR}, respectively, and their proofs are given in supplementary material.

\begin{minipage}{0.51\textwidth}
	\begin{algorithm}[H]
		\caption{Iterative-Attempting-Insertion (IAI).}\label{Sbrt:IAI}
		\textbf{Input parameters:} $(i,S,\delta)$;\\ 
		\textbf{Initialize:} For all $\tau\in\mathbb{Z}^+$, set $\epsilon_\tau={2^{-(\tau+1)}}$ and $\delta_\tau=\frac{6\delta}{\pi^2 \tau^2}$; $t\gets 0$; $Flag\gets$ \textit{unsure};
		\begin{algorithmic}[1]
			\Repeat\ \ $t\gets t+1$;
			\State $Flag\gets$ATI$(i,S,\epsilon_t,\delta_t)$;
			\Until{$Flag=$}\textit{ inserted} 
		\end{algorithmic}
	\end{algorithm}
\end{minipage}\ 
\makeatletter
\renewcommand{\ALG@name}{Algorithm}
\makeatother
\begin{minipage}{0.48\textwidth}
	\begin{algorithm}[H]
		\caption{Iterative-Insertion-Ranking (IIR).}\label{Alg:IIR}
		\textbf{Input:} $S = [n]$, and confidence $\delta>0$;
		\begin{algorithmic}[1]
			\State $Ans\gets $ the list containing only $S[1]$;
			\For{$t\gets$ $2$ to $|S|$}
			\State IAI$(S[t],Ans,\delta/(n-1))$;
			\EndFor
			\State \Return{$Ans$}; 
		\end{algorithmic}
	\end{algorithm}
\end{minipage}

\begin{restatable}[Theoretical Performance of IAI]{lemma}{RestateTPIAI}\label{Lemma:TP-IAI}
	With probability at least $1-\delta$, IAI correctly inserts $i$ into $S$, and conducts at most $O({\Delta_i^{-2}}(\log\log{\Delta_i^{-1}}+\log({|S|}/{\delta})))$ comparisons.
\end{restatable}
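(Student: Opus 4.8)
The plan is to view IAI as a geometric ``guess-and-halve'' wrapper around ATI and to let the two guarantees of Lemma~\ref{Lemma:TP-ATI} split the work cleanly: the first guarantee (valid for \emph{every} $\epsilon$) controls correctness, the second guarantee (valid once $\epsilon\le\Delta_i$) controls termination, and the prefactors $\delta_\tau=6\delta/(\pi^2\tau^2)$ are chosen precisely so that $\sum_{\tau\ge 1}\delta_\tau=\delta$. First I would fix the \emph{critical round} $\tau^\ast:=\min\{\tau:\epsilon_\tau\le\Delta_i\}$. Since $\epsilon_\tau=2^{-(\tau+1)}$ halves each round, $\tau^\ast=O(\log\Delta_i^{-1})$ and $\epsilon_{\tau^\ast}^{-2}=O(\Delta_i^{-2})$ (indeed $\Delta_i/2<\epsilon_{\tau^\ast}\le\Delta_i$ once $\tau^\ast\ge 2$); moreover $\sum_{\tau=1}^{\tau^\ast}\epsilon_\tau^{-2}=\sum_{\tau=1}^{\tau^\ast}4^{\tau+1}=O(4^{\tau^\ast})=O(\Delta_i^{-2})$, because the summands grow geometrically by a factor of $4$ and are thus dominated by the last one.

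Next I would bound the failure probability. Let $W_\tau$ be the event that the round-$\tau$ ATI call inserts $i$ into a wrong leaf, and let $V$ be the event that the round-$\tau^\ast$ call fails to insert $i$ correctly. I claim the ``bad event'' that IAI does not terminate by round $\tau^\ast$ with a correct insertion is contained in $\left(\bigcup_{\tau=1}^{\tau^\ast-1}W_\tau\right)\cup V$: if no round $\le\tau^\ast$ makes a wrong insertion and the round-$\tau^\ast$ call (when reached) inserts correctly, then either some earlier round already returned \emph{inserted} (necessarily correctly) or rounds $1,\dots,\tau^\ast-1$ all returned \emph{unsure} and round $\tau^\ast$ inserts correctly; in all cases the first \emph{inserted} is correct and occurs at a round $\le\tau^\ast$. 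By the first guarantee of Lemma~\ref{Lemma:TP-ATI}, a single ATI$(\cdot,\cdot,\epsilon_\tau,\delta_\tau)$ call wrongly inserts with probability at most $\delta_\tau$, and since comparisons are independent across time (A1) the same bound holds for $W_\tau$ inside IAI. By the second guarantee, because $\epsilon_{\tau^\ast}\le\Delta_i$, we have $\Pr[V]\le\delta_{\tau^\ast}$. A union bound then gives failure probability at most $\sum_{\tau=1}^{\tau^\ast-1}\delta_\tau+\delta_{\tau^\ast}=\sum_{\tau=1}^{\tau^\ast}\delta_\tau\le\sum_{\tau\ge 1}\delta_\tau=\delta$.

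Finally, on the complementary event IAI stops at some round $\le\tau^\ast$ having correctly inserted $i$, so by the sample-complexity part of Lemma~\ref{Lemma:TP-ATI} the total number of comparisons is at most $\sum_{\tau=1}^{\tau^\ast}O(\epsilon_\tau^{-2}\log(|S|/\delta_\tau))$. Substituting $\log(|S|/\delta_\tau)=\log(|S|/\delta)+2\log\tau+O(1)$, bounding $\log\tau\le\log\tau^\ast=O(\log\log\Delta_i^{-1})$, and using $\sum_{\tau=1}^{\tau^\ast}\epsilon_\tau^{-2}=O(\Delta_i^{-2})$ from the first step yields the claimed $O(\Delta_i^{-2}(\log\log\Delta_i^{-1}+\log(|S|/\delta)))$ bound.

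I expect the main obstacle to be the probability accounting in the middle paragraph: one must verify that the single family of budgets $\{\delta_\tau\}$ simultaneously pays for ruling out \emph{wrong} insertions in all earlier rounds and for \emph{forcing} a correct insertion at the critical round $\tau^\ast$, all within total budget $\delta$. The key observation that makes this work is that the round-$\tau^\ast$ failure event $V$ already subsumes a wrong insertion at $\tau^\ast$ (and also the benign \emph{unsure} outcome there), so $\delta_{\tau^\ast}$ is never double-counted and the telescoped sum is exactly $\sum_{\tau\le\tau^\ast}\delta_\tau\le\delta$.
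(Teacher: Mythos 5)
Your proposal is correct and follows essentially the same route as the paper's proof: both decompose the failure event round-by-round (wrong insertions ruled out via the first guarantee of Lemma~\ref{Lemma:TP-ATI}, a forced correct insertion at the first round with $\epsilon_\tau\le\Delta_i$ via the second), union-bound with the budgets $\delta_\tau=6\delta/(\pi^2\tau^2)$, and then bound the sample complexity by the geometric sum $\sum_{\tau\le\tau^\ast}\epsilon_\tau^{-2}\log(|S|/\delta_\tau)=O(\Delta_i^{-2}(\log\log\Delta_i^{-1}+\log(|S|/\delta)))$ using $\tau^\ast=O(\log\Delta_i^{-1})$. The only difference is cosmetic bookkeeping (you truncate the union at the critical round $\tau^\ast$, while the paper takes the union over all rounds), which does not change the argument.
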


\algtext{EndFor}{\textbf{end for}}

\begin{restatable}[Theoretical Performance of IIR]{theorem}{RestateTPIIR}\label{Theorem:TP-IIR}
	With probability at least $1-\delta$, IIR returns the exact ranking of $[n]$, and conducts at most $O(\sum_{i\in[n]}{{\Delta_i^{-2}}(\log\log{\Delta_i^{-1}}+\log({n}/\delta))})$ comparisons.
\end{restatable}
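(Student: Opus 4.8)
The plan is to treat IIR as an insertion sort whose insertion step is the subroutine IAI, and to derive both the correctness and the sample-complexity guarantees directly from Lemma~\ref{Lemma:TP-IAI} via a union bound over the $n-1$ insertions. First I would record the structure of the algorithm: IIR initializes $Ans$ with the single item $S[1]$ and then, for $t=2,\ldots,n$, calls IAI$(S[t],Ans,\delta/(n-1))$ to insert $S[t]$ into the current list. If every one of these $n-1$ insertions places its item into the correct interval, then by induction on $t$ the list $Ans$ remains a correctly sorted sublist of the true ranking at every step, and after the last insertion $Ans$ is exactly the true ranking of $[n]$. Hence it suffices to control the probability that any single insertion fails.

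For correctness I would invoke Lemma~\ref{Lemma:TP-IAI} with confidence parameter $\delta/(n-1)$: each call correctly inserts its item with probability at least $1-\delta/(n-1)$, so a union bound over the $n-1$ calls gives that all insertions succeed with probability at least $1-(n-1)\cdot\delta/(n-1)=1-\delta$. For the sample complexity, Lemma~\ref{Lemma:TP-IAI} states that inserting item $i$ into a list $S=Ans$ of size $|S|\le n-1$ with confidence $\delta/(n-1)$ costs $O(\Delta_i^{-2}(\log\log\Delta_i^{-1}+\log(|S|(n-1)/\delta)))$ comparisons; since $|S|(n-1)/\delta\le (n-1)^2/\delta<n^2/\delta$, the logarithmic factor simplifies as $\log(|S|/(\delta/(n-1)))\le 2\log n+\log(1/\delta)=O(\log(n/\delta))$. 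Summing the per-insertion bound $O(\Delta_i^{-2}(\log\log\Delta_i^{-1}+\log(n/\delta)))$ over the $n-1$ inserted items, which is at most the sum over all $i\in[n]$, yields the claimed total $O(\sum_{i\in[n]}\Delta_i^{-2}(\log\log\Delta_i^{-1}+\log(n/\delta)))$.

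The one point that needs care---and which I expect to be the main (if modest) obstacle---is that Lemma~\ref{Lemma:TP-IAI} guarantees a correct insertion only when the target list $Ans$ is itself a correctly sorted sublist of the true ranking, whereas in IIR the list $Ans$ is correctly sorted only if all earlier insertions succeeded. To make the union bound rigorous I would argue by the first point of failure: let $F_t$ be the event that insertions $2,\ldots,t-1$ are all correct and within their individual comparison budgets while insertion $t$ is the first one that is either incorrect or exceeds its budget. On $F_t$ the list $Ans$ is correctly sorted at the moment of the $t$-th call, so Lemma~\ref{Lemma:TP-IAI} applies and bounds the conditional probability of a bad $t$-th insertion by $\delta/(n-1)$. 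The global bad event---wrong output, or total comparisons exceeding the sum of the per-insertion budgets---is contained in $\bigcup_t F_t$, because the total budget can be exceeded only if some individual budget is, so $\sum_t P(F_t)\le \delta$ delivers both claims simultaneously on the complementary event. The remaining steps---bounding $|S|$ by $n-1$, the logarithm simplification, and the final summation---are routine.
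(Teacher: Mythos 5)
Your proposal is correct and takes essentially the same route as the paper's proof: invoke Lemma~\ref{Lemma:TP-IAI} for each of the $n-1$ calls with confidence $\delta/(n-1)$, union-bound the failure probabilities to get $1-\delta$ overall correctness, and sum the per-insertion complexities (using $|Ans|\leq n-1$ so that $\log(|Ans|(n-1)/\delta) = O(\log(n/\delta))$). Your first-point-of-failure decomposition, which handles the fact that Lemma~\ref{Lemma:TP-IAI} presupposes a correctly sorted $Ans$, is a legitimate tightening of the union bound that the paper's terse proof leaves implicit, but it is a refinement of the same argument rather than a different approach.
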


\textbf{Remark:} We can see that the upper bounds of IIR depend on the values of $(\Delta_i,i\in[n])$ while the lower bounds given in Theorem~\ref{Theorem:LB} depend on the values of $(\tilde{\Delta}_i,i\in[n])$. Without SST, it is possible $\tilde{\Delta}_i < \Delta_i$, but if SST holds, then our algorithm is optimal up to a constant factor given $\delta \preceq 1/poly(n)$, or $\max_{i,j\in[n]}\tilde{\Delta}_i/\tilde{\Delta}_j \preceq O(n^{1/2-p})$ for some constant $p>0$. According to \cite{MaxingAndRanking2017,RankingLimits2018,Falahatgar2017}, ranking without the SST condition can be much harder than that with SST , and it remains an open problem whether our upper bound is tight or not when the SST condition does not hold.

\section{Numerical results}\label{Sec:NR}
In this section, we provide numerical results to demonstrate the efficacy of our proposed IIR algorithm. The code can be found in our GitHub page\footnote{https://github.com/WenboRen/ranking-from-noisy-comparisons}. 

We compare IIR with: (i) Active-Ranking (AR) \cite{ActiveRanking2019}, which focuses on the Borda-Score model and is not directly comparable to our algorithm. We use it as an example to show that although Borda-Score ranking may be the same as exact ranking, for finding the exact ranking, the performance of Borda-Score algorithms is not always as good as that for finding the Borda-Ranking \footnote{For instance, when $p_{r_i,r_j} = 1/2 + \Delta$ for all $i < j$, the Borda-Score of item $r_i$ is $\frac{1}{n-1}\sum_{j\neq i}p_{r_i,r_j}=1/2 + \frac{n+1-2i}{n-1}\Delta$, and $\Delta_{r_i} = \Theta(1/n)$. Thus, by \cite{ActiveRanking2019}, the sample complexity of AR is at least $O(n^3\log{n})$.}; (ii) PLPAC-AMPR \cite{OnlineRankingElicitation2015}, an algorithm for PAC ranking under the MNL model. By setting the parameter $\epsilon=0$, it can find the exact ranking with $O((n\log{n})\max_{i\in[n]}\Delta_i^{-2}\log(n\Delta_i^{-1}\delta^{-1}))$ comparisons, higher than our algorithm by at least a log factor; (iii) UCB + Binary Search of \cite{NoisyComputing1994}. In the Binary Search algorithm of \cite{NoisyComputing1994}, a subroutine that ranks two items with a constant confidence is required. In \cite{NoisyComputing1994}, it assumes the value of $\Delta_{\min} = \min_{i\in[n]}\Delta_i$ is priorly known, and the subroutine is simply comparing two items for $\Theta(\Delta_{\min}^{-2})$ times and returns the item that wins more. In this paper, the value of $\Delta_{\min}$ is not priorly known, and here, we use UCB algorithms such as LUCB \cite{KLLUCB2013} to play the role of the required subroutine. The UCB algorithms that we use include Hoeffding-LUCB \cite{Hoeffding1963,KLLUCB2013}, KL-LUCB \cite{KLConcentration1989,KLLUCB2013}, and lil'UCB \cite{LIL2014}. For Hoeffding-LUCB and KL-LUCB, we choose $\gamma = 2$. For lil'UCB, we choose $\epsilon = 0.01$, $\beta = 1$, and $\lambda = (\frac{2+\beta}{\beta})^2$.\footnote{We do not choose the combination ($\epsilon = 0$, $\beta = 1$, and $\lambda = 1 + 10/n$) that has a better practical performance because this combination does not have theoretical guarantee, making the comparison in some sense unfair.} Readers can find the source codes in supplementary material.

\textbf{Experiment Setup.} The experiments are conducted on three different types of instances. To simplify notation, we use $r_1\succ\!r_2\succ\!\cdots\succ\!r_n$ to denote the true ranking, and let $\Delta = 0.1$. (i) Type-Homo: For any $r_i\succ\!r_j$, $p_{r_i,r_j} = 1/2 + \Delta$. (ii) Type-MNL: The preference score of $r_i$ (i.e., $\theta_{r_i}$) is generated by taking an independent instance of Uniform$([0.9*1.5^{n-i}, 1.1*1.5^{n-i}])$. By this, for any $i$, $\Delta_i$ is around $0.1$. (iii) Type-Random: For any $r_i \succ\!r_j$, $p_{r_i,r_j}$ is generated by taking an independent instance of Uniform$([0.5 + 0.8\Delta, 0.5 + 1.5\Delta])$. By this, for any $i$, $\Delta_i$ is around $0.1$. 

The numerical results for these three types are presented in Figure~\ref{fig:NR}~(a)-(c), respectively. For all simulations, we input $\delta = 0.01$. Every point of every figure is averaged over 100 independent trials. In every figure, for the same $n$-value, the algorithms are tested on an identical input instance.

\begin{figure}[h]\centering
	\begin{subfigure}[b]{0.32\textwidth}
		\includegraphics[scale=0.5]{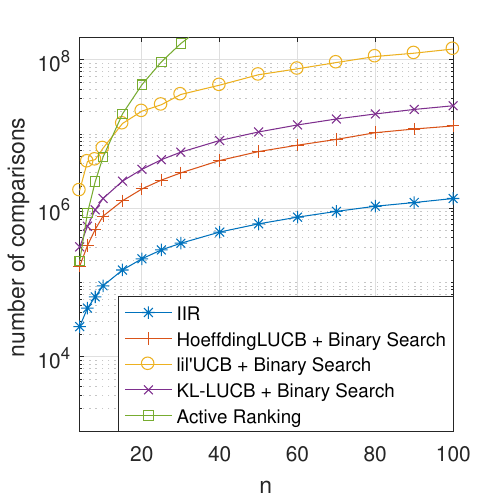}
		\caption{Type-Homo.}
	\end{subfigure}\ \ 
	\begin{subfigure}[b]{0.32\textwidth}
		\includegraphics[scale=0.5]{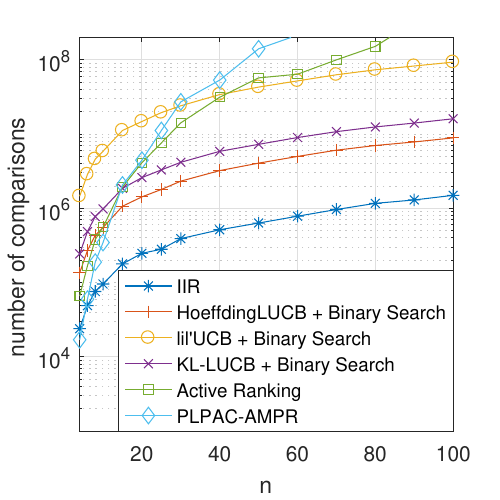}
		\caption{Type-MNL.}
	\end{subfigure}\ \ 
	\begin{subfigure}[b]{0.32\textwidth}
		\includegraphics[scale=0.5]{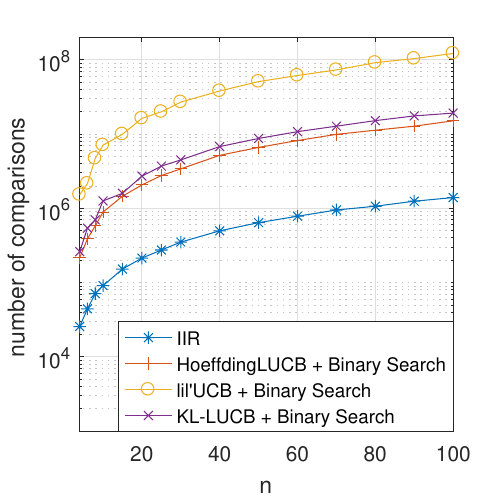}
		\caption{Type-Random.}
	\end{subfigure}
	\caption{Comparisons between IIR and existing methods.}\label{fig:NR}
\end{figure}

From Figure~\ref{fig:NR}, we can see that our algorithm significantly outperforms the existing algorithms. 
We can also see that the sample complexity of IIR scales with $n\log{n}$, which is consistent with our theoretical results. 
There are some insights about the practical performance of IIR. First, in Lines~3 and 4 of ATC and Lines~9 and 10 of ATI, we use LUCB-like \cite{KLLUCB2013} designs to allow the algorithms return before completing all required iterations, which does not improve the theoretical upper bound but can improve the practical performance.
Second, in the theoretical analysis, we only show that ATI correctly inserts an item $i$ with high probability when inputting $\epsilon\leq \Delta_i$, but the algorithm may return before $\epsilon$ being that small, making the practical performance better than what the theoretical upper bound suggests.  

\section{Conclusion}
In this paper, we investigated the theoretical limits of exact ranking with minimal assumptions. 
We do not assume any prior knowledge of the comparison probabilities and gaps, and derived the lower bounds and upper bound for instances with unequal noise levels. We also derived the model-specific pairwise and listwise lower bound for the MNL model, which further shows that in the worst case, listwise ranking is no more efficient than pairwise ranking in terms of sample complexity. 
The iterative-insertion-ranking (IIR) algorithm proposed in this paper indicates that our lower bounds are optimal under strong stochastic transitivity (SST) and some mild conditions. Numerical results suggest our ranking algorithm significantly outperforms existing works in the literature.

\bibliography{optimalranking}

\begin{thebibliography}{}

\bibitem[Agarwal et~al., 2017]{LimitedRounds2017}
Agarwal, A., Agarwal, S., Assadi, S., and Khanna, S. (2017).
\newblock Learning with limited rounds of adaptivity: {C}oin tossing,
  multi-armed bandits, and ranking from pairwise comparisons.
\newblock In {\em Conference on Learning Theory}, pages 39--75.

\bibitem[Arratia and Gordon, 1989]{KLConcentration1989}
Arratia, R. and Gordon, L. (1989).
\newblock Tutorial on large deviations for the binomial distribution.
\newblock {\em Bulletin of Mathematical Biology}, 51(1):125--131.

\bibitem[Baltrunas et~al., 2010]{RecommendationSystem2010}
Baltrunas, L., Makcinskas, T., and Ricci, F. (2010).
\newblock Group recommendations with rank aggregation and collaborative
  filtering.
\newblock In {\em ACM Conference on Recommender Systems}, pages 119--126. ACM.

\bibitem[Chen and Li, 2015]{ChenBestArmIdentification2015}
Chen, L. and Li, J. (2015).
\newblock On the optimal sample complexity for best arm identification.
\newblock {\em arXiv preprint arXiv:1511.03774}.

\bibitem[Chen et~al., 2013]{CrowdSourcing2013}
Chen, X., Bennett, P.~N., Collins-Thompson, K., and Horvitz, E. (2013).
\newblock Pairwise ranking aggregation in a crowdsourced setting.
\newblock In {\em ACM International Conference on Web Search and Data Mining},
  pages 193--202. ACM.

\bibitem[Chen et~al., 2018]{MNLlistwise2018}
Chen, X., Li, Y., and Mao, J. (2018).
\newblock A nearly instance optimal algorithm for top-k ranking under the
  multinomial logit model.
\newblock In {\em Proceedings of the Twenty-Ninth Annual ACM-SIAM Symposium on
  Discrete Algorithms}, pages 2504--2522. SIAM.

\bibitem[Chen et~al., 2019]{BothOptimal2017}
Chen, Y., Fan, J., Ma, C., and Wang, K. (2019).
\newblock Spectral method and regularized {MLE} are both optimal for top-k
  ranking.
\newblock {\em The Annals of Statistics}, 47(4):2204.

\bibitem[Chen and Suh, 2015]{SpectralMLE2015}
Chen, Y. and Suh, C. (2015).
\newblock Spectral {MLE}: {T}op-k rank aggregation from pairwise comparisons.
\newblock In {\em International Conference on Machine Learning}, pages
  371--380.

\bibitem[Conitzer and Sandholm, 2005]{SocialChoice2005Conitzer}
Conitzer, V. and Sandholm, T. (2005).
\newblock Communication complexity of common voting rules.
\newblock In {\em ACM Conference on Electronic Commerce}, pages 78--87. ACM.

\bibitem[Cover and Thomas, 1991]{InformationTheory1991}
Cover, T. and Thomas, J. (1991).
\newblock {\em Elements of Information Theory}.
\newblock John Wiley \& Sons.

\bibitem[Dwork et~al., 2001]{WebSearch2001}
Dwork, C., Kumar, R., Naor, M., and Sivakumar, D. (2001).
\newblock Rank aggregation methods for the web.
\newblock In {\em International Conference on World Wide Web}. ACM.

\bibitem[Falahatgar et~al., 2017a]{MaxingAndRanking2017}
Falahatgar, M., Hao, Y., Orlitsky, A., Pichapati, V., and Ravindrakumar, V.
  (2017a).
\newblock Maxing and ranking with few assumptions.
\newblock In {\em Advances in Neural Information Processing Systems}, pages
  7060--7070.

\bibitem[Falahatgar et~al., 2018]{RankingLimits2018}
Falahatgar, M., Jain, A., Orlitsky, A., Pichapati, V., and Ravindrakumar, V.
  (2018).
\newblock The limits of maxing, ranking, and preference learning.
\newblock In {\em International Conference on Machine Learning}, pages
  1427--1436. PMLR.

\bibitem[Falahatgar et~al., 2017b]{Falahatgar2017}
Falahatgar, M., Orlitsky, A., Pichapati, V., and Suresh, A.~T. (2017b).
\newblock Maximum selection and ranking under noisy comparisons.
\newblock In {\em International Conference on Machine Learning}, pages
  1088--1096.

\bibitem[Fano and Wintringham, 1961]{Fano1961}
Fano, R.~M. and Wintringham, W. (1961).
\newblock Transmission of information.
\newblock {\em Physics Today}, 14(12):56.

\bibitem[Farrell, 1964]{RatioTest1964}
Farrell, R.~H. (1964).
\newblock Asymptotic behavior of expected sample size in certain one sided
  tests.
\newblock {\em The Annals of Mathematical Statistics}, pages 36--72.

\bibitem[Feige et~al., 1994]{NoisyComputing1994}
Feige, U., Raghavan, P., Peleg, D., and Upfal, E. (1994).
\newblock Computing with noisy information.
\newblock {\em SIAM Journal on Computing}, 23(5):1001--1018.

\bibitem[Heckel et~al., 2019]{ActiveRanking2019}
Heckel, R., Shah, N.~B., Ramchandran, K., Wainwright, M.~J., et~al. (2019).
\newblock Active ranking from pairwise comparisons and when parametric
  assumptions do not help.
\newblock {\em The Annals of Statistics}, 47(6):3099--3126.

\bibitem[Heckel et~al., 2018]{ApproximateRanking2018}
Heckel, R., Simchowitz, M., Ramchandran, K., and Wainwright, M.~J. (2018).
\newblock Approximate ranking from pairwise comparisons.
\newblock In {\em International Conference on Artificial Intelligence and
  Statistics}, pages 1057--1066.

\bibitem[Hoeffding, 1994]{Hoeffding1963}
Hoeffding, W. (1994).
\newblock Probability inequalities for sums of bounded random variables.
\newblock In {\em The Collected Works of Wassily Hoeffding}, pages 409--426.
  Springer.

\bibitem[Jamieson et~al., 2014]{LIL2014}
Jamieson, K., Malloy, M., Nowak, R., and Bubeck, S. (2014).
\newblock lil{'UCB}: {A}n optimal exploration algorithm for multi-armed
  bandits.
\newblock In {\em Conference on Learning Theory}, pages 423--439.

\bibitem[Jang et~al., 2017]{ListwisePL2017}
Jang, M., Kim, S., Suh, C., and Oh, S. (2017).
\newblock Optimal sample complexity of {M}-wise data for top-{K} ranking.
\newblock In {\em Advances in Neural Information Processing Systems}, pages
  1686--1696.

\bibitem[Kalyanakrishnan and Stone, 2010]{Halving2010}
Kalyanakrishnan, S. and Stone, P. (2010).
\newblock Efficient selection of multiple bandit arms: {T}heory and practice.
\newblock In {\em International Conference on Machine Learning}, pages
  511--518.

\bibitem[Kalyanakrishnan et~al., 2012]{LowerBound2012}
Kalyanakrishnan, S., Tewari, A., Auer, P., and Stone, P. (2012).
\newblock {PAC} subset selection in stochastic multi-armed bandits.
\newblock In {\em International Conference on Machine Learning}, pages
  227--234.

\bibitem[Katariya et~al., 2018]{CoarseRanking2018}
Katariya, S., Jain, L., Sengupta, N., Evans, J., and Nowak, R. (2018).
\newblock Adaptive sampling for coarse ranking.
\newblock In {\em International Conference on Artificial Intelligence and
  Statistics}, pages 1839--1848.

\bibitem[Kaufmann and Kalyanakrishnan, 2013]{KLLUCB2013}
Kaufmann, E. and Kalyanakrishnan, S. (2013).
\newblock Information complexity in bandit subset selection.
\newblock In {\em Conference on Learning Theory}, pages 228--251. PMLR.

\bibitem[Luce, 2012]{Luce2012}
Luce, R.~D. (2012).
\newblock {\em Individual choice behavior: {A} theoretical analysis}.
\newblock Courier Corporation.

\bibitem[Mannor and Tsitsiklis, 2004]{FKLowerBound2004}
Mannor, S. and Tsitsiklis, J.~N. (2004).
\newblock The sample complexity of exploration in the multi-armed bandit
  problem.
\newblock {\em Journal of Machine Learning Research}, 5(Jun):623--648.

\bibitem[Mohajer and Suh, 2016]{Mohajer2016active}
Mohajer, S. and Suh, C. (2016).
\newblock Active top-k ranking from noisy comparisons.
\newblock In {\em Annual Allerton Conference on Communication, Control, and
  Computing (Allerton)}, pages 875--882. IEEE.

\bibitem[Negahban et~al., 2017]{RankCentrarity2016}
Negahban, S., Oh, S., and Shah, D. (2017).
\newblock Rank centrality: {R}anking from pairwise comparisons.
\newblock {\em Operations Research}, pages 266--287.

\bibitem[Pfeiffer et~al., 2012]{AdaptivePooling2012}
Pfeiffer, T., Xi, A., Gao, A., Mao, Y., Chen, and Rand, D.~G. (2012).
\newblock Adaptive polling for information aggregation.
\newblock In {\em {AAAI} Conference on Artificial Intelligence}.

\bibitem[Ren et~al., 2018]{RankingBounds2018}
Ren, W., Liu, J., and Shroff, N.~B. (2018).
\newblock {PAC} ranking from pairwise and listwise queries: {L}ower bounds and
  upper bounds.
\newblock {\em arXiv preprint arXiv:1806.02970}.

\bibitem[Ren et~al., 2019]{QuantileBandit2019}
Ren, W., Liu, J., and Shroff, N.~B. (2019).
\newblock Exploring $ k $ out of top $\rho$ fraction of arms in stochastic
  bandits.
\newblock In {\em International Conference on Artificial Intelligence and
  Statistics}, pages 2820--2828. PMLR.

\bibitem[Saha and Gopalan, 2019a]{Subsetwise2019}
Saha, A. and Gopalan, A. (2019a).
\newblock Active ranking with subset-wise preferences.
\newblock In {\em International Conference on Artificial Intelligence and
  Statistics}, pages 3312--3321.

\bibitem[Saha and Gopalan, 2019b]{ListwisePLMaxing2019}
Saha, A. and Gopalan, A. (2019b).
\newblock From {PAC} to instance-optimal sample complexity in the
  {P}lackett-{L}uce model.
\newblock {\em arXiv preprint arXiv:1903.00558}.

\bibitem[Shah et~al., 2016]{TransitivityModel2016}
Shah, N., Balakrishnan, S., Guntuboyina, A., and Wainwright, M. (2016).
\newblock Stochastically transitive models for pairwise comparisons:
  Statistical and computational issues.
\newblock In {\em International Conference on Machine Learning}, pages 11--20.
  PMLR.

\bibitem[Shah and Wainwright, 2017]{Simple2017}
Shah, N.~B. and Wainwright, M.~J. (2017).
\newblock Simple, robust and optimal ranking from pairwise comparisons.
\newblock {\em Journal of Machine Learning Research}, 18(1):7246--7283.

\bibitem[Szörényi et~al., 2015]{OnlineRankingElicitation2015}
Szörényi, B., Busa-Fekete, R., Paul, A., and Hüllermeier, E. (2015).
\newblock Online rank elicitation for {P}lackett-{L}uce: {A} dueling bandits
  approach.
\newblock In {\em Advances in Neural Processing Systems}, pages 604--612.

\end{thebibliography}
\clearpage

\begin{appendices}
	
	{\LARGE Supplementary material}
	
	\section{Further discussions}\label{Sec:FRDiscussions}
	
	\subsection{Non-$\delta$-correct algorithms}
	In Section~\ref{Sec:Intro}, we define the notion of $\delta$-correct algorithms, which return correct results with probability at least $1-\delta$ for any input instances satisfying assumptions A1 to A3 (defined in Section~\ref{Sec:Intro}). It is reasonable to consider $\delta$-correct algorithms since we may not want an algorithm that performs pretty well on some instances but badly on others. However, to give better insights about $\delta$-correct algorithms and the lower bounds in Theorem~\ref{Theorem:LB}, we give an algorithm that is not $\delta$-correct and has sample complexity lower than Theorem~\ref{Theorem:LB} for a specific class of instances.
	
	\begin{example}[A non-$\delta$-correct algorithm]\label{Example:non}
		$\mathcal{A}$ is an algorithm for ranking $3$ items. It views each pair of items as a coin, and calls KL-LUCB \cite{KLLUCB2013} to find the pair $(i,j)$ with the largest $p_{i,j}$-value. Then, it claims that $i$ is the most preferred item and $j$ is the worst. Obviously, $\mathcal{A}$ is not $\delta$-correct for ranking $3$ items. However, for an instance with $p_{r_1,r_2}=1/2+\Delta$, $p_{r_1,r_3}=1-\Delta$, and $p_{r_2,r_3}=1-2\Delta$, where $r_1\succ r_2\succ r_3$ is the unknown true ranking and $\Delta\in(0,1/6)$ is unknown, with probability at least $1-\delta$, algorithm $\mathcal{A}$ finds its true ranking by using $O({\Delta^{-1}}\log(\Delta^{-1}\delta^{-1}))$ comparisons.
	\end{example}
	
	To see this upper bound, we first define some notations. For $p,q\in[0,1]$, the KL-Divergence \cite{InformationTheory1991} between them is defined as $d(q,p) := D_{KL}(q||p)=q\log\frac{q}{p}+(1-q)\log\frac{1-q}{1-p}$. The Chernoff-Information \cite{KLLUCB2013} between them is defined as $d^*(q,p):=d(z^*,p)=d(z^*,q)$, where $z^*$ is the unique $z$ such that $d(z,p)=d(z,q)$. According to \cite[Theorem~3]{KLLUCB2013}, the algorithm KL-LUCB distinguishes two coins (Bernoulli arms) with mean rewards $\lambda$ and $\mu$ by taking $O(\frac{1}{d^*(\lambda,\mu)}\log\frac{1}{\delta d^*(\lambda,\mu)})$ samples. In this instance, we observe that for a constant $c>1$, $d(c\Delta,\Delta)=\Theta(\Delta)$. Thus, we have $d^*(1-2\Delta,1-\Delta) = d^*(2\Delta,\Delta)=\Theta(\Delta)$. Hence, KL-LUCB distinguishes $p_{r_1,r_3}$ and $p_{r_2,r_3}$ by $O(\Delta^{-1}\log(\delta^{-1}\Delta^{-1}))$ comparisons. Since the gap between $p_{r_1,r_2}$ and $p_{r_1,r_3}$ is even larger, they can also be distinguished by the above number of comparisons. This shows the upper bound, which suggests that the $\Delta_{i}^{-2}$ term is not necessary for non-$\delta$-correct algorithms.
	
	We note that $\mathcal{A}$ does not need any information of this instance a priori to run. Although it is not $\delta$-correct, it can solve this class of instances with sample complexity lower than Theorem~\ref{Theorem:LB}. However, in general, this algorithm may be of no sense as it only works for a restricted class of instances. This is the reason why we want to bound the sample complexity of $\delta$-correct algorithms but not that of arbitrary ones, as there may always exist non-$\delta$-correct algorithms that have extremely good performance on some restricted class of instances.
	
	
	\subsection{An instance where Eq. (\ref{Eq:LB}) does not hold as a lower bound}
	When $\delta$ is a positive constant and $\max_{i,j\in[n]}\tilde{\Delta}_i/\tilde{\Delta}_j \succ \sqrt{n}$, the lower bound given in Eq.~(\ref{Eq:LB}) may not hold. In this subsection, we give an example such that Eq.~(\ref{Eq:LB} does not hold as a lower bound.
	
	\begin{example}[An example that Eq.~(\ref{Eq:LB}) does not hold as a lower bound]
		Assume that $r_1\succ\!r_2\succ\!\cdots\succ\!r_n$ is the unknown true ranking. Suppose $\delta = 1/4$, $\Delta_{r_1,r_2} = n^{-10}$ and $\Delta_{r_i,r_j} = 0.01$ for all $\{r_i,r_j\} \neq \{r_1,r_2\}$. For this instance, there is a $(1/4)$-correct algorithm that finds its true ranking with confidence $3/4$ by $O(n^{20}\log\log{n}+n^2\log{n})$ comparisons, which is lower than Eq.~(\ref{Eq:LB}): $\tilde{\Omega}(n^{20}\log{n} + n\log{n})$. This implies that Eq.~(\ref{Eq:LB}) does not hold as a lower bound in this case.
	\end{example}
	
	To see the upper bound, we can view each pair as a coin (aka Bernoulli arms), and then use lil'UCB \cite{LIL2014} to find the pair with the least gap (i.e., $\Delta_{i,j}$) with confidence $11/12$. According to \cite{LIL2014}, this step takes $O(n)$ comparisons. Then, we rank the pair with the smallest gap with $11/12$ confidence. This step takes $O(\Delta_{r_1,r_2}^{-2}\log\log\Delta_{r_1,r_2}^{-1}) = O(n^{20}\log\log{n})$ comparisons. Finally, we rank all other pairs with $1-\frac{1}{12n^2}$ confidence for each, and this step takes $O(n^2\log{n})$ comparisons. After ranking all pairs of items, the true ranking is found, and thus, the total sample complexity is $O(n^{20}\log\log{n}+n^2\log{n})$.
	
	For this instance, the lower bound in Eq. (\ref{Eq:LB}) is $\tilde{\Omega}(n^{20}\log{n} + n\log{n})$, higher than the upper bound. Thus, when the given condition does not hold, the lower bound in Eq. (\ref{Eq:LB}) may not hold. However, there is at most a log gap, and the lower bound in Eq. (\ref{Eq:GLB}) does not need this condition.

\section{Proofs}\label{Sec:FRProofs}

\subsection{Proof of Theorem~\ref{Theorem:LB}}

\RestateLB*

\begin{proof}
	\textbf{Step 1} is to prove the lower bound for ranking two items, which is stated in Lemma~\ref{Lemma:LBTwo}. In the proof of Lemma~\ref{Lemma:LBTwo}, we will make use of the results in \cite{RatioTest1964,LIL2014,FKLowerBound2004}. The proof can be found in Section~\ref{Sec:LB2}
	
	\begin{restatable}[Lower bound for ranking two items]{lemma}{RestateLBTwo}\label{Lemma:LBTwo}
		Let $\delta\in(0,1/4)$ and $\delta$-correct algorithm $\mathcal{A}_2$ be given. Let $T_{\mathcal{A}_2}(\Delta_{i,j})$ be the number of comparisons conducted by $\mathcal{A}_2$ under the $\Delta_{i,j}$-values. To rank $i$ and $j$ with error probability no more than $\delta$, there is a universal constant $c_{lb2}>0$ such that
		\begin{align}\label{Eq:LogLogTwo}
		\limsup_{\Delta_{i,j}\rightarrow 0}\frac{\mathbb{E}[T_{\mathcal{A}_2}(\Delta_{i,j})]}{\Delta_{i,j}^{-2}(\log\log\Delta_{i,j}^{-2} + \log{\delta^{-1}})} \geq c_{lb2}.
		\end{align}
	\end{restatable}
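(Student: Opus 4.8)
The plan is to recognize that ranking the two items $i$ and $j$ is nothing but deciding the sign of $p_{i,j}-1/2$ from i.i.d.\ Bernoulli$(p_{i,j})$ observations. By assumption~A1, each comparison of $\{i,j\}$ is an independent draw returning $i$ with probability $p_{i,j}$, so $T_{\mathcal{A}_2}$ equals the number of coin tosses and $\Delta_{i,j}=|p_{i,j}-1/2|$ is the bias magnitude. I would then obtain the two additive terms in the denominator of \eqref{Eq:LogLogTwo} from two separate lower bounds on $\mathbb{E}[T_{\mathcal{A}_2}]$ and combine them through a maximum.

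For the $\log\delta^{-1}$ term I would run the standard two-point change-of-measure argument. Consider the symmetric pair of instances $p^{+}=1/2+\Delta_{i,j}$ and $p^{-}=1/2-\Delta_{i,j}$, which have opposite true rankings and both satisfy A1--A3. Since $\mathcal{A}_2$ is $\delta$-correct it must report the right order with probability at least $1-\delta$ under each, so by Wald's identity together with the relative-entropy contraction (transportation) inequality used in \cite{FKLowerBound2004}, one gets $\mathbb{E}[T_{\mathcal{A}_2}]\,d(p^{+},p^{-})\ge d(1-\delta,\delta)\ge \log\frac{1}{2.4\delta}$, where $d(\cdot,\cdot)$ is the binary KL-divergence. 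A short computation gives $d(1/2+\Delta_{i,j},1/2-\Delta_{i,j})=2\Delta_{i,j}\log\frac{1/2+\Delta_{i,j}}{1/2-\Delta_{i,j}}=\Theta(\Delta_{i,j}^{2})$ as $\Delta_{i,j}\to0$, whence $\mathbb{E}[T_{\mathcal{A}_2}]\ge c_{1}\,\Delta_{i,j}^{-2}\log\delta^{-1}$ for all sufficiently small $\Delta_{i,j}$.

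For the $\log\log\Delta_{i,j}^{-1}$ term I would invoke the asymptotic sequential-testing lower bound of \cite{RatioTest1964}, in the law-of-the-iterated-logarithm form sharpened in \cite{Lil2014}: for any test of $p<1/2$ against $p>1/2$ whose error probability stays below a fixed $\delta<1/2$, the expected sample size obeys $\liminf_{\Delta_{i,j}\to0}\mathbb{E}[T_{\mathcal{A}_2}]/(\Delta_{i,j}^{-2}\log\log\Delta_{i,j}^{-1})\ge c_{2}$ for a universal $c_{2}>0$. Intuitively this is the LIL barrier: the empirical mean of a fair coin fluctuates by order $\sqrt{(\log\log t)/t}$ infinitely often, so no adaptive stopping rule can certify the sign of a bias of size $\Delta_{i,j}$ before roughly $\Delta_{i,j}^{-2}\log\log\Delta_{i,j}^{-1}$ tosses. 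Because both bounds hold for every small $\Delta_{i,j}$, I would combine them as $\mathbb{E}[T_{\mathcal{A}_2}]\ge\max\{c_1\Delta_{i,j}^{-2}\log\delta^{-1},\,c_2\Delta_{i,j}^{-2}\log\log\Delta_{i,j}^{-1}\}\ge\tfrac12\min\{c_1,c_2\}\,\Delta_{i,j}^{-2}(\log\delta^{-1}+\log\log\Delta_{i,j}^{-1})$; using $\log\log\Delta_{i,j}^{-2}=\log2+\log\log\Delta_{i,j}^{-1}$ and dividing by the denominator of \eqref{Eq:LogLogTwo}, the limsup is at least $c_{lb2}=\tfrac12\min\{c_1,c_2\}$ up to the harmless $\log2$ adjustment.

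The main obstacle is the $\log\log$ term: the change-of-measure step is routine, but rigorously importing the LIL-type lower bound of \cite{RatioTest1964,Lil2014} into the present setting of an \emph{adaptive}, possibly randomized, $\delta$-correct stopping rule, and extracting a single universal constant $c_2$ valid throughout $\delta<1/4$, requires care. It is also worth emphasizing that, since the limsup is taken as $\Delta_{i,j}\to0$ with $\delta$ held fixed, the constant $\log\delta^{-1}$ is asymptotically dominated by $\log\log\Delta_{i,j}^{-2}$ in the denominator, so the binding constraint for the limsup is in fact the $\log\log$ bound; the $\log\delta^{-1}$ contribution is what becomes essential when this two-item bound is later summed, with shrinking per-pair confidence, to establish Theorem~\ref{Theorem:LB}.
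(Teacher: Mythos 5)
Your proposal shares the paper's two-ingredient skeleton --- a change-of-measure argument for the $\log\delta^{-1}$ term and an LIL-type sequential-testing bound for the $\log\log$ term, combined additively --- and the change-of-measure computation, the max-to-sum combination, and your closing observation that the limsup in Eq.~(\ref{Eq:LogLogTwo}) is driven entirely by the $\log\log$ term (since $\log\delta^{-1}$ is fixed while $\log\log\Delta_{i,j}^{-2}\to\infty$) are all correct. But that same observation means the $\log\log$ bound \emph{is} the lemma, and that is exactly where you have a gap rather than a proof: the result you invoke from \cite{RatioTest1964,Lil2014} is, in the form given in those references (and as the paper quotes it in Eq.~(\ref{Eq:GaussianLB})), a lower bound for determining the sign of the mean of \emph{Gaussian} observations. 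It does not apply verbatim to an adaptive, possibly randomized, $\delta$-correct procedure observing Bernoulli comparisons, and you explicitly acknowledge that importing it ``requires care'' without carrying that step out. What you have established rigorously is only the $\Delta_{i,j}^{-2}\log\delta^{-1}$ bound, which, as you note yourself, is asymptotically irrelevant to the limsup being bounded.

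The paper closes precisely this gap with a reduction in the direction needed for a lower bound: it supposes the two-item ranking algorithm $\mathcal{A}_2$ exists and uses it to solve the Gaussian sign-testing problem. Given Gaussian samples $Y^t$ with mean $\eta$, $0<|\eta|\le 1/2$, it thresholds them at zero; the induced Bernoulli observations $\mathds{1}\{Y^t\ge 0\}$ behave like tosses of a coin whose bias away from $1/2$ is at least $e^{-1/8}|\eta|$, so running $\mathcal{A}_2$ on these signs decides whether $\eta>0$ with probability $1-\delta$ using at most $T_{\mathcal{A}_2}(\eta\cdot e^{-1/8})$ Gaussian samples in expectation. The Gaussian LIL bound of Eq.~(\ref{Eq:GaussianLB}) then transfers to $\mathcal{A}_2$ with constant $e^{-1/4}(2-4\delta)$, and adding the $\Omega(\Delta_{i,j}^{-2}\log\delta^{-1})$ bound from \cite{FKLowerBound2004} finishes. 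To repair your argument you must either supply this reduction (note the direction: a solver for the comparison problem must yield a solver for the Gaussian problem, not the other way around) or prove a Bernoulli analogue of Farrell's theorem from scratch; without one of these, the central claim of the lemma remains unjustified.
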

	
	The authors of \cite{ChenBestArmIdentification2015} stated a stronger sample complexity lower bound for identifying the sign of the mean of a Gaussian arm than \cite{LIL2014} in their Theorem~D.1. By using Theorem~D.1 of \cite{ChenBestArmIdentification2015} and the same reduction from Gaussian arms to ranking from two items as that in the proof of Lemma~\ref{Lemma:LBTwo}, we get a stronger lower bound by similar steps, which is stated in Corollary~\ref{Corollary:LBTwo2}.
	\begin{corollary}[A stronger lower bound for ranking two items]\label{Corollary:LBTwo2}
	    Let $\delta\in(0,1/4)$ and $\delta$-correct algorithm $\mathcal{A}'_2$ be given. Let $T_{\mathcal{A}'_2}(\Delta_{i,j})$ be the number of comparisons conducted by $\mathcal{A}'_2$ under the $\Delta_{i,j}$-values. We have 
	    \begin{align}
	        \mathbb{E}[T_{\mathcal{A}'_2}(\Delta_{i,j})] = \tilde{\Omega}(\Delta_{i,j}^{-2}\log\log\Delta_{i,j}^{-1}) + \Omega(\Delta_{i,j}^{-2}\log\delta^{-1}).
	    \end{align}
	\end{corollary}
	
	Here we note that for two functions $f(\mathbf{x})$ and $g(\mathbf{y})$ where $\mathbf{x} \in \mathbb{R}^k$ and $\mathbf{y} \in \mathbb{R}^l$, we have 
	\begin{align}
	    \tilde{\Omega}(f(\mathbf{x})) + \tilde{\Omega}(g(\mathbf{y})) = \tilde{\Omega}(f(\mathbf{x} + g(\mathbf{y}))). \label{Eq:TildeAdditivity}
	\end{align}
	The proof is not complex. We let $\lambda(\mathbf{x}) = \tilde{\Omega}(f(\mathbf{x})) $ and $\mu(\mathbf{y}) = \tilde{\Omega}(g(\mathbf{y}))$. We use $X_N$ to denote the size of 
	\begin{align}
	    \{(i_1, i_2,...,i_k) \in [N]^k : \exists \mathbf{x} \in E_{i_1}\times E_{i_2} \times \cdots \times E_{i_k} \mbox{ such that } \lambda(\mathbf{x}) < c_0 f(\mathbf{x}) \}, \nonumber
	\end{align}
	and $Y_N$ be the similar thing for $\mu(\mathbf{y})$ and $g(\mathbf{y})$. 
	
	According to the definition of $\tilde{\Omega}(\cdot)$, for any $\gamma > 0$ we have
	\begin{align}
	    \lim_{N\rightarrow\infty} \frac{X_N}{N^{k-1+\gamma}} = 0, \mbox{ and } \lim_{N\rightarrow\infty} \frac{Y_N}{N^{l-1+\gamma}} = 0. \nonumber
	\end{align}
	Also, the size of 
	\begin{align}
	    \{& (i_1, i_2,...,i_k, j_1, j_2,..., j_l) \in [N]^{k+l} : \nonumber \\
	    & \exists \mathbf{x} \in E_{i_1}\times E_{i_2} \times \cdots \times E_{i_k} \mbox{ such that } \lambda(\mathbf{x}) < c_0 f(\mathbf{x}) \nonumber \\
	    & \mbox{ or } \exists \mathbf{y} \in E_{j_1}\times E_{j_2} \times \cdots \times E_{j_l} \mbox{ such that } \mu(\mathbf{y}) < c_0 g(\mathbf{y})\} \nonumber
	\end{align}
	is upper bounded by $X_N N^l + Y_N N^k$, which, for any $\gamma > 0$, has
	\begin{align}
	    \lim_{N \rightarrow \infty} \frac{X_N N^l + Y_N N^k}{N^{k + l - 1 + \gamma}} = 0. \nonumber
	\end{align}
	Therefore, Eq~(\ref{Eq:TildeAdditivity}) holds.
	
	\textbf{Step 2 }is to define problems $\mathcal{P}_1$ and $\mathcal{P}_2$. Let $(r_1,r_2,...,r_n)$ be a given permutations of $[n]$ and assume that $q_1\succ\!q_2\succ\!\cdots\succ\!q_n$ is the unknown true ranking. Assume that $n$ is odd (when $n$ is even, we can prove the same results similarly), and say $n=2m+1$. A pair $(r_i, r_j)$ is said to be \textit{significant} if there exists an $k$ in $[m]$ such that $\{r_i,r_j\} = \{r_{2k-1},r_{2k}\}$, and \textit{insignificant} otherwise.
	
	Define a set $\Pi:=\{0,1\}^m$. For any $\vec{\pi}=(\pi_1,\pi_2,...,\pi_m)\in\Pi$, define a corresponding hypothesis $\mathcal{H}_{\vec{\pi}}$ that claims: (i) the true ranking of $[n]$ is $s_1\succ\!s_2\succ\!\cdots\succ\!s_n$; (ii) $s_n=r_n$; (iii) for any $k\in[m]$, $(s_{2k-1},s_{2k})=(r_{2k-1},r_{2k})$ if $\pi_k=1$, and $(s_{2k-1},s_{2k})=(r_{2k},r_{2k-1})$ otherwise; (iv) for any insignificant pair $(r_i,r_j)$, the probability that $r_i$ wins a comparison over the pair $(r_i,r_j)$ is $p^{\vec{\pi}}_{r_i,r_j} = p_{r_i,r_j}$; (v) For any $k\in[m]$ and the corresponding significant pair $(r_{2k-1},r_{2k})$, the probability that $r_{2k-1}$ wins a comparison over the pair $(r_{2k-1},r_{2k})$ is $p^{\vec{\pi}}_{r_i,r_j} = 1/2 + \Delta_{r_{2k-1},r_{2k}}$ if $\pi_k = 1$, and is $(1/2 - \Delta_{r_{2k-1},r_{2k}})$ otherwise. In other words, $\mathcal{H}_{\vec{\pi}}$ claims a true ranking that is almost the same as $r_1\succ\!r_2\succ\!\cdots\succ\!r_n$ but the positions of $(r_{2k-1},r_{2k})$ are exchanged for all $k\in[m]$ such that ${\pi}_k=0$. E.g., for $n = 3$ and $\vec{\pi} = (0)$, $\mathcal{H}_{\vec{\pi}}$ claims that the true ranking is $r_2\succ\!r_1\succ\!r_3$, $p^{\vec{\pi}}_{r_1,r_2} = 1/2 - \Delta_{r_1,r_2}$, $p^{\vec{\pi}}_{r_1,r_3} = p_{r_1,r_3}$, and $p^{\vec{\pi}}_{r_2,r_3} = p_{r_2,r_3}$. 
	
	We further assume that there is a $\vec{\pi}_0\in\Pi$ such that $\mathcal{H}_{\vec{\pi}_0}$ is true, and each $\vec{\pi}\in\Pi$ has the same prior probability to be $\vec{\pi}_0$.
	
	\textbf{Problem $\mathbf{\mathcal{P}_1}$.}
	Knowing the fact that there exists a $\mathbf{\pi}^0\in \Pi$ such that $\mathcal{H}_{\vec{\pi}^0}$ is true, we want to find $\mathbf{\pi}^0$ with confidence $1-\delta$, and use as few comparisons as possible.
	
	Next, we start defining problem $\mathcal{P}_2$. An instance of $\mathcal{P}_2$ involves ${n \choose 2}$ coins, and each is indexed by an element of $\{(i,j):i,j\in[n]\land i < j\}$. We use $C_{i,j}$ to denote the coin indexed by $(i,j)$. For each coin $C_{i,j}$, each toss of it gives a head with probability $\mu_{i,j}$, and gives a tail with probability $1-\mu_{i,j}$. We name $\mu_{i,j}$ as the \textit{head probability} of coin $C_{i,j}$. We assume that the outcomes of tosses are independent across coins and time. Similar to the items, coin $C_{i,j}$ is said to be \textit{significant} if there is a $k$ such that $(i,j)=(2k-1,2k)$, and is \textit{insignificant} otherwise. We assume that for all insignificant coins $C_{i,j}$, $\mu_{i,j} = p_{r_i,r_j}$, and for all significant coins $C_{2k-1,2k}$, $\mu_{2k-1,2k} =  1/2 + \Delta_{r_{2k-1},r_{2k}}$ or $1/2 - \Delta_{r_{2k-1},r_{2k}}$, either has a prior probability $1/2$ to be true. 
	
	\textbf{Problem $\mathcal{P}_2$.}
	With probability $\geq 1-\delta$, we want to find whether $\mu_{2k-1,2k}>1/2$ for all $k\in[m]$.
	
	\textbf{Step 3} is to show the following lemma, which states that $\mathcal{P}_2$ can be reduced to $\mathcal{P}_1$, and $\mathcal{P}_1$ can be reduced to exact ranking. Its proof can be found in Section~\ref{Sec:PfReductions}.
	
	\begin{restatable}[Reductions]{lemma}{RestateReductions}\label{Lemma:P12Reductions}
		With the above definitions, (i) if the true ranking of $[n]$ is found, with no more comparisons, one can get the solution of $\mathcal{P}_1$, and (ii) if an algorithm solves $\mathcal{P}_1$ with $N$ expected number of comparison, there is another algorithm that solves $\mathcal{P}_2$ with $N$ expected number of tosses.
	\end{restatable}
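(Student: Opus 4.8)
The plan is to establish the two claims separately by exact simulations that introduce no additional comparisons, so that the work reduces to checking that the relabelings reproduce the correct distributions. For part~(i), I would first note that the assignment $\vec\pi\mapsto(\text{ranking asserted by }\mathcal{H}_{\vec\pi})$ is injective: the bit $\pi_k$ governs only the relative order of $r_{2k-1}$ and $r_{2k}$, which occupy the disjoint positions $2k-1$ and $2k$, while $r_n$ is always placed last. Since $\mathcal{P}_1$ stipulates that the true ranking coincides with $\mathcal{H}_{\vec\pi^0}$ for the unknown true $\vec\pi^0$, once an exact-ranking procedure has output the true ranking $s_1\succ\cdots\succ s_n$ one recovers $\vec\pi^0$ with no further comparisons by inverting this map: for each $k\in[m]$ inspect positions $2k-1,2k$ and set $\pi^0_k=1$ if they read $(r_{2k-1},r_{2k})$ and $\pi^0_k=0$ otherwise. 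This yields the solution to $\mathcal{P}_1$ directly.

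For part~(ii), given a $\delta$-correct solver $\mathcal{A}_1$ for $\mathcal{P}_1$ using $N$ expected comparisons, I would construct $\mathcal{A}_2$ for $\mathcal{P}_2$ by running $\mathcal{A}_1$ as a black box and answering each comparison query with a single coin toss: when $\mathcal{A}_1$ asks to compare items $r_i$ and $r_j$ with $i<j$, toss $C_{i,j}$ once and report ``$r_i$ wins'' on a head and ``$r_j$ wins'' on a tail. The central verification is that this serves $\mathcal{A}_1$ a genuine $\mathcal{P}_1$ instance: for an insignificant pair, $\mu_{i,j}=p_{r_i,r_j}=p^{\vec\pi}_{r_i,r_j}$ for every $\vec\pi$, so the head-probability matches the prescribed win-probability; for a significant pair, $\mu_{2k-1,2k}\in\{1/2+\Delta_{r_{2k-1},r_{2k}},\,1/2-\Delta_{r_{2k-1},r_{2k}}\}$ matches $p^{\vec\pi^0}_{r_{2k-1},r_{2k}}$ under the identification $\pi^0_k=1\iff\mu_{2k-1,2k}>1/2$. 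Moreover the uniform prior over $\Pi$ is exactly the product of independent fair sign choices for the significant coins, so the law of the instance fed to $\mathcal{A}_1$ equals that of a random $\mathcal{P}_1$ instance. When $\mathcal{A}_1$ halts with output $\hat{\vec\pi}$, $\mathcal{A}_2$ declares $\mu_{2k-1,2k}>1/2$ precisely when $\hat\pi_k=1$.

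It then remains to transfer both guarantees. Because $\mathcal{A}_1$ is $\delta$-correct for every true $\vec\pi^0$ and the simulation is exact, $\mathcal{A}_2$ correctly determines whether $\mu_{2k-1,2k}>1/2$ for all $k\in[m]$ with probability at least $1-\delta$ conditioned on each coin configuration, hence $\mathcal{A}_2$ is $\delta$-correct for $\mathcal{P}_2$. For the sample count, the correspondence is one-to-one per realization: each comparison consumes exactly one toss, so along every sample path the number of tosses of $\mathcal{A}_2$ equals the number of comparisons of $\mathcal{A}_1$, whence the two expectations coincide at $N$. I expect no deep difficulty; the one point demanding care---and the main ``obstacle''---is the bookkeeping of the preceding paragraph, namely confirming that the single-toss answers reproduce the exact $\mathcal{P}_1$ comparison law and that the $\Pi$-prior aligns with the per-coin sign prior, so that the correctness and expected-count guarantees of $\mathcal{A}_1$ carry over per realization rather than merely in expectation.
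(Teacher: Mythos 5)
Your proof is correct and follows essentially the same route as the paper's: part~(i) is the observation that the true ranking determines $\vec\pi^0$ bit-by-bit with no further comparisons, and part~(ii) is the black-box simulation in which each comparison query on $(r_i,r_j)$ is answered by a single toss of $C_{i,j}$, so that correctness transfers and the toss count equals the comparison count on every sample path. Your write-up is in fact somewhat more careful than the paper's in explicitly verifying that the simulated comparison law and the prior over $\Pi$ match, which the paper dispatches with the remark that the head probabilities are ``lawful'' so the $\mathcal{P}_1$ solver ``does not notice any abnormal.''
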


	\textbf{Step 4} is to prove the following lemma regarding the lower bound of problem $\mathcal{P}_2$. Its proof can be found in Section~\ref{Sec:PfLBP2}
	
	\begin{restatable}{lemma}{RestateLBPtwo}\label{Lemma:LBP2}
		For $\delta\in(0,1/12)$, the expected number of tosses needed for solving $\mathcal{P}_2$ is at least
		\begin{align}\label{Eq:LBP2_O1}
		&\tilde{\Omega}\Big(\sum_{k\in[m]}\Delta_{q_{2k-1},q_{2k}}^{-2}\cdot\log\log\Delta_{q_{2k-1},q_{2k}}^{-1}\Big) \nonumber \\
		& + \Omega\Big(\min\{\sum_{k\in[m]}{\Delta_{q_{2k-1},q_{2k}}^{-2}\cdot\log(\delta^{-1}_k)}:\sum_{k\in[m]}{\delta_k} \leq 2\delta\}\Big).
		\end{align}
	\end{restatable}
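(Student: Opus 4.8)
The plan is to exploit that $\mathcal{P}_2$ splits into $m$ essentially independent sign-determination problems: the insignificant coins have known head probabilities and carry no information about the quantities $\mathds{1}[\mu_{2k-1,2k}>1/2]$, so tosses spent on them are wasted, and for each significant coin $k$ the only informative observations are its own tosses. Writing $\Delta_k:=\Delta_{q_{2k-1},q_{2k}}$ and letting $N_k$ denote the number of tosses of the $k$-th significant coin, I would lower bound $\sum_k\mathbb{E}[N_k]$ by producing the two summands of Eq.~(\ref{Eq:LBP2_O1}) by separate arguments and then combining them through the elementary fact that a quantity bounded below by two numbers is at least half their sum.

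First, for the $\log\log$ term, I would apply the single-coin bound of Lemma~\ref{Lemma:LBTwo} (equivalently the law-of-iterated-logarithm lower bounds of \cite{FKLowerBound2004,Lil2014}) to each significant coin in isolation. Since determining $\sign(\mu_{2k-1,2k}-1/2)$ with any constant confidence already forces $\Omega(\Delta_k^{-2}\log\log\Delta_k^{-1})$ tosses of that coin, and tosses of distinct coins are distinct, summing yields $\sum_k\mathbb{E}[N_k]\succeq\sum_k\Delta_k^{-2}\log\log\Delta_k^{-1}$ with no interaction between coins.

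Second, for the confidence/allocation term, I would run a change-of-measure (Wald / KL-divergence) argument against single-sign-flip alternatives. For each $k$, compare the true instance with the instance $\nu^{(k)}$ obtained by flipping only coin $k$'s bias from $1/2\pm\Delta_k$ to $1/2\mp\Delta_k$; this changes the required answer in coordinate $k$ only, and $\mathrm{KL}(1/2+\Delta_k,1/2-\Delta_k)=\Theta(\Delta_k^2)$. Restricting the data-processing inequality to the indicator of the algorithm's verdict on coin $k$ gives $\mathbb{E}[N_k]\cdot\Theta(\Delta_k^2)\succeq\log(1/\delta_k)$, where $\delta_k$ is the per-coin error probability, i.e.\ $\mathbb{E}[N_k]\succeq\Delta_k^{-2}\log(1/\delta_k)$. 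To convert these per-coin inequalities into the minimization over $\{\sum_k\delta_k\le 2\delta\}$, I would treat the per-coin budgets as free variables and try to show that joint $\delta$-correctness, together with the independence of the coins, forces the budgets to be (essentially) feasible, so that summing the per-coin bounds and minimizing over feasible budgets reproduces the $\min\{\cdot\}$ term.

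The hard part is exactly this last aggregation. Because the algorithm may adaptively reallocate its sampling effort across coins depending on earlier outcomes, the per-coin error events need not be independent, and a naive union bound only gives each marginal error $\le\delta$ (hence $\sum_k\delta_k\le m\delta$, far weaker than $2\delta$); indeed one can build wasteful algorithms whose marginal per-coin errors are all $\Theta(\delta)$. The resolution is that such correlation can only be purchased with extra samples: on the event where the algorithm commits few tosses to coin $k$ its verdict there must still be reliable, so the \emph{conditional} per-coin reliability must scale like $\delta/m$, which is precisely what recovers the $n\log n$-type (entropy) contribution hidden inside $\min\{\cdot\}$. Making this tradeoff quantitative -- showing that either the budgets are feasible or the algorithm pays the allocation cost anyway -- is the technical heart of the proof. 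I would note that when $\delta\preceq 1/poly(n)$ the confidence piece $\sum_k\Delta_k^{-2}\log(1/\delta)$ already dominates the entropy piece, so in that regime the single-flip change-of-measure bounds suffice and the aggregation collapses to a clean sum.
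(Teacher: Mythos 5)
Your skeleton matches the paper's proof: a per-coin lower bound for each significant coin, followed by an aggregation over per-coin error budgets. Your first two steps are fine and essentially identical to the paper's, which reduces the sign decision for coin $C_{2k-1,2k}$ to ranking two items and invokes Lemma~\ref{Lemma:LBTwo} to get an $\Omega\big(\Delta_{q_{2k-1},q_{2k}}^{-2}(\log\log\Delta_{q_{2k-1},q_{2k}}^{-1}+\log(1/\delta_k))\big)$ bound per coin (you split this into a LIL part and a change-of-measure part, which is equivalent). The genuine gap is your third step: you never establish the constraint $\sum_{k\in[m]}\delta_k\leq 2\delta$ that defines the feasible set in the $\min\{\cdot\}$ term of Eq.~(\ref{Eq:LBP2_O1}). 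You correctly note that a union bound over the joint failure event only gives $\delta_k\leq\delta$ for each $k$, hence $\sum_k\delta_k\leq m\delta$, and then you explicitly defer the resolution (``making this tradeoff quantitative \dots is the technical heart of the proof''). But that deferred step \emph{is} the lemma: with budget $m\delta$ instead of $2\delta$, the minimization loses exactly the allocation/entropy contribution (the $n\log n$-type term highlighted in the remark after Theorem~\ref{Theorem:LB}), and what remains is only the weaker bound $\sum_k\Delta_{q_{2k-1},q_{2k}}^{-2}(\log\log\Delta_{q_{2k-1},q_{2k}}^{-1}+\log(1/\delta))$. So as written, the proposal does not prove the stated bound.

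For comparison, the paper closes this step by leaning on the product structure built into $\mathcal{P}_2$: the signs of the significant coins are independent and uniform under the prior and tosses are independent across coins, so the paper treats the $m$ sign decisions as independent subproblems whose joint success probability is $\prod_{k\in[m]}(1-\delta_k)$. Then $\delta$-correctness forces $\prod_{k\in[m]}(1-\delta_k)\geq 1-\delta$, and the elementary chain $\sum_{k\in[m]}\delta_k\leq-\sum_{k\in[m]}\log(1-\delta_k)=-\log\prod_{k\in[m]}(1-\delta_k)\leq-\log(1-\delta)\leq\delta/(1-\delta)\leq 2\delta$ delivers the budget constraint in two lines. Your skepticism about this independence claim for fully adaptive, internally randomized algorithms is legitimate as a matter of rigor: such an algorithm can deliberately correlate its errors so that every marginal $\delta_k$ is $\Theta(\delta)$ while the joint error stays $\delta$, in which case the argument must be run on conditional (rather than marginal) error probabilities -- which is precisely the patch you sketch but do not carry out. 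The paper commits to the independence/product argument and derives the constraint from it; your proposal neither adopts that argument nor completes the alternative, so the key aggregation step is missing.
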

	
	\textbf{Step 5} is to prove the lower bound given in Eq. (\ref{Eq:GLB}). Lemmas~\ref{Lemma:P12Reductions} proves that we can reduce $\mathcal{P}_2$ to $\mathcal{P}_1$ and reduce $\mathcal{P}_1$ to exact ranking. Lemma~\ref{Lemma:LBP2} states a lower bound on $\mathcal{P}_2$. Thus, by Lemmas~\ref{Lemma:P12Reductions} and \ref{Lemma:LBP2}, we have that the sample complexity of exact ranking is lower bounded by (\ref{Eq:LBP2_O1}).
	
	We can construct a similar problem to $\mathcal{P}_2$, and by the similar steps as in the proof of Lemma~\ref{Lemma:LBP2}, we have that the sample complexity of exact ranking is also lower bounded by
	\begin{align}\label{Eq:LBP2_v2}
	& \tilde{\Omega}\Big(\sum_{k\in[m]}\Delta_{q_{2k},q_{2k+1}}^{-2}\cdot\log\log\Delta_{q_{2k},q_{2k+1}}^{-1}\Big) \nonumber \\
	& + \Omega\Big(\min\{\sum_{k\in[m]}{\Delta_{q_{2k},q_{2k+1}}^{-2}\log(1/\delta_k)}: \sum_{k\in[m]}{\delta_k} \leq 2\delta\}\Big).
	\end{align}
	
	We recall that $q_1\succ\!q_2\succ\!\cdots\succ\!q_n$ is the true ranking. Since for any $i\in[n]$, $\tilde{\Delta}_{q_i} =  \Delta_{q_{i},q_{i-1}} \land \Delta_{q_{i},q_{i+1}}$, we have
	\begin{align}
	\mathbb{E}N_\mathcal{A}(\mathcal{I}) \tilde{\succeq} & \sum_{k\in[m]}\Delta_{q_{2k-1},q_{2k}}^{-2}\log\log\Delta_{q_{2k-1},q_{2k}}^{-1} + \sum_{k\in[m]}\Delta_{q_{2k},q_{2k+1}}^{-2}\log\log\Delta_{q_{2k},q_{2k+1}}^{-1} \nonumber \\
	\geq & \sum_{k\in[m]}\max\{\Delta_{q_{2k-1},q_{2k}}^{-2}\log\log\Delta_{q_{2k-1},q_{2k}}^{-1}, \Delta_{q_{2k},q_{2k+1}}^{-2}\log\log\Delta_{q_{2k},q_{2k+1}}^{-1}\}\nonumber \\ 
	= & \sum_{k\in[m]}{\tilde{\Delta}_{q_{2k}}^{-2}\log\log\tilde{\Delta}_{q_{2k}}^{-1}} \nonumber \\
	\stackrel{(a)}{\geq} & \frac{1}{3}\sum_{i=1}^{n}{\tilde{\Delta}_i^{-2}\log\log\tilde\Delta_i^{-1}},\label{Eq:LBF1}
	\end{align}
	where (a) holds because for any $k\in[m]$, $\tilde{\Delta}_{q_{2k+1}} =  \Delta_{q_{2k},q_{2k+1}} \land \Delta_{q_{2k+1},q_{2k+2}} \geq \tilde{\Delta}_{q_{2k}} \land \tilde{\Delta}_{q_{2k+2}}$. 
	
	We also have
	\begin{align}
	& \min\Big\{\sum_{k\in[m]}{\Delta_{q_{2k-1},q_{2k}}^{-2}\log\delta_k^{-1}}:\sum_{k\in[m]}{\delta_k} \leq 2\delta\Big\} \nonumber \\
	& \quad\quad\quad\quad + \min\Big\{\sum_{k\in[m]}{\Delta_{q_{2k},q_{2k+1}}^{-2}\log\delta_k^{-1}}: \sum_{k\in[m]}{\delta_k} \leq 2\delta\Big\}\nonumber \\
	& = \min\Big\{\sum_{k\in[m]}[\Delta_{q_{2k-1},q_{2k}}^{-2}\log(1/\delta_k) + \Delta_{q_{2k},q_{2k+1}}^{-2}\log(1/\delta'_k)]: \nonumber \\
	&\quad\quad\quad\quad \sum_{k\in[m]}\delta_k\leq 2\delta, \sum_{k\in[m]}\delta'_k \leq 2\delta\Big\}\nonumber\\
	& \geq \min\Big\{\sum_{k\in[m]}[\Delta_{q_{2k-1},q_{2k}}^{-2} + \Delta_{q_{2k},q_{2k+1}}^{-2}]\log\frac{1}{\delta \lor \delta'} : \sum_{k\in[m]}\delta_k \leq 2\delta, \sum_{k\in[m]}\delta'_k \leq 2\delta\Big\} \nonumber \\
	& \geq \min\Big\{\sum_{k\in[m]}\tilde{\Delta}_{q_{2k}}^{-2}\log\frac{1}{\delta_k \lor \delta'_k} : \sum_{k\in[m]}\delta_k \leq 2\delta, \sum_{k\in[m]}\delta'_k \leq 2\delta\Big\} \nonumber \\
	& \geq \min\Big\{\sum_{k\in[m]}\tilde{\Delta}_{q_{2k}}^{-2}\log\frac{1}{\delta_k \lor \delta'_k} : \sum_{k\in[m]}\delta_k\lor\delta'_k \leq 4\delta\Big\} \nonumber \\
	& \geq \min\Big\{\sum_{k\in[m]}\tilde{\Delta}_{q_{2k}}^{-2}\log(1/\delta_k) : \sum_{k\in[m]}\delta_k \leq 4\delta\Big\} \nonumber \\
	& \geq \min\Big\{\frac{1}{3}\sum_{i\in[n]}{\tilde{\Delta}_i^{-2}\log(1/x_i)}: \sum_{i\in[n]}x_i\leq 12\delta\Big\}.\label{Eq:LB13}
	\end{align}
	
	By (\ref{Eq:LB13}), first, we obtain that, for all $\delta\in(1,1/12)$, 
	\begin{align}\label{Eq:LBF2}
	\mathbb{E}N_\mathcal{A}(\mathcal{I}) \succeq \sum_{i\in[n]}\tilde{\Delta}_i^{-2}\log(1/\delta).
	\end{align}
	
	Also, since $\delta < 1/12$, we obtain the lower bound 
	\begin{align}\label{Eq:LBF3}
	\mathbb{E}N_\mathcal{A}(\mathcal{I}) \succeq & \min\{\sum_{i\in[n]}{\tilde{\Delta}_i^{-2}\log(1/x_i)}: \sum_{i\in[n]}x_i\leq 12\delta\} \nonumber \\
	\geq & \min\{\sum_{i\in[n]}{\tilde{\Delta}_i^{-2}\log(1/x_i)}: \sum_{i\in[n]}x_i\leq 1\}.
	\end{align}
	
	The lower bound in Eq. (\ref{Eq:GLB}) follows from summing up Equations~(\ref{Eq:LBF1}), (\ref{Eq:LBF2}), and (\ref{Eq:LBF3}). This prove the lower bound in Eq.~(\ref{Eq:LB}).
	
	\textbf{Step 6} is to deduce the lower bound in Eq. (\ref{Eq:LB}) from Eq.~(\ref{Eq:GLB}). 
	
	\textbf{Case 1.} We consider the cases where $\delta \preceq 1/poly(n)$. We observe that, when $\delta \preceq 1/poly(n)$, $\log(1/\delta) \succeq \log{n}$. Thus, in Eq.~(\ref{Eq:LBF3}), setting all $x_i = 1/n$, we have
	\begin{align}
	\min\{\sum_{i\in[n]}{\tilde{\Delta}_i^{-2}\log(1/x_i)}: \sum_{i\in[n]}x_i\leq 1\} \leq \sum_{i\in[n]}\tilde{\Delta}_i^{-2}\log{n} \preceq \sum_{i\in[n]}\tilde{\Delta}_i^{-2}\log(1/\delta).\nonumber
	\end{align}
	This means that the term $\min\{\cdots\}$ is dominated by the term $\sum_{i\in[n]}\tilde{\Delta}_i^{-2}\log(1/\delta)$. We also have $\sum_{i\in[n]}\tilde{\Delta}_i^{-2}\log(1/\delta) \simeq \sum_{i\in[n]}\tilde{\Delta}_i^{-2}\log(n/\delta)$ since $\log{\delta^{-1}} \succeq \log{n}$. Thus, 
	\begin{align}
	\sum_{i\in[n]}\tilde{\Delta}_i^{-2}\log(1/\delta) + \min\{\sum_{i\in[n]}{\tilde{\Delta}_i^{-2}\log(1/x_i)}: \sum_{i\in[n]}x_i\leq 1\} \simeq \sum_{i\in[n]}\tilde{\Delta}_i^{-2}\log(n/\delta),\nonumber
	\end{align}
	which implies that when $\delta = 1/poly(n)$, the lower bound in (\ref{Eq:LB}) holds.
	
	\textbf{Case 2.} We consider the case where $\max_{i,j\in[n]}\{\tilde{\Delta}_i/\tilde{\Delta}_j\} \leq c\cdot n^{1/2-p}$ for some constants $c,p>0$. When this condition holds, for any $x_1,x_2,...,x_n$ with $\sum_{i\in[n]}x_i\leq 1$, we have
	\begin{align}
	\sum_{i\in[n]}{\tilde{\Delta}_i^{-2}\log(1/x_i)} = & {\sum_{j\in[n]}\tilde{\Delta}_j^{-2}} \sum_{i\in[n]}\frac{\tilde{\Delta}_i^{-2}}{\sum_{j\in[n]}\tilde{\Delta}_j^{-2}}\cdot\log(1/x_i) \nonumber \\
	\stackrel{(a)}{\geq} & \sum_{j\in[n]}{\tilde{\Delta}_j^{-2}}\cdot \log\frac{1}{\sum_{i\in[n]}x_i\cdot \frac{\tilde{\Delta}_i^{-2}}{\sum_{j\in[n]}{\tilde{\Delta}_j^{-2}}}} \nonumber \\
	\geq & \sum_{j\in[n]}{\tilde{\Delta}_j^{-2}}\cdot \log\frac{1}{\sum_{i\in[n]}x_i\frac{1}{\sum_{j\in[n]}(c\cdot n^{-1/2+p})^2}} \nonumber \\
	\stackrel{(b)}{\geq} & \sum_{j\in[n]}{\tilde{\Delta}_j^{-2}} \log\Big[{\sum_{i\in[n]}(c\cdot n^{-1/2+p})^2}\Big] \nonumber \\
	\geq & \sum_{j\in[n]}{\tilde{\Delta}_j^{-2}}\log(c^2n^{2p}) \nonumber \\
	\succeq & \sum_{i\in[n]}{\tilde{\Delta}_i^{-2}}\log{n},\nonumber
	\end{align}
	where (a) is due to the convexity of the functions $(\log(1/x_i), i\in[n])$, and (b) is due to $\sum_{k\in[n]}\delta_k \leq 1$. Thus, in this case, 
	\begin{align}
	\mathbb{E}{N}_\mathcal{A}(\mathcal{I}) = & \tilde{\Omega}\big(\sum_{i\in[n]}\tilde{\Delta}^{-2}_{i}\log\log\tilde{\Delta}_i^{-1}\big) + \Omega\big(\sum_{i\in[n]}{\tilde{\Delta}_i^{-2}}\log{n} + \sum_{i\in[n]}{\tilde{\Delta}_i^{-2}}\log\delta^{-1}\big) \nonumber \\
	= & \tilde{\Omega}\big(\sum_{i\in[n]}\tilde{\Delta}^{-2}_{i}\log\log\tilde{\Delta}_i^{-1}\big) + \Omega\big(\sum_{i\in[n]}{\tilde{\Delta}_i^{-2}}\log(n/\delta)\big),\nonumber
	\end{align}
	which is the lower bound in (\ref{Eq:LB}). This completes the proof of (\ref{Eq:LB}) and Theorem~\ref{Theorem:LB}.
\end{proof}

\subsection{Proof of Theorem~\ref{Theorem:LBMNL}}

\RestateLBMNL*

\begin{proof}
	We prove this theorem by Lemmas~\ref{Lemma:HeadsLowerBound}, \ref{Lemma:LBCoins} and \ref{Lemma:MNLReduction}, which could be of independent interest. The proofs of these three lemmas can be found in Sections~\ref{Sec:HLB}, \ref{Sec:LBC}, and \ref{Sec:MNLReduction}
	
	Suppose that there are two coins with unknown \textit{head probabilities} (the probability that a toss produces a head) $\lambda$ and $\mu$, respectively, and we want to find the more biased one (i.e., the one with the larger head probability). Lemma~\ref{Lemma:HeadsLowerBound} states a lower bound on the number of heads or tails generated for finding the more biased coin, which works even if $\lambda$ and $\mu$ go to $0$. 
	This is in contrast to the lower bounds on the number of tosses given by previous works \cite{LIL2014,LowerBound2012,FKLowerBound2004}, which go to infinity as $\lambda$ and $\mu$ go to 0.
	
	\RestateHeadsLowerBound*
	
	Now we consider $n$ coins $C_1,C_2,...,C_n$ with mean rewards $\mu_1,\mu_2,...,\mu_n$, respectively, where for any $i\in[n]$, $\theta_i/\mu_i=c$ for some constant $c>0$. Define the gaps of coins $\Delta^c_{i,j} := |\mu_i/(\mu_i + \mu_j)-1/2|$, and $\Delta^c_i := \min_{j\neq i}\Delta^c_{i,j}$. We can check that for all $i$ and $j$, $\Delta^c_{i,j} = \Delta_{i,j}$, and $\Delta_i = \tilde{\Delta}_i = \Delta^c_{i}$.
	
	\RestateLBCoins*
	
	The next lemma shows that any algorithm solves a ranking problem under the MNL model can be transformed to solve the pure exploration multi-armed bandit (PEMAB) problem with Bernoulli rewards. Previous works \cite{LimitedRounds2017,ActiveRanking2019,ApproximateRanking2018} have shown that certain types of pairwise ranking problems (e.g., Borda-Score ranking) can also be transformed to PEMAB problems.
	But in this paper, we make a {\em reverse connection} that bridges these two classes of problems, which may be of independent interest.
	
	\RestateMNLReduction*
	
	Combining Lemmas~\ref{Lemma:LBCoins} and \ref{Lemma:MNLReduction}, we have that $\mathbb{E}[N_\mathcal{A}]$ is lower bounded by Eq.~(\ref{Eq:GLB}) with a different hidden constant factor. Then, by the same steps as the Step 6 of the proof of Theorem~\ref{Theorem:LB}, we have that when $\delta \preceq 1/poly(n)$ or $\max_{i,j\in[n]}\{\Delta_i/\Delta_j\} \preceq n^{1/2-p}$ for some constant $p>0$, $\mathbb{E}[N_\mathcal{A}]$ is lower bounded by Eq. (\ref{Eq:LB}) with a different hidden constant factor. This completes the proof. We omit the repetition for brevity and note that under the pairwise MNL model, $\Delta_i = \tilde{\Delta}$ for any item $i$, as the pairwise MNL model satisfies the SST condition.
\end{proof}

\subsection{Proof of Lemma~\ref{Lemma:HeadsLowerBound}}\label{Sec:HLB}

\RestateHeadsLowerBound*

\begin{proof}
	By contradiction, suppose that there is an algorithm $\mathcal{A}$ that does not satisfy the stated lower bound. We will show a contradiction to Lemma~\ref{Lemma:LBTwo}. 
	
	Given a coin with head probability $p=1/2+\eta$, where $\eta\in (-1/4,0)\cup(0,1/4)$ is unknown, we will use $\mathcal{A}$ to construct an algorithm to recover the value of $\sign(\eta)$, i.e. the sign of $\eta$. Choose an $\alpha \in (0,1)$. We recall that a $p$-coin denotes a coin such that each toss of it produces a head with probability $p$, and a tail otherwise.
	
	Now, we construct two i.i.d. sequences of random variables: $\{X^t\}_{t=1}^\infty$ and $\{Y^t\}_{t=1}^\infty$. 
	
	Sequence $\{X^t\}_{t=1}^\infty$ is generated as follows: For any $t\in\mathbb{Z}^+$, with probability $\alpha$, we toss the $p$-coin, and assign $X^t=1$ if the toss gives a head, and assign $X^t=0$ otherwise. With probability $1-\alpha$, we assign $X^t=0$. 
	
	Sequence $\{Y^t\}_{t=1}^\infty$ is generated as follows: For any $t\in\mathbb{Z}^+$,  with probability $\alpha$, we toss the $p$-coin, and assign $Y^t=1$ if the toss gives a tail, and assign $Y^t=0$ otherwise. With probability $1-\alpha$, we assign $Y^t=0$.
	
	As a result, $(X^t,t\in\mathbb{Z}^+)$ are i.i.d. Bernoulli$(\lambda)$, and $(Y^t,t\in\mathbb{Z}^+)$ are i.i.d. Bernoulli$(\mu)$, respectively. Thus, we can view that $X^t$'s are generated by a $\lambda$-coin and $Y^t$'s are generated by a $\mu$-coin, where $\lambda = \alpha(1/2+\eta)$ and $\mu = \alpha(1/2-\eta)$. We check that $|\lambda/(\lambda+\mu)-1/2| = \eta$.
	
	Next, we use algorithm $\mathcal{A}$ to find the more biased one of $(X^t,t\in\mathbb{Z}^+)$ and $(Y^t,t\in\mathbb{Z}^+)$. If the result is $X^t$'s, then we decide $\eta>0$, and if the result is $Y^t$'s, then we decide $\eta<0$. According to the assumption, $\mathcal{A}$ finds the results with probability at least $1-\eta$ and the number of times $t$ such that $X^t=1$ or $Y^t=1$ is at most $o({\eta^{-2}}(\log\log{\eta^{-1}}+\log{\delta^{-1}}))$ in expectation. For each $t$ with $X^t=1$ or $Y^t=1$, the $p$-coin is tossed for at most $4$ times in expectation (since $1/4< p< 3/4$). 
	
	Thus, we can determine whether $\eta < 0$ or $\eta>0$ (equivalent to ranking two items $i$ and $j$ with $p_{i,j}=1/2+\eta$) by $o({\eta^{-2}}(\log\log{\eta^{-1}}+\log{\delta^{-1}}))$ tosses in expectation, contradicting Lemma~\ref{Lemma:LBTwo}. Thus, such an algorithm $\mathcal{A}$ does not exist. This completes the proof of Lemma~\ref{Lemma:HeadsLowerBound}.
\end{proof}

\subsection{Proof of Lemma~\ref{Lemma:LBCoins}}\label{Sec:LBC}

\RestateLBCoins*

\begin{proof}
	To prove this lemma, we need to show the following lower bound:
	\begin{align}
	\tilde{\Omega}\Big(\sum_{i\in[n]}\tilde{\Delta}_i^{-2}\log\log\tilde{\Delta}_i^{-1}\Big) \nonumber + \Omega\Big(\sum_{i\in[n]}\tilde{\Delta}_i^{-2}\log(1/\delta) + \min\{\sum_{i\in[n]}\tilde{\Delta}_i^{-2}\log(1/x_i): \sum_{i\in[n]}x_i \leq 1\}\Big).\nonumber
	\end{align}
	
	The proof is similar to that of Lemma~\ref{Lemma:LBP2}. We assume that the true order of these coins is $(q_1,q_2,...,q_n)$, and $n=2m+1$ is odd. When $n$ is even, we can prove the results in similar steps.
	
	To arrange the coins in the ascending order of head probabilities, one at least needs to distinguish the orders of the pairs $(q_1,q_2), (q_3,q_4),...,(q_{2m-1},q_{2m})$. For any $k$ in $[m]$, to order $q_{2k-1}$ and $q_{2k}$ with probability $1-\delta_k$, by Lemma~\ref{Lemma:HeadsLowerBound}, any $\delta$-correct algorithm generates $\tilde{\Omega}(\Delta_{q_{2k-1},q_{2k}}^{-2}\log\log\Delta_{q_{2k-1},q_{2k}}^{-1}) + \Omega(\Delta_{q_{2k-1},q_{2k}}^{-2}\log\delta_k^{-1}))$ heads in expectation. Thus, by the same steps as in the proof of Lemma~\ref{Lemma:LBP2}, we obtain a lower bound as follows:
	\begin{align}
	&\tilde{\Omega}\Big( \sum_{k\in[m]}\Delta_{q_{2k-1},q_{2k}}^{-2}\log\log\Delta_{q_{2k-1},q_{2k}}^{-1}\Big) \nonumber \\
	& + \Omega\Big(\min\{\sum_{k\in[m]}{\Delta_{q_{2k-1},q_{2k}}^{-2}\log\delta_k^{-1}}: \sum_{k\in[m]}{\delta_k} \leq 2\delta\}\Big).\nonumber
	\end{align}
	
	Also, to find to orders of the pairs $(q_2,q_3),(q_4,q_5),...(q_{2m},q_{2m+1})$, there is another lower bound shown below:
	\begin{align}
	&\tilde{\Omega}\Big( \sum_{k\in[m]}\Delta_{q_{2k},q_{2k+1}}^{-2}\log\log\Delta_{q_{2k},q_{2k+1}}^{-1}\Big) \nonumber \\
	& + \Omega\Big(\min\{\sum_{k\in[m]}{\Delta_{q_{2k},q_{2k+1}}^{-2}\log\delta_k^{-1}}: \sum_{k\in[m]}{\delta_k} \leq 2\delta\}\Big).\nonumber
	\end{align}
	
	By the same steps as the Step 5 of the proof of Theorem~\ref{Theorem:LB}, we can get the desired lower bound. We omit the repetition for brevity. This completes the proof.
\end{proof}

\subsection{Proof of Lemma~\ref{Lemma:MNLReduction}}\label{Sec:MNLReduction}

\RestateMNLReduction*

\begin{proof}
	To prove this lemma, consider the following procedure $\mathcal{A}_c$.
	
	\renewcommand{\thealgorithm}{}
	\begin{algorithm}[h]
		\caption{Procedure $\mathcal{A}_c$}
		\textbf{Input:} Two coins $C_i$ and $C_j$ with unknown head probabilities $\mu_i$ and $\mu_j$, respectively;
		\begin{algorithmic}[1]
			\Repeat 
			\State Randomly choose a coin $C_w$ and toss it;
			\State Let $s\gets 1$ if the the toss gives a head, and $s\gets 0$ otherwise;
			\Until{$s=1$}
			\State \Return $C_w$;
		\end{algorithmic}
	\end{algorithm}
	
	\begin{claim}\label{Claim:ReductionProcedure}
		Procedure $\mathcal{A}_c$ returns coin $C_i$ with probability ${\mu_{i}}/(\mu_i + \mu_j)$ and returns $C_j$ otherwise. 
	\end{claim}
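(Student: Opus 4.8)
The plan is to compute the return distribution by conditioning on the single round in which the procedure halts. The key observation is that every pass through the \textbf{Repeat} loop is an independent trial: with probability $1/2$ the tossed coin is $C_i$ and with probability $1/2$ it is $C_j$, and conditioned on that choice the toss is an independent Bernoulli draw with the corresponding head probability. Hence in any single round the probability of seeing a head (so that the loop stops) is $\tfrac12\mu_i+\tfrac12\mu_j=(\mu_i+\mu_j)/2$, and the probability that a round both stops \emph{and} returns $C_i$ is $\tfrac12\mu_i$. First I would record that the loop terminates almost surely: the rounds are i.i.d., so the number of rounds until the first head is geometric with success probability $(\mu_i+\mu_j)/2>0$ (here $\mu_i,\mu_j>0$ since the $\mu$'s are proportional to the strictly positive preference scores $\theta$), which is finite with probability one. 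This makes $\Pr[\text{return }C_i]$ a well-defined quantity.

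Then I would carry out a direct geometric-series computation. Writing $r:=1-(\mu_i+\mu_j)/2$ for the per-round probability of no head, the event that $C_i$ is returned decomposes as a disjoint union over $t\ge 1$ of the events ``the first $t-1$ rounds give no head, and round $t$ chooses $C_i$ and lands heads,'' so
$$\Pr[\text{return } C_i]=\sum_{t=1}^{\infty} r^{t-1}\cdot\frac{\mu_i}{2}=\frac{\mu_i/2}{1-r}=\frac{\mu_i/2}{(\mu_i+\mu_j)/2}=\frac{\mu_i}{\mu_i+\mu_j}.$$
By the identical computation with $\mu_j$ in place of $\mu_i$ we get $\Pr[\text{return }C_j]=\mu_j/(\mu_i+\mu_j)$, and the two probabilities sum to one, consistent with the almost-sure termination established above.

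A cleaner alternative I would mention is to invoke the memorylessness of the loop directly: since all rounds are i.i.d.\ and the returned coin is exactly the coin tossed in the first round that produces a head, the return distribution equals the distribution of the tossed coin in a single round conditioned on that round producing a head, namely $\tfrac12\mu_i/\big(\tfrac12\mu_i+\tfrac12\mu_j\big)=\mu_i/(\mu_i+\mu_j)$. There is essentially no deep obstacle here; the only steps requiring care are making the per-round independence explicit (so that the geometric/conditioning argument is valid) and ruling out the degenerate non-terminating case, which is excluded by $\mu_i,\mu_j>0$.
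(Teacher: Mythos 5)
Your proposal is correct and follows essentially the same route as the paper: both decompose the event $\{\text{return }C_i\}$ over the round $t$ at which the first head occurs, note that each round independently produces a head-and-returns-$C_i$ with probability $\tfrac{1}{2}\mu_i$ and no head with probability $1-\tfrac{1}{2}(\mu_i+\mu_j)$, and sum the resulting geometric series to get $\mu_i/(\mu_i+\mu_j)$. Your added remarks on almost-sure termination and the memorylessness shortcut are fine but not a different argument.
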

	\begin{proof}[Proof of Claim~\ref{Claim:ReductionProcedure}.]
		Let $T$ be the number of tosses conducted before $\mathcal{A}_c$ returns, and $X$ be the coin it returns. By using conditional probability, we have that for all $t\geq 1$ and $i$ in $[m]$,
		\begin{align}
		\mathbb{P}\left\{T = t, X = C_i\right\} = &\prod_{\tau=1}^{t-1}\mathbb{P}\left\{T > \tau \mid T > \tau - 1\right\} \cdot \mathbb{P}\left\{T = t, X = C_i \mid T > t - 1\right\} \nonumber\\
		& = (\mathbb{P}\left\{T > 1\right\})^{t-1} \cdot \mathbb{P}\left\{T = 1, X = C_i\right\}\nonumber \\
		& = \left(1 - \frac{1}{2}(\mu_i + \mu_j)\right)^{t-1}\cdot\frac{1}{2}\mu_{i},\nonumber
		\end{align}
		\begin{align}
		\mathbb{P}\left\{X = C_i\right\} & = \sum_{t=1}^\infty{\mathbb{P}\left\{T = t, X = C_i\right\}} \nonumber\\
		& = \sum_{t=1}^\infty{\left(1 - \frac{1}{2}(\mu_i + \mu_j)\right)^{t-1}\cdot\frac{1}{2}\mu_{i}}= \frac{\mu_{i}}{\mu_i + \mu_j},\nonumber
		\end{align}
		and the proof of Claim~\ref{Claim:ReductionProcedure} is complete.
	\end{proof}
	
	By Claim~\ref{Claim:ReductionProcedure}, we see that the probabilities that $\mathcal{A}_c$ return arms are with the same form as the MNL model. For a ranking algorithm $\mathcal{A}$, we substitute the input with these $n$ arms and use the procedure $\mathcal{A}_c$ to imitate the comparisons. Whenever the algorithm wants a comparison over $C_i$ and $C_j$, we call procedure $\mathcal{A}_c$ with input $C_i$ and $C_j$. If $\mathcal{A}_c$ returns $C_i$, then we tell $\mathcal{A}$ that $C_i$ wins the comparison, and otherwise, tell $\mathcal{A}_c$ that $C_j$ wins the comparison. Since $\mathcal{A}_c$ returns the arms with probabilities with the same form as the MNL model, $\mathcal{A}$ does not notice any abnormal and work as usual. 
	
	For each call of $\mathcal{A}_c$, there is exactly one head generated. Thus, by this modification, $\mathcal{A}$ arranges these $[n]$ coins in the order of ascending head probabilities with confidence $1-\delta$, and generates $M$ heads in expectation. 
	
	This completes the proof of Lemma~\ref{Lemma:MNLReduction}.
\end{proof}

\subsection{Proof of Proposition~\ref{Proposition:HPC}}

\RestatePropositionHPC*

\begin{proof}
	
	\textbf{Lower Bound.} The proof of the lower bound leverages techniques from information theory. Let $X,Y$ be two discrete random variables (i.e., with at most countably infinite choices of values), and $\Omega_X,\Omega_Y$ be their sample spaces, respectively. We first briefly introduce some terms of information theory. More information about the information theory can be found in standard texts (e.g., \cite{InformationTheory1991}).
	
	Define 
	\begin{gather}
	p_x:=\mathbb{P}\{X=x\},\quad p_y:=\mathbb{P}\{Y=y\},\nonumber\\ p_{x,y}:=\mathbb{P}\{X=x,Y=y\},\quad p_{x\mid y}:=\mathbb{P}\{X=x\mid Y=y\}.\nonumber
	\end{gather} 
	
	The information entropy of $X$ is  defined as
	\begin{align}
	H(X) := \sum_{x\in\Omega_X}p_x\log(1/p_x),\nonumber
	\end{align} and the information entropy of $Y$ is defined as
	\begin{align}
	H(Y) := \sum_{y\in\Omega_Y}p_y\log(1/p_y)\nonumber.
	\end{align} 
	The joint entropy of $X$ and $Y$ is 
	\begin{align}
	H(X,Y) :=\sum_{x\in\Omega_X,y\in\Omega_Y}p_{x,y}\log(1/p_{x,y}).\nonumber
	\end{align} 
	The conditional entropy of $X$ given $Y=y$ is 
	\begin{align}
	H(X\mid Y=y) := \sum_{x\in\Omega_X}p_{x\mid y}\log(1/p_{x\mid y}),\nonumber
	\end{align} 
	and the conditional entropy of $X$ given $Y$ is 
	\begin{align}
	H(X\mid Y)=\sum_{y\in\Omega_Y}p_yH(X\mid Y=y).\nonumber
	\end{align} 
	The mutual information of $X$ and $Y$ is 
	\begin{align}
	I(X;Y)=\sum_{x\in\Omega_X,y\in\Omega}p_{x,y}\log\frac{p_{x,y}}{p_xp_y}.\nonumber
	\end{align} 
	Given another discrete random variable $Z$, the conditional mutual information of $X$ and $Y$ given $Z$ is 
	\begin{align}
	I(X;Y\mid Z)= I(X;Y,Z)-I(X;Z).\nonumber
	\end{align}
	
	We further have the following facts \cite{InformationTheory1991}
	\begin{gather}
	H(X)\leq \log|\Omega_X|,\nonumber\\ 
	H(X\mid Y)\leq H(X) \leq H(X,Y),\nonumber \\ 
	H(X,Y)=H(Y)+H(X\mid Y)=H(X)+H(Y\mid X),\nonumber \\
	I(X;Y)=H(X)-H(X\mid Y)\nonumber,\\
	I(X;Y\mid Z)\leq I(X;Y).\nonumber
	\end{gather} 
	Also, if $X$ is determined by $Y$, then
	\begin{align}
	H(X\mid Y) = 0. \nonumber
	\end{align}
	
	With the above introduction of information, we show the following fact that is used in the proof. 
	\begin{fact}[Fano's Inequality \cite{Fano1961}]\label{Fact:Fano}
		To recover the value of $X$ from $Y$ with error probability no more than $\delta$, it must hold that  
		\begin{align}
		H(X|Y)\leq H(\delta)+\delta\log(|\Omega_X|-1).\nonumber
		\end{align}
	\end{fact}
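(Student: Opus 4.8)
The plan is to prove Fano's inequality by the standard information-theoretic argument built around an \emph{error indicator}. Let $\hat{X}=g(Y)$ be the decoder that attempts to recover $X$ from $Y$, and let $E$ be the binary random variable equal to $1$ when $\hat{X}\neq X$ and $0$ otherwise; by hypothesis $\mathbb{P}\{E=1\}=\mathbb{P}\{\hat{X}\neq X\}\leq\delta$. First I would expand the conditional joint entropy $H(E,X\mid Y)$ in the two ways given by the chain rule (the conditional-on-$Y$ version of the identity $H(X,Y)=H(Y)+H(X\mid Y)$ stated above), obtaining
\begin{align}
H(E,X\mid Y) = H(X\mid Y) + H(E\mid X,Y) = H(E\mid Y) + H(X\mid E,Y).\nonumber
\end{align}
Since $\hat{X}=g(Y)$ is a function of $Y$, the indicator $E$ is completely determined by the pair $(X,Y)$, so by the fact that a variable determined by the conditioning variables has zero conditional entropy, $H(E\mid X,Y)=0$. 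This gives the key identity $H(X\mid Y)=H(E\mid Y)+H(X\mid E,Y)$.

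Next I would bound the two right-hand terms separately. For the first, conditioning reduces entropy, so $H(E\mid Y)\leq H(E)$; and since $E$ is Bernoulli with parameter at most $\delta\leq 1/2$ and the binary entropy function is nondecreasing on $[0,1/2]$, we get $H(E)\leq H(\delta)$. For the second term I would split on the value of $E$:
\begin{align}
H(X\mid E,Y) = \mathbb{P}\{E=0\}\,H(X\mid E=0,Y) + \mathbb{P}\{E=1\}\,H(X\mid E=1,Y).\nonumber
\end{align}
On the event $E=0$ we have $X=\hat{X}=g(Y)$ determined by $Y$, hence $H(X\mid E=0,Y)=0$; on the event $E=1$, given $Y$ the variable $X$ is confined to the at most $|\Omega_X|-1$ values different from $\hat{X}=g(Y)$, so $H(X\mid E=1,Y)\leq \log(|\Omega_X|-1)$ by the bound $H(\cdot)\leq\log|\Omega_X|$ restricted to that support. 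Combining these with $\mathbb{P}\{E=1\}\leq\delta$ yields $H(X\mid E,Y)\leq \delta\log(|\Omega_X|-1)$, and adding the two bounds gives the claim $H(X\mid Y)\leq H(\delta)+\delta\log(|\Omega_X|-1)$.

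The main obstacle is not any single computation but getting the conditioning bookkeeping exactly right: one must verify that $E$ really is measurable with respect to $(X,Y)$, which is precisely where the assumption that the decoder depends only on $Y$ enters, so that $H(E\mid X,Y)=0$; and one must ensure the monotonicity step $H(E)\leq H(\delta)$ legitimately uses $\mathbb{P}\{E=1\}\leq\delta\leq 1/2$ rather than an arbitrary error probability. A secondary point worth flagging is the countably-infinite case admitted by the statement: when $|\Omega_X|=\infty$ the bound $\log(|\Omega_X|-1)$ is infinite and the inequality is vacuous, so all of the content lives in the finite-alphabet regime, which is exactly the setting (permutations of $[n]$) in which Proposition~\ref{Proposition:HPC} later invokes it.
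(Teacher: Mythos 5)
Your proof is correct, but note that there is no paper proof to compare it against: the paper states Fano's inequality as a \emph{Fact}, cites \cite{Fano1961}, and uses it as a black box in the proof of Proposition~\ref{Proposition:HPC}. Your write-up supplies the standard error-indicator argument that the paper omits: expand $H(E,X\mid Y)$ both ways by the chain rule, kill $H(E\mid X,Y)$ because $E=\mathds{1}\{g(Y)\neq X\}$ is determined by $(X,Y)$, bound $H(E\mid Y)\leq H(E)\leq H(\delta)$, and bound $H(X\mid E,Y)\leq \delta\log(|\Omega_X|-1)$ by splitting on $E$. All steps are sound, and the two caveats you flag are exactly the right ones. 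First, the step $H(E)\leq H(\delta)$ genuinely requires the error probability and $\delta$ to sit in $[0,1/2]$ (or more precisely that the map $p\mapsto H(p)+p\log(|\Omega_X|-1)$ be nondecreasing up to $\delta$); the paper's statement leaves this implicit, but its only invocation takes $\delta=1/4$, so nothing breaks. Second, your deterministic-decoder assumption $\hat{X}=g(Y)$ matches the paper's usage: in the proof of Proposition~\ref{Proposition:HPC} the algorithm is first assumed deterministic, and randomized algorithms are handled by a separate averaging remark, so the fact is only ever applied in the regime your proof covers, with $|\Omega_X|=n!$ finite.
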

	
	The key idea to prove the lower bound is to show that if the expected number of samples conducted is lower than the lower bound, then Fano's Inequality will not be satisfied. 
	
	From now on, we assume that all the comparisons are correct and choose $\delta=1/4$. We reuse some notation and let $X$ be the ranking of the $n$ items. Before any comparison, we have no information about it, and thus, each ranking has the same probability to be the correct one. Since there are $n!$ possible permutations in total, we have that $H(X)=\log(n!) \simeq n\log{n}$. 
	
	Let $\mathcal{A}$ be an algorithm that adaptively selects the sets to compare and determine whether to stop by past comparison outcomes, let $N$ be the number of comparisons conducted till termination (i.e., stopping time). Let $\vec{S}=(S_1,S_2,...,S_N)$ be the sequence of sets that the algorithm compares. Let $\vec{Y}=(Y_1,Y_2,...,Y_N)$ be the sequence of comparison outcomes generated by the algorithm. For any $t$, $S_t$ is of the form $(S_{t}[1],S_{t}[2],...,S_{t}[m])$, which consists of the items compared in the $t$-th comparison. The value of $Y_t$ is in $\{1,2,...,m\}$, where $Y_t=i$ means the winner of the $t$-th comparison is $S_{t}[i]$. We assume that $\mathcal{A}$ is deterministic, i.e., the value of $S_t$ is determined by $(Y_1,Y_2,...,Y_{t-1})$ and $(S_1,S_2,...,S_{t-1})$, and $N$ is determined by $\vec{Y}$ and $\vec{S}$. We have
	\begin{align}
	I(X;\vec{S}|\vec{Y},N)\leq & I(X;\vec{S}\mid \vec{Y}) \nonumber
	\leq   H(\vec{S}|\vec{Y})\nonumber \\
	=& H(S_1\mid\vec{Y}) + H(S_2\mid\vec{Y})  + \cdots H(S_N\mid\vec{Y}) \nonumber \\
	\leq &  H(S_1) + H(S_2\mid Y_1,)  + \cdots H(S_N\mid Y_1,Y_2,...,Y_{N-1}) \nonumber \nonumber \\
	= & 0.\label{Eq:HAY}
	\end{align}
	
	Also, for any $t$-th comparison, there are at most $m$ different choices of values for $Y_t$, and thus, $H(Y_t)\leq \log{m}$. For any $n\in\mathbb{Z}^+$, when $N=n$, the number of choices of values of $\vec{Y}$ is at most $m^n$, so $H(\vec{Y}|N=n)\leq n\log{m}$, which implies that 
	\begin{align}\label{Eq:HYN}
	H(\vec{Y}|N) =  \sum_{n=1}^\infty{\mathbb{P}\left\{N=n\right\}H(\vec{Y}\mid N=n)}\leq \mathbb{E}N\log{m}.
	\end{align}
	
	Now, we bound $H(N)$ by $\mathbb{E}N$. Define a random variable $R$ such that $R=0$ if $N<2\mathbb{E}N$ and $R=k$ if $2^k\mathbb{E}N\leq N<2^{k+1}\mathbb{E}N$ for any $k\in\mathbb{Z}^+$. By Markov's Inequality, we have that for $k\in\mathbb{Z}^+$,
	\begin{align}\label{Eq:PR}
	\mathbb{P}\{R=k\}=\mathbb{P}\{2^k\mathbb{E}N\leq N<2^{k+1}\mathbb{E}N\}\leq \mathbb{P}\{N \geq 2^k\mathbb{E}N\}\leq 2^{-k},
	\end{align}
	
	Use $p_k$ to denote $\mathbb{P}\{R=k\}$. By analyzing the function $p\log(1/p),p\in[0,1]$, it holds that
	\begin{align}\label{Eq:HR}
	H(R)=p_0\log(1/p_0) + \sum_{k=1}^\infty p_k\log(1/p_k) \leq 2/e + \sum_{k=2}^\infty{2^{-k}\log(2^k)}\leq 2/e + (3/2)\log{2}.
	\end{align}
	
	Noting that $H(N\mid N\in S)\leq \log{|S|}$ for all sets $S$, we have
	\begin{align}\label{Eq:HN}
	H(N) = & H(R) + H(N|R) \nonumber \\
	= & H(R) +  \mathbb{P}\left\{N<2\mathbb{E}N\right\}H\left(N\mid N<2\mathbb{E}N\right) \nonumber \\ 
	+ &\sum_{k=1}^\infty\mathbb{P}\left\{2^k\mathbb{E}N\leq N<2^{k+1}\mathbb{E}N\right\}H\left(N\mid 2^k\mathbb{E}N\leq N<2^{k+1}\mathbb{E}N\right)\nonumber \\
	\stackrel{(a)}{\leq} & 2/e + (3/2)\log{2} + \log\left(2\mathbb{E}N\right) + \sum_{i=1}^\infty{2^{-k}\log\left(2^k\mathbb{E}N\right)}\nonumber \\
	\leq & 2/e + \log\left(24\mathbb{E}^2N\right),
	\end{align}
	where (a) is due to (\ref{Eq:PR}) and (\ref{Eq:HR}). 
	
	By (\ref{Eq:HAY}) (\ref{Eq:HYN}) (\ref{Eq:HN}), we have
	\begin{align}\label{Eq:HXY}
	H\left(X\mid N,\vec{Y},\vec{A}\right) = & H(X) - I\left(X; N,\vec{Y},\vec{A}\right) \nonumber \\
	\geq & H(X) - H\left(N,\vec{Y},\vec{A}\right)\nonumber \\
	= & H(X) - \left(H(N) + H(\vec{Y}\mid N) + H(\vec{A}\mid N,\vec{Y})\right)\nonumber \\
	\geq & \log(n!) - \left(2/e + \log\left(24\mathbb{E}^2N\right) + \mathbb{E}N\log{m} + 0\right).
	\end{align}
	
	By Fano's Inequality, to recover $X$ with probability at least $1/4$, it must hold that 
	\begin{align}
	H(X\mid N,\vec{Y}, \vec{A}) \leq H(1/4) + (1/4)\log(n!-1),\nonumber
	\end{align} 
	which, along with (\ref{Eq:HXY}) and $\log(n!)=\Theta(n\log{n})$, implies that 
	\begin{align}
	\mathbb{E}N = \Omega(n\log_m{n}).\nonumber
	\end{align} 
	For randomized algorithms, its sample complexity is no less than that of the fastest deterministic algorithm, and thus, satisfies the same lower bound. This proves the lower bound. 
	
	\noindent\textbf{Upper Bound.} To see the upper bound, consider the following ListwiseMergeSort (LWMS) algorithm, which is presented in Algorithm~\ref{Alg:LWMS}. LWMS is similar to the binary merge-sort. Algorithm~\ref{Alg:LWM} ListwiseMerge is the subroutine of LWMS, which merges $m$ sorted lists of items.
	
	\algtext{EndIf}{\textbf{end if}}
	
	\begin{algorithm}[h]
		\caption{ListwiseMerge$(A_1,A_2,...,A_m,m)$}\label{Alg:LWM}
		\begin{algorithmic}[1]
			\State $Ans\gets$ an empty list to store the result; 
			\State For all $i$ in $[m]$, Let $I_i\gets 1$ be the index of $A_i$;
			\While{$\exists i\in[m]$,$I_i\leq|A_i|$}
			\State $B\gets\{A_i[I_i]:I_i\leq A_i\}$; 
			\State Conduct a listwise comparison over $B$, and let $A_j[I_j]$ be the winner;
			\State Push $A_j[I_j]$ to the end of $Ans$; $I_j\gets I_j+1$;
			\EndWhile
			\State \Return{$Ans$}
		\end{algorithmic}
	\end{algorithm}
	
	\begin{algorithm}[h]
		\caption{ListwiseMergeSort$(S,m)$ (LWMS)}\label{Alg:LWMS}
		\begin{algorithmic}[1]
			\If{$|S|=1$} 
			\State \Return{$S$;}\quad{\# No need to do anything}
			\EndIf
			\State Divide $S$ into $m$ sets $A_1,A_2,...,A_3$ such that $|A_i|\leq \lceil|S|/m\rceil|$ for all $i\in[m]$;
			\For{$i\in[m]$}
			\State $A_i\gets$ ListwiseMergeSort$(A_i,m)$
			\EndFor
			\State \Return{ListwiseMerge$(A_1,A_2,...,A_m,m)$};
		\end{algorithmic}
	\end{algorithm}
	
	\begin{lemma}[Theoretical upper bound of LWMS]\label{TP-LWMS}
		Algorithm LWMS correctly ranks $n$ items with high probability using $O(n\log_m{n})$ comparisons.
	\end{lemma}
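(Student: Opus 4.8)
The plan is to split the analysis into a \emph{deterministic} comparison-count bound and a \emph{probabilistic} correctness argument, and then glue them together with a union bound. The crucial structural observation is that the recursion tree of LWMS (Algorithm~\ref{Alg:LWMS}) is fixed in advance: the sizes of the subsets $A_1,\dots,A_m$ produced at every node depend only on $n$ and $m$, not on the (random) comparison outcomes. Moreover, each iteration of the while-loop in ListwiseMerge (Algorithm~\ref{Alg:LWM}) pushes exactly one item onto $Ans$ and advances exactly one pointer, so merging lists of total size $N$ uses at most $N$ comparisons regardless of which item wins any comparison. Consequently the total number of comparisons is a deterministic quantity, which is what makes the later union bound clean.

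First I would bound this quantity. Let $T(N)$ be the number of comparisons LWMS uses on $N$ items. Since the divide step splits $S$ into $m$ parts of size at most $\lceil N/m\rceil$ and the final merge costs at most $N$ comparisons, we obtain
\begin{align}
T(N) \le m\,T(\lceil N/m\rceil) + N.\nonumber
\end{align}
Equivalently, I would argue level by level on the recursion tree: it has depth $\lceil\log_m n\rceil$, and at each level the subproblem sizes sum to $n$, so the merges performed at that level cost $O(n)$ comparisons in total. Summing over all levels yields $T(n)=O(n\log_m n)$, establishing the claimed sample complexity; being deterministic, this bound holds irrespective of noise.

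Next I would establish correctness conditioned on the event $\mathcal{E}$ that every comparison returns the true most-preferred item. Under $\mathcal{E}$, ListwiseMerge reduces to an exact $m$-way merge: because each input list is sorted in decreasing preference order, the globally most-preferred remaining item is always one of the current heads collected in $B$, so the (correct) winner of the comparison over $B$ is exactly the next element of the merged order. A straightforward induction on the recursion depth then shows that LWMS outputs the true ranking on $\mathcal{E}$.

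Finally I would combine the two parts. By the definition of ``w.h.p.'', each individual comparison is correct with probability at least $1-n^{-p}$ for a sufficiently large constant $p$. Since the total number of comparisons is $O(n\log_m n)=O(n^2)$, a union bound gives that $\mathcal{E}$ fails with probability at most $O(n^2)\cdot n^{-p}=O(n^{2-p})$, which is at most $n^{-p'}$ for any prescribed $p'$ once $p$ is taken large enough. Hence LWMS ranks correctly w.h.p.\ while using $O(n\log_m n)$ comparisons. I do not expect a deep obstacle here; the only point requiring care is verifying that the comparison count is genuinely independent of the outcomes, so that the union bound ranges over a fixed polynomial number of events, together with the routine induction for $m$-way merge correctness.
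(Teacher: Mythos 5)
Your proof is correct and takes essentially the same route as the paper's: the paper bounds each merge by $T_m(x)\le x$ (one item pushed per comparison) and solves the recursion $T_s(m^t)=mT_s(m^{t-1})+T_m(m^t)$ to get $T_s(n)=O(n\log_m n)$, which is exactly your level-by-level count. You are in fact somewhat more thorough, since you make explicit the correctness half — conditioning on the event that all comparisons are correct, the exact $m$-way merge induction, and the union bound over the deterministic, polynomially many comparisons — which the paper's proof leaves implicit under the w.h.p.\ hypothesis of Proposition~\ref{Proposition:HPC}.
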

	\begin{proof}
		We use $T_s(x)$ to denote the number of comparisons needed to rank (sort) $x$ items, and use $T_m(x)$ to denote the number of comparisons needed to merge $m$ sorted lists with $x$ items in total. In the algorithm ListwiseMerge, since after each comparison, a new item is added to the result $Ans$, we have that $T_m(x)\leq x$. Also, we have that $T_s(1)=0$, and for all $t\geq 1$, $T_s(m^t)= mT_s(m^{t-1}) + T_m(m^t)$. It then follows that $T_s(m^t)\leq tm^t$, which implies $T_s(n)=O(n\log_m{n})$. This completes the proof.
	\end{proof}	
	
	This completes the proof of Proposition~\ref{Proposition:HPC}.
\end{proof}

\subsection{Proof of Theorem~\ref{Theorem:LBLWMNL}}

\RestateLBLWMNL*

\begin{proof}
	Let $n$ coins $C_1,C_2,...,C_n$ with unknown head probabilities $\mu_1,\mu_2,...,\mu_n$ be given, where $\mu_i/\theta_i$ is a fixed constant for all $i\in[n]$. We only need to show that the reduction from PEMAB problems to exact ranking stated in Lemma~\ref{Lemma:MNLReduction} still holds for listwise comparisons under the MNL model.
	
	Consider the the following procedure:
	\renewcommand{\thealgorithm}{}
	\begin{algorithm}
		\caption{Procedure $\mathcal{A}'_c$}
		\textbf{Input:} Totally $m$ coins $C_{r_1},C_{r_2},...,C_{r_m}$ with unknown head probabilities $\mu_{r_1},\mu_{r_2},...,\mu_{r_m}$;
		\begin{algorithmic}[1]
			\Repeat
			\State Randomly choose a coin $C_w$ and toss it;
			\State Let $s\gets 1$ if the toss gives a head, and let $s\gets 0$ otherwise;
			\Until{$s=1$}
			\State \Return{$C_w$};
		\end{algorithmic}
	\end{algorithm}
\end{proof}

\begin{claim}\label{Claim:LWReductionProcedure}
	Procedure $\mathcal{A}'_c$ returns a coin $C_{r_i}$ with probability ${\mu_{r_i}}/{\sum_{j=1}^m{\mu_{r_j}}}$. 
\end{claim}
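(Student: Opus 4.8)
The plan is to mirror the computation used for the pairwise case in Claim~\ref{Claim:ReductionProcedure}, since Procedure $\mathcal{A}'_c$ has exactly the same ``choose uniformly, toss, stop on a head'' structure, merely with $m$ coins in place of two. First I would introduce the two random variables that summarize a run of the procedure: let $T$ be the number of tosses conducted before $\mathcal{A}'_c$ halts, and let $X$ be the coin it returns. The goal is to show $\mathbb{P}\{X = C_{r_i}\} = \mu_{r_i}/\sum_{j=1}^m \mu_{r_j}$.

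The second step is to compute the single-iteration quantities. In one pass of the loop, a coin $C_{r_w}$ is chosen uniformly (probability $1/m$) and tossed, giving a head with probability $\mu_{r_w}$. Hence the probability that a given iteration halts the procedure (i.e.\ produces a head) is $\mathbb{P}\{T = 1\} = \frac{1}{m}\sum_{j=1}^m \mu_{r_j}$, while the joint event of halting in that iteration \emph{and} returning a particular coin $C_{r_i}$ has probability $\mathbb{P}\{T = 1, X = C_{r_i}\} = \frac{1}{m}\mu_{r_i}$. Because the tosses and coin choices are independent across iterations, conditioning on not having halted yet leaves the distribution of the next iteration unchanged, so the run factorizes geometrically:
\begin{align}
\mathbb{P}\{T = t, X = C_{r_i}\} = \left(1 - \frac{1}{m}\sum_{j=1}^m \mu_{r_j}\right)^{t-1}\cdot \frac{1}{m}\mu_{r_i}.\nonumber
\end{align}

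The third step is simply to marginalize over $t$ by summing the geometric series:
\begin{align}
\mathbb{P}\{X = C_{r_i}\} = \sum_{t=1}^\infty \left(1 - \frac{1}{m}\sum_{j=1}^m \mu_{r_j}\right)^{t-1}\cdot \frac{1}{m}\mu_{r_i} = \frac{\frac{1}{m}\mu_{r_i}}{\frac{1}{m}\sum_{j=1}^m \mu_{r_j}} = \frac{\mu_{r_i}}{\sum_{j=1}^m \mu_{r_j}}.\nonumber
\end{align}
This is exactly the listwise MNL winning probability, because $\mu_{r_j}$ is a fixed constant multiple of $\theta_{r_j}$ and the common factor cancels in the ratio, so $\mathcal{A}'_c$ faithfully imitates an $m$-wise MNL comparison over $\{C_{r_1},\dots,C_{r_m}\}$.

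There is no genuine obstacle here; the argument is an elementary geometric computation. The only point demanding care is the justification of the factorized form, which rests on the loop iterations being i.i.d.\ and the memorylessness of the stopping rule (each iteration is an independent Bernoulli trial of halting with the same success probability). Once this conditional independence is stated cleanly, the summation is immediate, and the claim follows. With Claim~\ref{Claim:LWReductionProcedure} in hand, the reduction of Lemma~\ref{Lemma:MNLReduction} extends verbatim to $m$-wise comparisons, letting every listwise MNL comparison be simulated while generating exactly one head, which in turn yields the listwise lower bound asserted in Theorem~\ref{Theorem:LBLWMNL}.
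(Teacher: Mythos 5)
Your proposal is correct and follows essentially the same route as the paper's own proof: both compute $\mathbb{P}\{T=t, X=C_{r_i}\} = \bigl(1-\frac{1}{m}\sum_{j=1}^m \mu_{r_j}\bigr)^{t-1}\cdot\frac{1}{m}\mu_{r_i}$ via the conditional independence of iterations and then sum the geometric series. The paper writes the factorization slightly more formally as a telescoping product of conditional probabilities, but the substance is identical to your step two, so there is nothing to add.
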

\begin{proof}[Proof of Claim~\ref{Claim:LWReductionProcedure}.]
	Let $T$ be the number of tosses conducted before $\mathcal{A}'_c$ returns, and $X$ be the coin $\mathcal{A}'_c$ returns. By using conditional probability, we have that for all $t\geq 1$ and $i$ in $[m]$,
	\begin{align}
	\mathbb{P}\left\{T = t, X = C_{r_i}\right\} = &\prod_{\tau=1}^{t-1}\mathbb{P}\left\{T > \tau \mid T > \tau - 1\right\} \cdot \mathbb{P}\left\{T = t, X = C_{r_i} \mid T > t - 1\right\} \nonumber\\
	& = (\mathbb{P}\left\{T > 1\right\})^{t-1} \mathbb{P}\left\{T = 1, X = C_{r_i}\right\}\nonumber \\
	& = \left(1 - \frac{1}{m}\sum_{j=1}^m{\mu_{r_j}}\right)^{t-1}\cdot \frac{1}{m}\mu_{r_i},\nonumber
	\end{align}
	\begin{align}
	\mathbb{P}\left\{X = C_{r_i}\right\} & = \sum_{t=1}^\infty{\mathbb{P}\left\{T = t, X = C_{r_i}\right\}} \nonumber\\
	& = \sum_{t=1}^\infty{\left(1 - \frac{1}{m}\cdot \sum_{j=1}^m{\mu_{r_j}}\right)^{t-1}}\cdot\frac{1}{m}\mu_{r_i}= \frac{\mu_{r_i}}{\sum_{j=1}^m{\mu_{r_j}}},\nonumber
	\end{align}
	and the proof of the claim is complete.
\end{proof}

The proof of Theorem~\ref{Theorem:LBLWMNL} is complete by Lemma~\ref{Lemma:LBCoins} and the same steps as in the proof of Theorem~\ref{Theorem:LBMNL}, the pairwise lower bound for the MNL model.

\subsection{Proof of Lemma~\ref{Lemma:TP-ATC}}

\RestateTPATC*

\begin{proof}
	Without loss of generality, we assume $i\succ j$. Since the for loop runs at most $b^{max}=\lceil\frac{1}{2}\epsilon^{-2}\log(2\delta^{-1})\rceil$ iterations and each iteration performs one comparison, the subroutine returns after at most $O(\epsilon^{-2}\log\delta^{-1})$ comparisons. Since the return condition of items $i$ and $j$ are symmetric and $i\succ j$, by this symmetry, ATC returns $j$ with probability no more than $1/2$. 
	
	Now we consider the case where $p_{i,j}\geq 1/2+\epsilon$, and it remains to prove that ATC returns $i$ with probability at least $1-\delta$. Define $b^t := \sqrt{\frac{1}{2t}\log\frac{\pi^2 t^2}{3\delta}}$. Let $\mathcal{E}^{out}_t$ be the event that $\hat{p}^t_i\leq p_{i,j}-b^t$, and define $\mathcal{E}^{out}:=\bigcup_{t=1}^\infty\mathcal{E}^{out}_t$. We have
	\begin{align}
	\mathbb{P}\left\{\mathcal{E}^{out}\right\} \stackrel{(a)}{\leq} \sum_{t=1}^\infty\mathbb{P}\left\{\mathcal{E}^{out}_t\right\}\stackrel{(b)}{\leq} \sum_{t=1}^\infty\left[\exp\left(-2t\left(b^t\right)^2\right)\right]\leq \sum_{t=1}^\infty{\frac{3\delta}{\pi^2 t^2}}\leq \frac{\delta}{2},\label{Eq:Eout}
	\end{align}
	where (a) is due to the union bound and (b) is due to the Chernoff-Hoeffding Inequality \cite{Hoeffding1963}. 
	
	Assume that $\mathcal{E}^{out}$ does not happen, and we have that for all $t$, $\hat{p}^t_i > 1/2+\epsilon-b^t\geq 1/2-b^t$. Thus, ATC does not return $j$ during the for loop with probability at least $1-\delta/2$.
	
	After the for loop, by Chernoff-Hoeffding Inequality and $b^{max} = \lceil\frac{1}{2\epsilon^2}\log\frac{2}{\delta}\rceil$, we have
	\begin{align}
	\mathbb{P}\left\{\hat{p}^{b^{max}}_i\leq 1/2\right\} \leq \exp\left\{-2b^{max}(p_{i,j}-1/2)^2\right\} \leq \exp\left\{-2b^{max}\epsilon^2\right\} \leq {\delta}/{2},
	\end{align}
	which implies that the last line of ATC returns $i$ with probability at least $1-{\delta}/{2}$. This completes the proof of Lemma~\ref{Lemma:TP-ATC}.
\end{proof} 

\subsection{Proof of Lemma~\ref{Lemma:TP-ATI}}

\RestateTPATI*

\begin{proof}
	\textbf{(I)} We first prove the sample complexity. We observe that for a constant $\delta_0\in(0,1/2)$, a call of ATC$(i,j,\epsilon,\delta_0)$ returns after at most $O({\epsilon^{-2}})$ comparisons by Lemma~\ref{Lemma:TP-ATC}. In ATI, for each iteration, there are at most three calls of ATC and all the calls are with constant confidence. Also, ATI returns after at most $t^{max}=O(h+\log{\delta^{-1}})$ iterations, where $h=1+\lceil\log_2(1+|S|)\rceil=O(\log{|S|})$. Thus, the number of comparisons is at most $3t^{max}\cdot O(\epsilon^{-2}) = O(\epsilon^{-2}\log(|S|/\delta))$. This completes the proof sample complexity.
	
	\textbf{(II)} We prove that ATI does not insert $i$ into a wrong place with probability at least $1/2$. A round (or iteration) is said to be \textit{correct} if during this round, all calls of ATC return the more preferred item, and is said to be \textit{incorrect} otherwise. A leaf node $u$ is said to be \textit{correct} if $i\in (u.\mbox{left},u.\mbox{right})$, i.e., $i$ belongs to the corresponding interval of$u$. A leaf node $u$ is said to be \textit{incorrect} if it is not correct.
	
	For any round $t$, we define an event $\mathcal{E}^t_{il}$ such that
	\begin{align}
	\mathcal{E}^t_{il} := \{X=\mbox{some incorrect leaf node at the beginning of round } t\mbox{ and }c_X \geq 1\}.
	\end{align}
	
	We assume that for some round $t$, $\mathcal{E}^t_{il}$ happens, which implies that $i\succ u\mbox{.right}$ or $u\mbox{.left}\succ i$, i.e., $i$ does not belong to the interval of $u$. By Lemma~\ref{Lemma:TP-ATC} the property of ATC, it holds that 
	\begin{gather}
	\mathbb{P}\left\{\mbox{ATC}(i,u\mbox{.right},\epsilon,\delta) = i\mid i\succ u\mbox{.right}\right\} \geq 1/2,\nonumber \\
	\mathbb{P}\left\{\mbox{ATC}(i,u\mbox{.left},\epsilon,\delta) = u\mbox{.left} \mid u\mbox{.left} \succ i\right\} \geq 1/2.\nonumber 
	\end{gather}
	which implies that for any round $t$,
	\begin{align}\label{Eq:wrongInterval}
	\mathbb{P}\left\{\mbox{round }t\mbox{ is correct}\mid \mathcal{E}^t_{il}\right\} \geq 1/2.
	\end{align}
	
	For any $t$, define 
	\begin{gather}
	R_1^t := \left|\left\{\tau\leq t: \mbox{round } \tau\mbox{ is correct, and } \mathcal{E}^\tau_{il}\mbox{ happens}\right\}\right| \nonumber , \\
	W_1^t := \left|\left\{\tau\leq t: \mbox{round } \tau\mbox{ is incorrect, and } \mathcal{E}^\tau_{il}\mbox{ happens}\right\}\right| \nonumber .
	\end{gather}
	
	For any incorrect leaf node $u$ and any round $t$, the counter $c_u$ is increased by one during this round if and only if $\mathcal{E}^\tau_{il}$ happens and this round is incorrect. Also, for any round $t$, given $\mathcal{E}^\tau_{il}$, if this round is correct, then the counter $c_u$ is decreased by one. Thus, for any incorrect leaf node $u$, at the end of any round $t$, the value of $c_u$ is at most 
	\begin{align}
	c_u(t) \leq 1 + W_1^t - R_1^t.\nonumber
	\end{align}
	
	After the for loop, ATI incorrectly inserts $i$ if and only if some incorrect leaf node $u$ is counted for $\frac{5}{16}t^{max}+1$ times, i.e., $c_u\geq \frac{5}{16}t^{max}+1$, which implies $W_1^{t^{max}}-R_1^{t^{max}}\geq \frac{5}{16}t^{max}$. Thus, by the fact that $W_1^{t^{max}}+R_1^{t^{max}} \leq t^{max}$, and Eq.~\ref{Eq:wrongInterval}, we obtain
	\begin{align}
	&\mathbb{P}\left\{W_1^{t^{max}}-R_1^{t^{max}}\geq \frac{5}{16}t^{max}\right\} \nonumber \\
	= & \mathbb{P}\left\{W_1^{t^{max}}\geq \frac{1}{2}\left(R_1^{t^{max}}+W_1^{t^{max}}+\frac{5}{16}t^{max}\right)\right\}\nonumber \\
	\stackrel{(a)}{\leq} & \sup_{K\leq t^{max}}\mathbb{P}\left\{\frac{W_1^{t^{max}}}{W_1^{t^{max}}+R_1^{t^{max}}}\geq \frac{1}{2}+\frac{5}{32}\cdot\frac{t^{max}}{W_1^{t^{max}}+R_1^{t^{max}}} \Big| W_1^{t^{max}}+R_1^{t^{max}} = K\right\}\nonumber \\
	\stackrel{(b)}{\leq} & \sup_{K \leq t^{max}}\exp\left\{-2K\left(\frac{5t^{max}}{32K}\right)^2\right\} \nonumber \\
	= & 
	\exp\left\{-2t^{max}\left(\frac{5t^{max}}{32t^{max}}\right)^2\right\} \leq {\delta}/{2},\label{Eq:AfterFor}
	\end{align}
	where (a) is due to $R_1^{t^{max}}+W_1^{t^{max}}\leq t^{max}$, and (b) follows from Chernoff-Hoeffding Inequality. This proves that with probability at least $1-\delta/2$, $i$ is not inserted into a wrong place by the second last line. 
	
	Then, during the for loop, for any $t\leq t^{max}$, by (\ref{Eq:wrongInterval}) and Chernoff-Hoeffding Inequality, we have that at the end of the $t$-th round, the probability that $X$ equals to an incorrect leaf node and $c_X>\frac{1}{2}t +\sqrt{\frac{t}{2}\log\frac{\pi^2t^2}{3\delta}} + 1$ is at most 
	\begin{align}
	\mathbb{P}\left\{ W_1^t - R_1^t \geq \frac{1}{2}t + \sqrt{\frac{t}{2}\log\frac{\pi^2t^2}{3\delta}} \right\} \leq & \mathbb{P}\left\{W_1^t \geq \frac{1}{2}t + \sqrt{\frac{t}{2}\log\frac{\pi^2t^2}{3\delta}} \right\} \nonumber \\
	\leq & \exp\left\{-\frac{2}{t}\left(\frac{1}{2}t+\sqrt{\frac{t}{2}\log\frac{\pi^2t^2}{3\delta}}-\frac{t}{2}\right)^2\right\}\leq \frac{3\delta}{\pi^2t^2}. \nonumber
	\end{align}
	
	Since 
	\begin{align}
	\sum_{t=1}^\infty\frac{3\delta}{\pi^2t^2}\leq {\delta}/{2}, \nonumber
	\end{align}
	during the for loop, with probability at least $1-\delta/2$, ATI does not insert $i$ into a wrong place. This, along with Eq.~(\ref{Eq:AfterFor}), proves that with probability at least $1-\delta$, ATI does not insert $i$ into a wrong place. This completes the proof of the first part of Lemma~\ref{Lemma:TP-ATI}.
	
	\textbf{(III)} In this part, we assume $\epsilon \leq \Delta_{i}$ and we prove the second part of Lemma~\ref{Lemma:TP-ATI}. For any round $t$,  by Lemma~\ref{Lemma:TP-ATC} and the choice of input parameters of the calls of ATC, this round is correct with probability at least $q$. Here, we define $R$ as the number of correct rounds before termination, and let $W$ be the number of incorrect rounds before termination. 
	
	Let $u_0$ be the correct node. Define the distance between two nodes $u$ and $v$ as $d(u,v) :=$ the length of the shortest path from $u$ to the $v$, i.e., the number of edges between $u$ and $v$. During each correct round, either $d(X,u_0)$ is decreased by one or the value of $c_{u_0}$ is increased by one, i.e., $c_{u_0}-d(X,u_0)$ is increased by one. During each incorrect round, either $d(X,u_0)$ is increased by one or the value of $c_{u_0}$ is decreased by one, i.e., $c_{u_0}-d(X,u_0)$ is decreased by one. Since the distance between the start node (i.e., the root node) and $u_0$ is at most $h-1$, we always have 
	\begin{align}
	R - W \leq h - 1 + ( c_{u_0} - d(X,u_0)). \nonumber
	\end{align} After the for loop, if $c_{u_0}\geq  \frac{5}{16}t^{max}+1$, then ATI correctly inserts $i$. Thus, if $R-W \geq h + \frac{5}{16}t^{max}$, then ATI correctly inserts $i$. 
	
	Assume that ATI does not return during the for loop, and then, we have $R+W=t^{max}$. For all $t$, round $t$ is correct with probability at least $q$ by Lemma~\ref{Lemma:TP-ATC} and the choices of input parameters of the calls of ATC, hence, by $t^{max}\geq \max\{4h, \frac{512}{25}\log\frac{2}{\delta}\}$ and $q = 15/16$, we have
	\begin{align}
	\mathbb{P}\left\{R-W < h+\frac{5}{16}t^{max}\right\} \stackrel{(a)}{\leq}& \mathbb{P}\left\{R-W < \left(\frac{1}{4}+\frac{5}{16}\right)t^{max}\right\} \nonumber \\
	= & \mathbb{P}\left\{R-(t^{max}-R) < \left(\frac{1}{4}+\frac{5}{16}\right)t^{max}\right\} \nonumber\\
	= & \mathbb{P}\left\{R< \frac{25}{32}t^{max}\right\}\nonumber \\
	\stackrel{(b)}{\leq} & \exp\left\{-2t^{max}\left(q-\frac{25}{32}\right)^2\right\} \leq \frac{\delta}{2}, \nonumber
	\end{align}
	where (a) is due to $t^{max}\geq 4h$ and (b) follows from Chernoff-Hoeffding Inequality. 
	
	In conclusion, when $\epsilon \leq \Delta_{i}$, if ATI does not return during the for loop, then it will, with probability at least $1-\delta/2$, insert $i$ into a correct position by the second last line (after the for loop). Also, by part (II), with probability at least $1-\delta/2$, ATI does not insert $i$ into a wrong position during the for loop. Thus, when $\epsilon \leq \Delta_i$, ATI correctly inserts the input item $i$ with probability at least $1-\delta$. This proves the second part of Lemma~\ref{Lemma:TP-ATI}, and along with parts (I) and (II), completes the proof.
\end{proof}

\subsection{Proof of Lemma~\ref{Lemma:TP-IAI}}

\RestateTPIAI*

\begin{proof}
	Define events
	\begin{align}
	\mathcal{E}_1^t := & \{\epsilon_t > \Delta_i\mbox{ and } \mbox{IAI does not insert }i \mbox{ into a wrong position}\}, \nonumber \\
	\mathcal{E}_2^t := & \{\epsilon_t \leq \Delta_i\mbox{ and } \mbox{IAI correctly inserts } i\}, \nonumber 
	\end{align}
	and the bad event
	\begin{align}
	\mathcal{E}^{bad} := \bigcup_{t=1}^\infty(\mathcal{E}_1^t\cup\mathcal{E}_2^t)^\complement. \nonumber 
	\end{align}
	By the union bound and Lemma~\ref{Lemma:TP-ATI}, we have
	\begin{align}
	\mathbb{P}\{\mathcal{E}^{bad}\} \leq \sum_{t=1}^\infty\mathbb{P}\left\{\left(\mathcal{E}_1^t\cup\mathcal{E}_2^t\right)^\complement\right\} \leq \sum_{t=1}^\infty\delta_t = \sum_{t=1}^\infty{\frac{6\delta}{\pi^2 t^2}} = \delta. \nonumber
	\end{align}
	
	In this proof, we assume that $\mathcal{E}^{bad}$ does not happen. 
	
	\textbf{Correctness.} We first prove the correctness. By the definition of $\mathcal{E}^{bad}$, for all $t$ such that $\epsilon_t > \Delta_i$, IAI does not insert $i$ into a wrong position, and when $\epsilon_t \leq \Delta_i$, IAI correctly inserts $i$. Since $\lim_{t\rightarrow\infty}\epsilon_t=0$, there is a $t^*$ such that $\epsilon_{t^*}\leq \Delta_i$. Thus, when $\mathcal{E}^{bad}$ does not happen, IAI correctly inserts $i$. Since $\mathcal{E}^{bad}$ happens with probability at most $\delta$, the correctness follows. 
	
	\textbf{Sample complexity}. Second, we prove the sample complexity. Let $\tau$ be the integer such that $\epsilon_\tau\leq \Delta_i<\epsilon_{\tau-1}$. By the definition of $\mathcal{E}^{bad}$, when $\mathcal{E}^{bad}$ does not happen, IAI correctly inserts $i$ and returns before the end of the $\tau$-th round. 
	
	By $\epsilon_{\tau-1} = 2^{-\tau}$ and $\epsilon_{\tau-1} > \Delta_i$, we have $\tau<\log_2{\Delta_i^{-1}}$. For $1\leq t\leq \tau$, by Lemma~\ref{Lemma:TP-ATI}, the $t$-th round of IAI conducts at most $O({\epsilon_t^{-2}}\log({|S|}\cdot{\delta_t^{-1}}))$ comparisons. Thus, given $\mathcal{E}^{bad}$ does not happen, the number of comparisons conducted by IAI is at most 
	\begin{align}
	O\Big(\sum_{t=1}^\tau{{\epsilon_t^{-2}}\log\left(|S|/\delta_t\right)}\Big) \stackrel{(a)}{=} & O\Big(\sum_{t=1}^\tau{\left(2^{t+1}\right)^2}\log\left(\pi^2t^2|S|/(6\delta)\right)\Big) \nonumber \\
	= &  O\Big(\sum_{t=1}^\tau{4^{t}}\cdot \log\left(|S|\tau /\delta\right)\Big) \nonumber \\
	= &  O({4^{\tau}}\cdot \log\left(|S|\tau/\delta\right)) \nonumber \\
	\stackrel{(b)}{=} & O\left(4^{\log_2{(1/\Delta_i)}}\cdot \log(|S|\cdot\log(1/\Delta_i) / \delta)\right) \nonumber \\
	= & O\left({\Delta_i^{-2}}\left(\log\log{\Delta_i^{-1}}+\log\left({|S|}/{\delta}\right)\right)\right), \nonumber
	\end{align}
	where (a) follows from $\epsilon_t = 2^{t+1}$ and $\delta_t = \frac{6\delta}{\pi^2 t^2}$, and (b) is due to $\tau<\log_2(1/\Delta_i)$. 
	This proves the sample complexity.
	
	The proof of Lemma~\ref{Lemma:TP-IAI} is complete.
\end{proof}

\subsection{Proof of Theorem~\ref{Theorem:TP-IIR}}

\RestateTPIIR*

\begin{proof}
	At iteration $t$ for each $t\in\{2,3,...,n\}$, by Lemma~\ref{Lemma:TP-IAI}, with probability at least $1-\delta/(n-1)$, the call of IAI correctly inserts $S[t]$ into $Ans$, and uses at most $O(\Delta_{S[t]}^{-2}(\log\log\Delta_{S[t]}^{-1}+\log(n/\delta)))$ comparisons. The desired sample complexity follows by summing up the upper bounds for $t\in\{2,3,...,n\}$. For correctness, if all calls of IAI are correct (which happens with probability at least $1-\delta$ by the union bound), then IIR correctly returns the true ranking. This completes the proof.
\end{proof}

\subsection{Proof of Lemma~\ref{Lemma:LBTwo}}\label{Sec:LB2}

\RestateLBTwo*


\begin{proof}
	We will invoke the results for pure exploration multi-armed bandit (PEMAB) problems, and we refer to \cite{FKLowerBound2004} as a reference for details about PEMAB. Assume that there is an arm $a$, and whenever it is pulled for the $t$-th time, it gives an i.i.d. reward $Y^t$. Further assume that for $t\in \mathbb{Z}^+$, $Y^t$ is a Gaussian random variable with mean $\eta$ and variance $1$. We assume that $|\eta|\leq 1/2$ and $\eta\neq 0$. Let $\mathcal{B}$ be a $\delta$-correct algorithm that has no knowledge of $\eta$ and is able to tell whether $\eta>0$ with probability $1-\delta$ for any non-zero $\eta$-value. Let $T_\mathcal{B}(\eta)$ be the number of pulls $\mathcal{B}$ uses before termination under the given $\eta$-value. The authors of \cite{LIL2014,RatioTest1964} have shown that 
	\begin{align}\label{Eq:GaussianLB}
	\limsup_{|\eta|\rightarrow 0}\frac{\mathbb{E}[T_\mathcal{B}(\eta)]}{\eta^{-2}\log\log\eta^{-2}} \geq 2 - 4\delta.
	\end{align}
	
	In this proof, we reduce the problem of distinguishing whether $\eta>0$ to the problem of ranking two items. For any $t\in\mathbb{Z}^+$, if $0 < \eta < 1/2$, we have
	\begin{align}
	\mathbb{P}\{Y^t \geq 0\} = \frac{1}{\sqrt{2\pi}}\int_{-\eta}^{\infty}e^{-\frac{x^2}{2}}\mathrm{d}x \geq \frac{1}{2} + \frac{\eta}{\sqrt{2\pi}}\cdot e^{-\frac{\eta^2}{2}} \geq \frac{1}{2} + \frac{\eta}{\sqrt{2\pi}} \cdot e^{-1/8},\nonumber
	\end{align}
	and if $-1/2 < \eta <0$, we have
	\begin{align}
	\mathbb{P}\{Y^t < 0\} = \frac{1}{\sqrt{2\pi}} \int_{-\infty}^{-\eta}e^{-\frac{x^2}{2}}\mathrm{d}x \geq \frac{1}{2} + \frac{|\eta|}{\sqrt{2\pi}} \cdot e^{-\frac{\eta^2}{2}} \geq \frac{1}{2} + \frac{|\eta|}{\sqrt{2\pi}}\cdot e^{-1/8}. \nonumber
	\end{align}
	
	For each $t$, we let $Z^t = 2\cdot\mathds{1}\{Y^t \geq 0\}-1$. When $\eta>0$, $Z^t$ is with probability at least $\frac{1}{2} + \frac{\eta}{\sqrt{2\pi}}\cdot e^{-1/8}$ to be $1$, and when $\eta<0$, it is with probability at least $\frac{1}{2} + \frac{|\eta|}{\sqrt{2\pi}}\cdot e^{-1/8}$ to be $-1$. Thus, we can view that $(Z^t,t\in\mathbb{Z}^+)$ are generated by tossing a coin with $\mathbb{P}\{Y^t\geq 0\}$ head probability, and we have $|\mathbb{P}\{Y^t\geq 0\}-1/2| \geq \frac{|\eta|}{\sqrt{2\pi}}\cdot e^{-1/8}$. Assume $\mathcal{A}_2$ can ranking two items $i$ and $j$ with probability $1-\delta$ by $T_{\mathcal{A}_2}(\Delta_{i,j})$ expected number of comparisons, then it can find whether $\eta > 0$ by at most $T_{\mathcal{A}_2}(\frac{\eta}{\sqrt{2\pi}}\cdot e^{-1/8})$ expected number of pulls of the arm $a$. Thus, by (\ref{Eq:GaussianLB}), we have
	\begin{align}\label{Eq:LogLogTwo-o2}
		\limsup_{\Delta_{i,j}\rightarrow 0}\frac{\mathbb{E}[T_{\mathcal{A}_2}(\Delta_{i,j})]}{\Delta_{i,j}^{-2}\log\log\Delta_{i,j}^{-2}} \geq \frac{e^{-1/4}(2 - 4\delta)}{2\pi}.
	\end{align}
	
	Then, by the previous work \cite{FKLowerBound2004}, we obtain another lower bound on ranking two items, i.e., $\Omega(\Delta_{i,j}^{-2}\log\delta^{-1})$. Summing up this lower bound and (\ref{Eq:LogLogTwo-o2}), we obtain the desired lower bound. This completes the proof.
\end{proof}

\subsection{Proof of Lemma~\ref{Lemma:P12Reductions}}\label{Sec:PfReductions}

\RestateReductions*

\begin{proof}
	We first prove the reduction from $\mathcal{P}_1$ to exact ranking. Given an instance of $\mathcal{P}_1$, we simply use an exact ranking algorithm to find its true ranking. By the assumptions made in the construction of $\mathcal{P}_1$, the comparison probabilities under the correct hypothesis $\mathcal{H}_{\vec{\pi}^0}$ is exactly the same as the corresponding ranking instance. Thus, by the found true ranking, we can find the true hypothesis with no more comparisons. This completes the first part of Lemma~\ref{Lemma:P12Reductions}.
	
	Secondly, we prove the reduction from $\mathcal{P}_2$ to $\mathcal{P}_1$. Assume that $n$ is odd and $n=2m+1$, and when $n$ is even, we can prove the same results by similar steps. Let $\mathcal{B}$ be an arbitrary $\delta$-correct algorithm for $\mathcal{P}_1$. Let the ${n \choose 2}$ coins satisfying the restrictions of $\mathcal{P}_2$ be given. We construct $n$ virtual items indexed by $r_1,r_2,...,r_n$, where $(r_1,r_2,...,r_n)$ is a permutation of $[n]$. With these $n$ items, we construct $2^m$ hypotheses as defined in the construction of Problem~$\mathcal{P}_1$ (i.e., $\mathcal{H}_{\vec{\pi}},\vec{\pi}\in\{0,1\}^m$). Then, we send these $n$ items and the hypotheses as the input to algorithm $\mathcal{B}$. Whenever $\mathcal{B}$ wants a comparison over the pair $(r_i,r_j)$, we toss the coin $C_{i,j}$. If the toss gives a head, we tell $\mathcal{B}$ that the winner of the comparison is $r_i$, and if the toss gives a tail, we tell $\mathcal{B}$ that the winner is $r_j$. Since the values of the head probabilities $\mu_{i,j}$ are lawful for the comparison probabilities of Problem~$\mathcal{P}_1$, $\mathcal{B}$ does not notice any abnormal and works as usual. Finally, $\mathcal{B}$ terminates and returns a $\vec{\pi}\in\Pi$. 
	
	For any $k\in[m]$, if $\pi(k)=1$, then we return $\mu_{2k-1,2k}>1/2$, and otherwise, we return $\mu_{2k-1,2k}<1/2$. If $\mathcal{B}$ returns a correct hypothesis for these $n$ virtual items, one can determine whether $\mu_{2k-1,2k}>1/2$ for any $k\in[n]$ by no more tosses of coins. Moreover, for any $(i,j)$, the head probability $\mu_{i,j}$ of problem $\mathcal{P}_1$ equals to $p_{r_i,r_j}$, the comparison probability of problem $\mathcal{P}_2$. This completes the second part of Lemma~\ref{Lemma:P12Reductions}. The proof is complete.
\end{proof}

\subsection{Proof of Lemma~\ref{Lemma:LBP2}}\label{Sec:PfLBP2}

\RestateLBPtwo*

\begin{proof}
	In $\mathcal{P}_2$, the tosses of the coins are independent across time and coins. Also, whether one coin has head probability larger than $1/2$ is independent of other coins. Thus, $\mathcal{P}_2$ is simply a problem such that given $m$ coins with head probability not equal to $1/2$, to identify all the coins with head probabilities larger than $1/2$, and the total error probability is no more than $\delta$.
	
	Given a coin with non-$1/2$ head probability, deciding whether the head probability is larger than $1/2$ is equivalent to the problem of ranking two items, as a toss of a coin with head probability $\eta$ can be viewed as a comparison of items $i$ and $j$ with $p_{i,j} = \eta$. Thus, for coin $C_{2k-1,2k}$, to find whether $\mu_{2k-1.2k}>1/2$ with at most $\delta_k$ error probability, the expected number of tosses is at least 
	\begin{align}
	\tilde{\Omega}\big(\Delta_{q_{2k-1},q_{2k}}^{-2}\log\log\Delta_{q_{2k-1},q_{2k}}^{-1}\big) + \Omega\big(\Delta_{q_{2k-1},q_{2k}}^{-2}\log(1/\delta_k)\big).\nonumber
	\end{align}
	Here, we note that $|\mu_{{2k-1},{2k}}-1/2| = \Delta_{q_{2k-1},q_{2k}}$ for any $k\in[m]$ due to the constructions of $\mathcal{P}_1$ and $\mathcal{P}_2$.
	
	Let $\delta_k$ be the error probability incurred by determining whether $\mu_{2k-1,2k}>1/2$. To solve $\mathcal{P}_2$ with confidence $1-\delta$, it is necessary that 
	\begin{align}
	\prod_{k\in[m]}(1-\delta_k)\geq 1-\delta.\nonumber
	\end{align} 
	
	We also have that for $\delta\in(0,1/2)$,
	\begin{align}
	\sum_{k\in[m]}\delta_k \leq  & -\sum_{k\in[m]}\log(1-\delta_k) 
	=  -\log\prod_{k\in[m]}(1-\delta_k) \nonumber\\
	\leq & -\log(1-\delta) 
	=  \log(1+ \delta/(1-\delta)) \nonumber \\ \leq & \delta/(1-\delta) \leq 2\delta. \nonumber
	\end{align}
	
	Thus, the lower bound of $\mathcal{P}_2$ is at least 
	\begin{align}
	    & \tilde{\Omega}\Big( \sum_{k\in[m]}\Delta_{q_{2k-1},q_{2k}}^{-2}\log\log\Delta_{q_{2k-1},q_{2k}}^{-1}\Big) \nonumber \\
	    & +  \Omega\Big(\min\{\sum_{k\in[m]}{\Delta_{q_{2k-1},q_{2k}}^{-2}\log\delta_k^{-1}}: \sum_{k\in[m]}{\delta_k} \leq 2\delta\}\Big).\nonumber 
	\end{align}
	
	This completes the proof of Lemma~\ref{Lemma:LBP2}. 
\end{proof}

\end{appendices}

\end{document}